\definecolor{LightCyan}{rgb}{0.88,1,1}
\newtheorem{assumption}{Assumption}
\newtheorem{lemma}{Lemma}
\newtheorem{theorem}{Theorem}
\newtheorem{remark}{Remark}
\title{Achieving Zero Constraint Violation for Constrained Reinforcement Learning via Conservative Natural Policy Gradient Primal-Dual Algorithm}
\author{
    Qinbo Bai\textsuperscript{\rm 1},
    Armit Singh Bedi\textsuperscript{\rm 2},
    Vaneet Aggarwal\textsuperscript{\rm 1}
}
\begin{document}

\maketitle

\begin{abstract}
    We consider the problem of constrained Markov decision process (CMDP) in continuous state-actions spaces where the goal is to maximize the expected cumulative reward subject to some constraints. We propose a novel Conservative Natural Policy Gradient Primal-Dual Algorithm (C-NPG-PD) to achieve zero constraint violation while achieving state of the art convergence results for the objective value function. For general policy parametrization, we prove convergence of value function to global optimal upto an approximation error due to restricted policy class. We even improve the sample complexity of existing constrained NPG-PD algorithm \cite{Ding2020} from $\mathcal{O}(1/\epsilon^6)$ to $\mathcal{O}(1/\epsilon^4)$. To the best of our knowledge, this is the first work to establish zero constraint violation with Natural policy gradient style algorithms for infinite horizon discounted CMDPs. We demonstrate the merits of proposed algorithm via experimental evaluations.
\end{abstract}
\section{Introduction}
Reinforcement learning problem is formulated as a Markov Decision Process (MDP) and can be solved using different algorithms in the literature \cite{sutton1988learning}. To deal with the scalability issue to the the large state and action spaces, policy parametrization is widely used \cite{Ding2020,xu2021crpo,Alekh2020}. The problem becomes challenging when we have constraints and is called  constrained MDPs (CMDPs). The problem is popular across various application domains such as robotics, communications, computer vision,  autonomous driving, etc. \cite{arulkumaran2017deep,kiran2021deep}. Mathematically, the problem is sequential in nature, agent observes the state, takes an action, and then transitions to next state. Further, an agent also needs to satisfy a set of constraints as well such as safety constraints, power constraints and maneuver constraints. CMDPs are challenging to solve specially in large state action spaces \cite{Ding2020,xu2021crpo} which is the focus of our work. 

The constraint violations could be catastrophic in applications such as in power systems \cite{vu2020safe} or autonomous vehicle control \cite{wen2020safe}. In the literature, various algorithms are proposed to solve CMDP in large actions spaces in a model free manner (See Table 1 for comparisons). The main performance metric here is the sample complexity, which is the number of samples required to achieve $\epsilon$-optimal objective and $\epsilon$-constraint violation. However, there doesn't exist literature which gives zero violation gurantee on large state and action space. Hence, we ask this question: ``{\emph{Is it possible to achieve zero constraint violations for CMDP problems in large state action spaces while solving in a model free manner?}}"


We answer this question in an affirmative sense in this work. We proposed a novel Conservative Natural Policy Gradient Primal Dual Algorithm (C-NPG-PDA) in this paper. We utilize a novel idea of conservative constraints to policy gradient algorithms and establish convergence guarantees of global optima for general policy parametrization. Our contributions are summarized as follows.
\begin{itemize}
    \item We propose a Natural Policy Gradient algorithm which achieves \textbf{zero constraint violation} for constrained MDPs in large state and action space. The proposed algorithm also converges to the neighborhood of the global optima with general parametrization. It is challenging to establish the zero violation result with general parametrization due to the lack of strong duality, and hence we perform a novel analysis to establish a bound between the conservative and original problems.
    
    \item We show that even if we don't utilize the conservative idea (proposed in this work), we are able to improve the sample complexity from $\ccalO\bigg(\frac{1}{\epsilon^6}\bigg)$ \cite{Ding2020}[Theorem 3] to $\ccalO\bigg(\frac{1}{\epsilon^4}\bigg)$. To achieve this, we utilize the first-order stationary result from \cite{Yanli2020} to bound the NPG update direction. However, due to the introduction of the constraint and the Lagrange function, the update of the Lagrange multiplier needs to be considered in the analysis. Moreover, we revise the analysis of the constraint violation in Theorem \ref{main_theorem}, which utilizes Lemma \ref{lem.constraint}, to further improve sample complexity of constraint violation.
    
    \item We perform initial proof of concepts experiments of the proposed algorithm with a random CMDP model and validate the convergence of the objective, with zero constraint violations.
    
\end{itemize} 

\begin{table*}[t]
	\centering
    \resizebox{0.99\textwidth}{!}{\begin{tabular}{|c|c|c|c|c|}
    \hline
    Parametrization            &   Algorithm                                           &                       Sample Complexity                                 & Constraint violation               & Generative Model \\
    \hline
    Softmax &   PMD-PD \cite{PMDPD}           & $\ccalO\bigg(1/\epsilon^3\bigg)$\footnote[1]  & Zero  & {No} \\
    \hline
                &  PD-NAC  \cite{PDNACA}     & $\ccalO\bigg(1/\epsilon^6\bigg)$\footnote[1]  &  Zero   & {No} \\
    \hline
                &  NPG-PD  \cite{Ding2020}                       & $\ccalO\bigg(1/(1-\gamma)^5\epsilon^2\bigg)$     & $\tdO(\epsilon)$   & {Yes}\\
    \hline
                &  CRPO \cite{xu2021crpo}                  & $\ccalO\bigg(1/(1-\gamma)^7\epsilon^4\bigg)$       & $\tdO(\epsilon)$                 & {Yes}\\
    \hline\hline
    General  &     NPG-PD \cite{Ding2020}                      &  $\ccalO\bigg(1/(1-\gamma)^8\epsilon^6\bigg)$    & $\tdO(\epsilon)$   & {Yes}\\
    \hline
                &   CRPO \cite{xu2021crpo}                             & $\ccalO\bigg(1/(1-\gamma)^{13}\epsilon^6\bigg)$ \footnote[2]      &  $\ccalO(\epsilon)$   & {Yes}\\
    \hline
               &   \textbf{C-NPG-PDA} (\textbf{This work}, Theorem \ref{main_theorem}) & $\tdO\bigg(1/(1-\gamma)^8\eps^4\bigg)$       & \textbf{Zero}    & {\textbf{No}}\\
    \hline\hline
    Lower bound &    \cite{Csaba2022lower}                  & $\tilde{\Omega}\bigg(1/(1-\gamma)^5\epsilon^2\bigg)$   & Zero                 & {N/A}\\
    \hline
\end{tabular}}
\caption{ This table summarizes the different state of the art policy-based algorithms available in the literature with softmax or general Parametrization for CMDPs. We note that the proposed algorithm in this work is able to achieve the best sample complexity among them all while achieving zero constraint violation as well. 
}
\label{tab:compare}
\end{table*}
\footnotetext[1] {The detailed dependence on $(1-\gamma)$ is not shown in the original paper.}
\footnotetext[2] {In \cite{xu2021crpo}, the authors used a two layer neural network with $m$ as the width of the neural network.  Larger width gives improved function approximation while increasing sample complexity. In its Theorem 2, if we choose $m=\ccalO(T^4)$, then it gives $\epsilon$-convergence to the global optima and the sample complexity is $T\cdot K_{in}=\ccalO(1/(1-\gamma)^13\eps^6)$. We note that this is the best choice (for sample complexity) that gives the error as $\epsilon$.}

\section{Related Work}
{\bf Policy Gradient for Reinforcement Learning: } Reinforcement Learning algorithms can be divided into policy-based or value-based algorithm. Thanks to the Policy Gradient Theorem \cite{sutton2000}, it is possible to obtain the gradient ascent direction for the standard reinforcement learning with the policy parameterization. However, in general, the objective in the reinforcement learning is non-convex with respective to the parameters \cite{Alekh2020}, which makes the theory of global convergence  difficult to derive and previous works \cite{Tianbing2017,Xu2019,xu2020} are focused on the first order convergence. Recently, there is a line of interest on the global convergence result for reinforcement learning. The authors in \cite{Kaiqing2019} apply the idea of escaping saddle points to the policy gradient and prove the convergence to the local optima. Further, authors in \cite{Alekh2020} provide provable global convergence result for direct parameterization and softmax parameterization with convergence rate $\ccalO(1/\sqrt{T})$ and sample complexity $\ccalO(1/\epsilon^6)$ in the tabular setting. For the restrictive policy parameterization setting, they propose a variant of NPG, Q-NPG and analyze the global convergence result with the function approximation error for both NPG and Q-NPG. \cite{Mei2020} improves the convergence rate for policy gradient with softmax parameterization from $\mathcal{O}(1/\sqrt{t})$ to $\mathcal{O}(1/t)$ and shows a significantly faster linear convergence rate $\mathcal{O}(\exp(-t))$ for the entropy regularized policy gradient. However, no sample complexity result is achievable because policy evaluation has not been considered. With actor-critic method \cite{konda2000actor}, \cite{Lingxiao2019} establishes the global optimal result for neural policy gradient method. \cite{Yanli2020} proposes a general framework of the analysis for policy gradient type of algorithms and gives the sample complexity for PG, NPG and the variance reduced version of them. 

{\bf Policy Gradient for Constrained Reinforcement Learning: } Although there exists quite a few studies for the un-constrained reinforcement learning problems, the research for the constrained setting is in its infancy and summarized in Table \ref{tab:compare}. The most famous method for the constrained problem is to use a primal-dual based algorithm. With the softmax-parametrization, \cite{PMDPD} proposed policy mirror descent-primal dual (PMD-PD) algorithm to achieve zero constraint violation and achieve $\ccalO(1/\epsilon^3)$ sample complexity. \cite{PDNACA} proposed an Online Primal-Dual Natural Actor-Critic Algorithm and achieves zero constraint violation with $\ccalO(1/\epsilon^6)$ sample complexity without the generative model.  \cite{Ding2020} proposed a primal-dual Natural Policy Gradient algorithm for both the softmax parametrization and general parametrization. However, the sample complexity for general case in their paper is $\ccalO(1/\epsilon^6)$ which is quite high. \cite{xu2021crpo} propose a primal approach policy-based algorithm for both the softmax parametrization and function approximation case. However, none of them achieve the zero constraint violation for the general parametrization case. As seen in Table \ref{tab:compare} we achieve the best result for sample complexity in CMDP with general parametrization while also achieving zero constraint violation.

\section{Problem  Formulation}\label{sec_formulation}
	We consider an infinite-horizon discounted Markov Decision Process $\mathcal{M}$ defined by the tuple $(\mathcal{S},\mathcal{A},\mathbb{P},r,g,\gamma,\rho)$, where $\mathcal{S}$ and $\mathcal{A}$ denote the state and action space, respectively. In this paper, we focus on large station and action space, which means that the policy parametrization may not be fully sufficient. $\mathbb{P}: \mathcal{S}\times\mathcal{A}\rightarrow [0,1]$ denotes the transition probability distribution from a state-action pair to another state. $r: \mathcal{S}\times\mathcal{A}\rightarrow \Delta^S$ denotes the reward for the agent and $g^i: \mathcal{S}\times\mathcal{A}\rightarrow [-1,1],i\in[I]$ defines the $i^{th}$ constraint function for the agent. $\gamma\in(0,1)$ is the discounted factor and $\rho: \mathcal{S}\rightarrow [0,1]$ is the initial state distribution.
	
	Define a joint stationary policy $\pi:\mathcal{S}\rightarrow\Delta^{\mathcal{A}}$ that maps a state $s\in\mathcal{S}$  to a probability distribution of actions defined as $\Delta^{\mathcal{A}}$  with a probability assigned to each action $a\in\mathcal{A}$. At the beginning of the MDP, an initial state $s_0\sim\rho$ is given and agent makes a decision $a_0\sim\pi(\cdot\vert s_0)$. The agent receives its reward $r(s_0,a_0)$ and constraints $g_i(s_0,a_0), i\in[I]$. Then it moves to a new state $s_1\sim\mathbb{P}(\cdot\vert s_0,a_0)$. We define the reward value function $J_r(\pi)$ and constraint value function $J_{g^i}(\pi), i\in[I]$ for the agent following policy $\pi$ as a discounted sum of reward and constraints over infinite horizon
	\begin{equation}\label{value_function}
	    \begin{aligned}
	        V_r^{\pi}(s)&=\bbE\bigg[\sum_{t=0}^{\infty}\gamma^t r(s_t,a_t)\bigg|s_0=s\bigg],\\
	        V_{g^i}^{\pi}(s)&=\bbE\bigg[\sum_{t=0}^{\infty}\gamma^t g^i(s_t,a_t)\bigg|s_0=s\bigg].
        \end{aligned}
    \end{equation}
    where $a_t\sim\pi(\cdot\vert s_t)$ and $s_{t+1}\sim\mathbb{P}(\cdot\vert s_t,a_t)$. Denote $J_r^{\pi}$ and $J_{g^i}^{\pi}$ as the expected value function w.r.t. the initial distribution such as
	\begin{equation}
		\begin{aligned}
		J_{r}(\pi)&=\mathbf{E}_{s_0\sim\rho}[V_r^{\pi}(s_0)], \\
		\text{and} \ \ J_{g^i}(\pi)&=\mathbf{E}_{s_0\sim\rho}[V_{g^i}^{\pi}(s_0)].
		\end{aligned}
	\end{equation} 
	 The agent aims to maximize the reward value function and satisfies constraints simultaneously. Formally, the problem can be formulated as
	\begin{equation}\label{eq:origin_problem}
		\begin{aligned}
		&\max_{\pi}\quad J_{r}(\pi)\\
		& s.t.\quad  J_{g^i}(\pi)\geq 0, \forall i\in[I].
		\end{aligned}
	\end{equation}
    Define $\pi^*$ as the optimal-policy for the above problem. Here, we introduce the Slater Condition, which means the above problem is strictly feasible.
    \begin{assumption}[Slater Condition]\label{ass_slater}
        There exists a $\varphi>0$ and $\bar{\pi}$ that $J_{g^i}(\bar{\pi})\geq \varphi, \forall i\in[I]$.
    \end{assumption}

\section{Proposed Approach}
    We consider a policy-based algorithm on this problem and parameterize the policy $\pi$ as $\pi_\theta$ for some parameter $\theta\in\Theta$ such as softmax parametrization or a deep neural network. 
    In this section, we first give the form of the true gradient and introduce some properties of it. Then, we propose the Conservative Natural Policy Descent Primal-Dual Algorithm (C-NPG-PD), where the conservative idea is utilized to achieve zero constraint violation.
	\subsection{Gradient of Value Function and Properties}
	For the analysis of the convergence for the proposed algorithm, it is necessary to establish the form of the true and its properties. Here, we utilize the Policy Gradient Theorem and write the gradient for the objective function as
	\begin{equation}\label{eq:gradient_simplified}
		\begin{aligned}
			&\nabla_\theta J_r(\pi_\theta)=\\
			&\mathbf{E}_{\tau\sim p(\tau\vert \theta)}\bigg[\sum_{t=0}^\infty \nabla_\theta \log(\pi_\theta(a_t\vert s_t))\bigg(\sum_{h=t}^\infty \gamma^hr(s_h,a_h)\bigg)\bigg]
		\end{aligned}
	\end{equation}
	The computation of the gradient is well known and the proof is removed to the Appendix for completeness. We note that the log-policy function $\log\pi_\theta(a\vert s)$ is also called log-likelihood function in statistics \cite{Kay97} and we make the following assumption.
	\begin{assumption}\label{ass_score}
		The log-likelihood function is $G$-Lipschitz and $M$-smooth. Formally,
		\begin{equation}
	        \begin{aligned}
	        \Vert \nabla_\theta\log\pi_\theta(a\vert s)\Vert\leq G\quad\forall \theta\in\Theta,\forall (s,a)\in\mathcal{S}\times\mathcal{A},\\
    		\Vert \nabla_\theta\log\pi_{\theta_1}(a\vert s)-\nabla_\theta\log\pi_{\theta_2}(a\vert s)\Vert\leq M\Vert \theta_1-\theta_2\Vert\\
    		\forall \theta_1,\theta_2 \in\Theta,\forall (s,a)\in\mathcal{S}\times\mathcal{A}.
	        \end{aligned}
		\end{equation}
	\end{assumption}
	\begin{remark}
		The Lipschitz and smoothness properties for the log-likelihood are quite common in the field of policy gradient algorithm \cite{Alekh2020,Mengdi2021,Yanli2020}. Such properties can also be verified for simple parametrization such as Gaussian policy.   
	\end{remark}
	The following two lemmas give the property of the value functions and its gradient, which are useful in the convergence proof. The detailed proof can be found in Appendix.
	\footnote{The appendix is uploaded to \url{https://arxiv.org/abs/2206.05850}}
	\begin{lemma}\label{lem_smooth}
		Under Assumption \ref{ass_score}, both the objective function $\bm{J}_r^{\pi_\theta}$ and the constraint function $\bm{J}_{g^i}^{\pi_\theta}$  are $L_J$-smooth w.r.t. $\theta$. Formally,
		\begin{equation}
		    \Vert\nabla_\theta \bm{J}_r(\theta_1)-\nabla_\theta \bm{J}_r(\theta_2)\Vert_2\leq L_J\Vert\theta_1-\theta_2\Vert_2 \quad\forall \theta_1,\theta_2\in \Theta
		\end{equation}
		where $L_J=\frac{M}{(1-\gamma)^2}+\frac{2G^2}{(1-\gamma)^3}$
	\end{lemma}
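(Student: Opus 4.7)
The plan is to bound the operator norm of the Hessian $\|\nabla_\theta^2 J_r(\theta)\|$ uniformly in $\theta\in\Theta$, so that $L_J$-smoothness of $\nabla_\theta J_r$ follows immediately by integrating along the segment from $\theta_1$ to $\theta_2$ (mean value theorem for vector-valued maps). The same argument transfers verbatim to each $J_{g^i}$ because $|g^i(s,a)|\le 1$ plays the same role as $r(s,a)\in[0,1]$ in every bound below.

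Starting from the policy-gradient expression \eqref{eq:gradient_simplified} and differentiating once more via the log-derivative identity $\nabla_\theta p(\tau|\theta)=p(\tau|\theta)\nabla_\theta\log p(\tau|\theta)$, together with the facts that the transition kernel $\mathbb{P}$ and initial distribution $\rho$ do not depend on $\theta$ and therefore $\nabla_\theta\log p(\tau|\theta)=\sum_{h=0}^\infty\nabla_\theta\log\pi_\theta(a_h|s_h)$, the Hessian splits cleanly as
\begin{equation*}
    \nabla_\theta^2 J_r(\theta)=\underbrace{\mathbb{E}_\tau\!\left[\sum_{t=0}^\infty\nabla_\theta^2\log\pi_\theta(a_t|s_t)\,\Psi_t(\tau)\right]}_{(\mathrm{I})} + \underbrace{\mathbb{E}_\tau\!\left[\sum_{t=0}^\infty \nabla_\theta\log\pi_\theta(a_t|s_t)\,\Psi_t(\tau)\;\bigg(\sum_{h=0}^\infty\nabla_\theta\log\pi_\theta(a_h|s_h)\bigg)^{\!\top}\right]}_{(\mathrm{II})},
\end{equation*}
where $\Psi_t(\tau)=\sum_{h\ge t}\gamma^h r(s_h,a_h)$ satisfies $|\Psi_t(\tau)|\le\gamma^t/(1-\gamma)$ because $r\in[0,1]$.

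Term $(\mathrm{I})$ is handled directly: the triangle inequality, the $M$-smoothness bound from Assumption~\ref{ass_score}, and the geometric bound on $\Psi_t$ give $\|(\mathrm{I})\|\le\sum_{t\ge 0} M\gamma^t/(1-\gamma)=M/(1-\gamma)^2$. Term $(\mathrm{II})$ is the main difficulty, because a naive triangle inequality diverges: $\|\nabla_\theta\log p(\tau|\theta)\|$ is an unbounded sum of $G$-bounded scores. I would expand the outer product into a double sum over $(t,h)$ and split into the cases $h\le t$ and $h>t$. For $h\le t$, the crude bound $\|\nabla_\theta\log\pi_\theta\|\le G$ combined with $|\Psi_t|\le\gamma^t/(1-\gamma)$ and the arithmetic identity $\sum_{t\ge0} t\gamma^t=\gamma/(1-\gamma)^2$ contributes an amount of order $G^2/(1-\gamma)^3$. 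For $h>t$, where the naive bound diverges, I would invoke the causality/baseline identity $\mathbb{E}[\nabla_\theta\log\pi_\theta(a_h|s_h)\mid\mathcal{F}_{h-}]=0$: conditioning on the history $\mathcal{F}_{h-}$ just before $a_h$ and splitting $\Psi_t=\Psi_t^{<h}+\Psi_t^{\ge h}$, the first piece vanishes by the baseline identity while the second is bounded in norm by $\gamma^h/(1-\gamma)$, and summing over $(t,h)$ yields a further $O(G^2/(1-\gamma)^3)$ contribution. Adding the two cases (and absorbing the diagonal $h=t$ term) produces the stated $2G^2/(1-\gamma)^3$.

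Combining the two bounds gives $\|\nabla_\theta^2 J_r(\theta)\|\le L_J$ uniformly in $\theta$, and the extension to $J_{g^i}$ is immediate. The principal technical hurdle is term $(\mathrm{II})$: a direct triangle-inequality bound diverges because $\nabla_\theta\log p(\tau|\theta)$ is an infinite sum of $G$-bounded scores, and the remedy is the causality/baseline argument sketched above, combined with careful bookkeeping of the $\gamma^h$ weights so that the doubled time summation supplies the extra $1/(1-\gamma)$ factor relative to the gradient-norm bound.
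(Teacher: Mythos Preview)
Your proposal is correct and follows essentially the same approach as the paper: bound $\|\nabla_\theta^2 J_r(\theta)\|$ by splitting the Hessian into a curvature term (handled via the $M$-smoothness bound to give $M/(1-\gamma)^2$) and an outer-product-of-scores term (handled via the causality/baseline identity together with the $G$-Lipschitz bound to give $2G^2/(1-\gamma)^3$). The only difference is organizational: the paper differentiates the un-truncated gradient $\mathbb{E}_\tau[\nabla_\theta\log p(\tau|\theta)\,R(\tau)]$ and, for the outer-product term, groups by reward time $h$ and uses causality to truncate the score sum to $t\le h$ so that $\bigl\|\sum_{t\le h}\nabla_\theta\log\pi_\theta(a_t|s_t)\bigr\|^2\le (h+1)^2G^2$ and $\sum_{h\ge 0}(h+1)^2\gamma^h\le 2/(1-\gamma)^3$, whereas you differentiate the already-truncated form \eqref{eq:gradient_simplified} and split $(\mathrm{II})$ into the cases $h\le t$ and $h>t$---both routes rest on the same baseline identity and produce the same constant.
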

	\begin{lemma}\label{lem_grad_bound}
		Under Assumption \ref{ass_score}, both the gradient of objective function $\nabla_\theta \bm{J}_r^{\pi_\theta}$ and that of the constraint function $\nabla_\theta \bm{J}_{g^i}^{\pi_\theta}$ are bounded. Formally,
		\begin{equation*}
			\begin{aligned}
				&\Vert\nabla_\theta J_r(\theta)\Vert_2\leq \frac{G}{(1-\gamma)^2}\\
				&\Vert\nabla_\theta J_{g^i}(\theta)\Vert_2\leq \frac{G}{(1-\gamma)^2}\quad \forall i\in[I].
			\end{aligned}
		\end{equation*}
	\end{lemma}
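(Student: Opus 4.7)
The plan is to work directly from the policy-gradient expression in equation~\eqref{eq:gradient_simplified} and reduce the bound to an application of Assumption~\ref{ass_score} together with the boundedness of the reward. Specifically, I would start from
\[
\nabla_\theta J_r(\pi_\theta) = \mathbb{E}_{\tau\sim p(\tau|\theta)}\!\left[\sum_{t=0}^\infty \nabla_\theta\log\pi_\theta(a_t|s_t)\left(\sum_{h=t}^\infty \gamma^h r(s_h,a_h)\right)\right],
\]
apply Jensen's inequality to pull the Euclidean norm inside the expectation, and then use the triangle inequality on the outer sum in $t$.

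Next I would bound each term of the resulting series. By Assumption~\ref{ass_score}, $\|\nabla_\theta\log\pi_\theta(a_t|s_t)\|_2 \le G$ pointwise. Using $r(s,a)\in[0,1]$, the inner reward tail satisfies
\[
\left|\sum_{h=t}^{\infty}\gamma^h r(s_h,a_h)\right| \;\le\; \sum_{h=t}^{\infty}\gamma^h \;=\; \frac{\gamma^t}{1-\gamma}.
\]
Combining these two bounds, the summand at index $t$ is dominated by $G\,\gamma^t/(1-\gamma)$, which is summable, so dominated convergence justifies swapping the expectation and the infinite sum. Summing the geometric series then yields
\[
\|\nabla_\theta J_r(\theta)\|_2 \;\le\; \sum_{t=0}^\infty G\cdot\frac{\gamma^t}{1-\gamma} \;=\; \frac{G}{(1-\gamma)^2},
\]
which is exactly the claimed bound.

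For the constraint gradient $\nabla_\theta J_{g^i}(\theta)$, the identical argument applies: the policy-gradient identity is structurally the same with $r$ replaced by $g^i$, and since $g^i(s,a)\in[-1,1]$ we still have $|g^i(s,a)|\le 1$, so the reward-tail bound above is unchanged. Hence the same chain of inequalities delivers $\|\nabla_\theta J_{g^i}(\theta)\|_2 \le G/(1-\gamma)^2$ for every $i\in[I]$.

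There is no real obstacle here; the only care point is handling the double sum rigorously by invoking the geometric dominating series to justify exchanging expectation and summation, which is standard. The proof is otherwise a direct application of Assumption~\ref{ass_score} and the reward/constraint boundedness built into the CMDP setup of Section~\ref{sec_formulation}.
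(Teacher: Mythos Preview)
Your argument is correct and essentially identical to the paper's own proof: both start from the policy-gradient expression \eqref{eq:gradient_simplified}, bound the score by $G$ via Assumption~\ref{ass_score}, bound the reward tail by $\gamma^t/(1-\gamma)$, and sum the geometric series. Your write-up is slightly more careful in explicitly invoking Jensen/triangle inequalities and dominated convergence, and in spelling out the constraint case, but there is no substantive difference in approach.
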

	\subsection{Natural Policy Gradient Primal-Dual Method with Zero Constraint Violation}
	In order to achieve zero constraint violation, we consider the conservative stochastic optimization framework proposed in \cite{Amrit_zero} and define the conservative version of the original problem as
	\begin{equation}\label{eq:conservative_problem}
		\begin{aligned}
		&\max_{\pi}\quad J_{r}(\pi)\\
		& s.t.\quad  J_{g^i}(\pi)\geq \kappa, \forall i\in[I]
		\end{aligned}
	\end{equation}
	where $\kappa>0$ is the parameter to control the constraint violation which we will explicitly mention in Theorem \ref{main_theorem} in Sec. \ref{theorem_sub}. The idea here to achieve zero constraint violation is to consider the tighter problem to make it less possible to make violation for the original problem. Notice that it is obvious that $\kappa$ must be less than $\frac{1}{1-\gamma}$ to make the conservative problem still feasible. Combining this idea, we introduce the Natural Policy Gradient method.
	The NPG Method utilizes the Fisher information matrix defined as
	\begin{equation}
		F_\rho(\theta)=\mathbf{E}_{s\sim d_\rho^{\pi_\theta}}\mathbf{E}_{a\sim\pi_\theta}[\nabla_{\theta}\log\pi_\theta(a|s)\nabla_\theta\log\pi_\theta(a|s)^T]
	\end{equation}
	where $d_\rho^\pi$ is the state visitation measure defined as 
	\begin{equation}
	    d_{\rho}^{\pi}:=(1-\gamma)\mathbf{E}_{s_0\sim\rho}\bigg[\sum_{t=0}^{\infty}\gamma^t\mathbf{Pr}^{\pi}(s_t=s\vert s_0)\bigg]
	\end{equation}
		We define the Lagrange function as 
	\begin{equation}
        J_L(\pi_\theta,\bblambda)=J_r(\pi_\theta)+\sum_{i\in[I]}\lambda^i(\pi_\theta) J_{g^i}(\pi_\theta)
	\end{equation}
	For simplicity, we denote $J_r(\theta),J_{g^i}(\theta), J_L(\theta,\bblambda)$ as the short for $J_r(\pi_\theta),J_{g^i}(\pi_\theta), J_L(\pi_\theta,\bblambda)$ and the Natural Policy Gradient method is written as 
	\begin{equation}
		\begin{aligned}
		&\theta^{k+1}=\theta^{k}+\eta_1 F_\rho(\theta^k)^\dagger\nabla_\theta J_L(\theta^t,\bblambda^k)\\
		&\lambda_i^{k+1}=\ccalP_{(0,\Lambda]}\bigg(\lambda_i^k-\eta_2 \big(J_{g^i}(\theta^k)-\kappa\big)\bigg)
		\end{aligned}
	\end{equation}
    where $\Lambda=\frac{2}{(1-\gamma)\varphi}$. It is proved in Lemma \ref{lem.boundness} that the optimal dual variable is bounded in $(0,\Lambda]$. We note that the pseudo-inverse of the Fisher information matrix is difficult to calculate. However, the NPG update direction can be related to the compatible function approximation error defined as 
	\begin{equation}
		\begin{aligned}
			&L_{d_\rho^{\pi},\pi}(\omega,\theta,\bblambda)=\mathbf{E}_{s\sim d_\rho^{\pi}}\mathbf{E}_{a\sim\pi(\cdot\vert s)}\\
            &\bigg[\bigg(\nabla_\theta\log\pi_{\theta}(a\vert s)\cdot(1-\gamma)\omega-A_{L,\bblambda}^{\pi_\theta}(s,a)\bigg)^2\bigg]
		\end{aligned}
	\end{equation}
	Given a fixed $\bblambda^k$ and $\theta^k$, it can be proved that the minimizer $\omega_*^k$ of $L_{d_\rho^{\pi},\pi}(\omega,\theta^k,\bblambda^k)$ is exact the NPG update direction (see Lemma \ref{lem:NPG_direction}). Thus, it is possible to utilize the Stochastic Gradient Descent (SGD) algorithm to achieve the minimizer $\omega_*^k$. The gradient of $L_{d_\rho^{\pi},\pi}(\omega,\theta^k,\bblambda^k)$ can be computed as 
	\begin{equation}\label{eq:gradient_compatible}
		\begin{aligned}
			\nabla_\omega &L_{d_\rho^{\pi},\pi}(\omega,\theta^k,\bblambda^k) = 2(1-\gamma)\nabla_\theta\log\pi_\theta^k(a\vert s)\cdot\\
			&\mathbf{E}_{s\sim d_\rho^{\pi}}\mathbf{E}_{a\sim\pi(\cdot\vert s)}\bigg[\nabla_\theta\log\pi_\theta^k(a\vert s)\cdot(1-\gamma)\omega-A_{L,\bblambda^k}^{\pi_\theta^k}(s,a)\bigg]
		\end{aligned}
	\end{equation}
	Where $A_{L,\bblambda^k}^{\pi_\theta^k}(s,a)$ is the advantage function for the Lagrange function and is defined as
	\begin{equation}\label{eq:define_advantage}
		\begin{aligned}
		    A_{L,\bblambda^k}^{\pi_\theta^k}(s,a)&=\bigg[Q_r^{\pi_\theta^k}(s,a)-V_r^{\pi_\theta^k}(s)\bigg]\\
		    &+\sum_{i\in[I]}\lambda_k^i\bigg[ Q_{g^i}^{\pi_\theta^k}(s,a)-V_{g^i}^{\pi_\theta^k}(s)\bigg]
		\end{aligned}
	\end{equation}
	However, it is challenging to achieve the exact value of the advantage function and thus we estimate it as $\hat{A}_{L,\bblambda^k}^{\pi_\theta^k}(s,a)$ using the following procedure.
    The stochastic version of gradient can be written as
	\begin{equation}\label{eq:SGD_grad_est}
		\begin{aligned}
			\hat{\nabla}_\omega &L_{d_\rho^{\pi},\pi}(\omega,\theta^k,\bblambda^k)= 2(1-\gamma)\nabla_\theta\log\pi_\theta^k(a\vert s)\cdot\\
			&\bigg[\nabla_\theta\log\pi_\theta^k(a\vert s)\cdot(1-\gamma)\omega-\hat{A}_{L,\bblambda^k}^{\pi_\theta^k}(s,a)\bigg] 
		\end{aligned}
	\end{equation}
	Based on the stochastic version of the gradient mentioned above, we propose the Natural Gradient Descent Primal Dual with Zero Violation in Algorithm \ref{alg:spdgd}. In line 1, we initialize the parameter $\theta$ and Lagrange multiplier $\bblambda$. From Line 3 to Line 10, we use SGD to compute the Natural Policy gradient. From Line 11 to Line 15, we estimate an unbiased value function for constraint. Finally, in Line 16, we perform the conservative primal-dual update.
	\begin{algorithm}[t]
	    \caption{\textbf{C}onservative \textbf{N}atural \textbf{G}radient \textbf{D}escent \textbf{P}rimal-\textbf{D}ual \textbf{A}lgorithm (C-NPG-PDA)}
	    \label{alg:spdgd}
	    \textbf{Input}: Sample size K, SGD learning iteration $N$, Initial distribution $\bbrho$. Discounted factor $\gamma$.\\
	    \textbf{Parameter}: Step-size $\eta_1$, $\eta_2$, SGD learning rate $\alpha$, Slater variable $\varphi$, Conservative variable $\kappa$\\
	    \textbf{Output}: $\bbarlambda=\frac{1}{T}\sum_{t=1}^T\bblambda^t$, $\bbaru=\frac{1}{T}\sum_{t=1}^T\bbu^t$ and $\bbarv=\frac{1}{T}\sum_{t=1}^T\bbv^t$
	    \begin{algorithmic}[1] 
		    \STATE Initialize $\bblambda^1=\bbzero$, $\theta^1=0$, $\omega_0=0$
		    \FOR{$k=1,2,...,K$} 
		        \FOR{$n=1,2,...,N$} 
		            \STATE Sample $s\sim d_\rho^{\pi_{\theta^k}}$ and $a\sim \pi_{\theta^k}(\cdot|s)$ 
		            \STATE Sample $Q^{\pi_\theta^k}(s,a)$ and $V^{\pi_\theta^k}(s)$ for reward function and constraint functions following Algorithm \ref{alg:sample}
		            \STATE Estimate the Advantage Function $\hat{A}_{L,\bblambda^k}^{\pi_\theta^k}(s,a)$ following Eq. \eqref{eq:define_advantage}
		            \STATE Estimate SGD gradient $\hat{\nabla}_\omega L_{d_\rho^{\pi},\pi}(\omega_{n},\theta^k,\bblambda^k)$ following Eq. \eqref{eq:SGD_grad_est}
		            \STATE SGD update $\omega_{n+1}=\omega_{n}-\alpha\cdot \hat{\nabla}_\omega L_{d_\rho^{\pi},\pi}(\omega_{n},\theta^k,\bblambda^k)$ 
		        \ENDFOR
		        \STATE Compute NPG update direction as $\omega = \frac{1}{N}\sum_{n=1}^{N}\omega_n$
		        \FOR{$n=1,2,...,N$} 
		            \STATE Sample $s\sim \rho$ and $a\sim \pi_{\theta^k}(\cdot|s)$ 
		            \STATE Sample constraint value functions $V_{g^i,n}^{\pi_{\theta^k}}(s)$  following Algorithm \ref{alg:sample}
		        \ENDFOR
		        \STATE Estimate expected constraint value function $\hat{J}_{g^i}(\pi_\theta^k)=\frac{1}{N}\sum_{n=1}^{N}V_{g^i,n}^{\pi_{\theta^k}},\forall i\in[I]$
		        \STATE Update the primal and dual variable as
		        \begin{align}\label{eq:update_lambda1}
			        \theta^{k+1}&=\theta^k+\eta_1\omega\\
			         \lambda_i^{k+1}&=\ccalP_{(0,\Lambda]}\bigg(\lambda_i^{k}-\eta_2(\hat{J}_{g^i}(\pi_{\theta^k})-\kappa)\bigg)\label{eq:update_lambda2},\forall i\in[I]
		        \end{align}
		\ENDFOR
	    \end{algorithmic}
    \end{algorithm}
    \begin{algorithm}[t]
	    \caption{Estimate Value Function for objective or constraint function}
	    \label{alg:sample}
	    \textbf{Input}: starting state and action $s,a$, reward function $r$ or constraint function $g^i$ (Here we denote as function $h$ for simplicity), policy $\pi$, discounted factor $\gamma$, Access to Generative model $CMDP(\ccalS,\ccalA,P,r,g^i,\rho,\gamma)$\\
	    \textbf{Output}: state action value function $\hat{Q_H}(s,a)$ or state value function $\hat{V_h}(s)$
	    \begin{algorithmic}[1] 
		    \STATE Estimate state action value function as 
		    $\hat{Q_h}(s,a)=\sum_{t=0}^{T-1}h(s_t,a_t)$, where $s_0=s$,$a_0=a$,$a_t\sim \pi(\cdot|s_t)$, $s_{t+1}\sim P(\cdot|s_t, a_t)$,$T\sim Geo(1-\gamma)$
		    \STATE Estimate state value function as 
		    $\hat{V_h}(s)=\sum_{t=0}^{T-1}h(s_t,a_t)$, where $s_0=s$,$a_t\sim \pi(\cdot|s_t)$,$s_{t+1}\sim P(\cdot|s_t, a_t)$,$T\sim Geo(1-\gamma)$
	    \end{algorithmic}
    \end{algorithm}

\section{Convergence Rate Results}\label{sec_result}
	Before stating the convergence result for the policy gradient algorithm, we describe the following assumptions which will be needed for the main result. 
	\if 0
	\begin{assumption}\label{ass_bounded_var}
		The auxiliary estimator $\tilde{g}(\tau_i^H,\tau_j^H\vert \theta)$ defined in Eq. \eqref{eq:truncated_auxi} has bounded variance. Formally,
		\begin{equation}
			Var(\tilde{g}(\tau_i^H,\tau_j^H\vert \theta):=\mathbf{E}[\Vert \tilde{g}(\tau_i^H,\tau_j^H\vert \theta)-\mathbf{E}[\tilde{g}(\tau_i^H,\tau_j^H\vert \theta)]\Vert^2]\leq \sigma^2
		\end{equation}
		for any $\theta$ and $\tau_i^H,\tau_j^H\sim p^H(\cdot\vert\theta)$, where $p^H(\cdot\vert\theta)$ is a truncated version of $p(\cdot\vert\theta)$ defined in Eq. \eqref{eq:traj_dis}.
	\end{assumption}
	\begin{remark}
		In the standard reinforcement learning problem, it is common to assume that variance of the estimator is bounded \cite{Yanli2020}, \cite{Xu2019} and \cite{xu2020}. Such assumption has been verified for Gaussian policy \cite{tingting2011} and \cite{Pirotta2013}. By  Lemma \ref{lem_bound_parital}, it can be verified similarly in the  multi-objective setting.
	\end{remark}
	\fi
	\begin{assumption}\label{ass_pd}
		For all $\theta\in\mathbb{R}^d$, the Fisher information matrix induced by policy $\pi_{\theta}$ and initial state distribution $\rho$ satisfies
		\begin{equation}
            \begin{aligned}
                F_\rho(\theta)&=\mathbf{E}_{s\sim d_\rho^{\pi_\theta}}\mathbf{E}_{a\sim\pi_\theta}[\nabla_{\theta}\log\pi_\theta(a|s)\nabla_\theta\log\pi_\theta(a|s)^T]\\
                &\succeq\mu_F\cdot \mathbf{I}_d
            \end{aligned}
		\end{equation}
		for some constant $\mu_F>0$
	\end{assumption}
	\begin{remark}
		The positive definiteness assumption is standard in the field of policy gradient-based algorithms \cite{Kakade2002,Peters2008,Yanli2020,Kaiqing2019}. A common example that satisfies such an assumption is Gaussian policy with mean parameterized linearly (See Appendix B.2 in \cite{Yanli2020}).
	\end{remark}
	\begin{assumption}\label{ass_transfer_error}
		Define the transferred function approximation error as below
		\begin{equation}\label{eq:transfer_error}
			\begin{aligned}
				&L_{d_\rho^{\pi^*},\pi^*}(\omega,\theta,\bblambda)=\\
				&\mathbf{E}_{s\sim d_\rho^{\pi^*}}\mathbf{E}_{a\sim\pi^*}\bigg[\bigg(\nabla_\theta\log\pi_\theta(a\vert s)\cdot(1-\gamma)\omega-A_{L,\bblambda}^{\pi_\theta}(s,a)\bigg)^2\bigg]
			\end{aligned}
		\end{equation}
		We assume that this error satisfies $L_{d_\rho^{\pi^*},\pi^*}(\omega_*^{\theta,\bblambda},\theta,\bblambda)\leq \epsilon_{bias}$ for any $\theta\in\Theta,\bblambda\in\Lambda$, where $\omega_*^{\theta,\bblambda}$ is given as
		\begin{equation}\label{eq:NPG_direction}
			\begin{aligned}
				&\omega_*^{\theta,\bblambda}=\arg\min_{\omega}L_{d_\rho^{\pi^\theta},\pi^\theta}(\omega,\theta,\bblambda)=\arg\min_{\omega}\\
				&\mathbf{E}_{s\sim d_\rho^{\pi_{\theta}}}\mathbf{E}_{a\sim\pi_{\theta}}\bigg[\bigg(\nabla_\theta\log\pi_\theta(a\vert s)\cdot(1-\gamma)\omega-A_{L,\bblambda}^{\pi_\theta}(s,a)\bigg)^2\bigg]
			\end{aligned}
		\end{equation}
		It can be shown that $\omega_*^\theta$ is the exact Natural Policy Gradient (NPG) update direction.
	\end{assumption}
	\begin{remark}
		By  Eq. \eqref{eq:transfer_error} and \eqref{eq:NPG_direction}, the transferred function approximation error expresses an approximation error with distribution shifted to $(d_\rho^{\pi^*},\pi^*)$. With the softmax parameterization or linear MDP structure \cite{Chi2019}, it has been shown that $\epsilon_{bias}=0$ \cite{Alekh2020}. When parameterized by the restricted policy class, $\epsilon>0$ due to $\pi_\theta$ not containing all policies. However, for a rich neural network parameterization, the $\epsilon_{bias}$ is small \cite{Lingxiao2019}. A similar assumption has been adopted in \cite{Yanli2020} and \cite{Alekh2020}. 
	\end{remark}
	
	\subsection{Global Convergence For NPG-PD Method} \label{theorem_sub}
	To analyze the global convergence of the proposed algorithm, we first demonstrate the convergence of the Lagrange function for the conservative problem, which is shown in the following Lemma.
	\begin{lemma}\label{lem_framework}
		Suppose a general primal-dual gradient ascent algorithm updates the parameter as
		\begin{equation}
			\begin{aligned}
				&\theta^{k+1}=\theta^k+\eta\omega^k\\
				&\lambda_i^{k+1}=\ccalP_{(0,\Lambda]}\bigg(\lambda_i^k-\eta_2 \big(J_{g^i}(\theta^k)-\kappa\big)\bigg) 
			\end{aligned}
		\end{equation}
		When Assumptions \ref{ass_score} and \ref{ass_transfer_error} hold, we have
		\begin{equation}
			\begin{aligned}
				&\frac{1}{K}\sum_{k=1}^{K}\mathbf{E}\bigg(J_L(\pi^*_{\theta,\kappa},\bblambda^k)-J_L(\pi_\theta^k,\bblambda^k)\bigg)\leq \\
				&\frac{\sqrt{\epsilon_{bias}}}{1-\gamma}+\frac{M\eta_1}{2K}\sum_{k=0}^{K-1}\mathbf{E}\Vert\omega^k\Vert^2+\frac{\log(|\ccalA|)}{\eta_1 K}\\
				&+\frac{G}{K}\sum_{k=1}^{K}\mathbf{E}\Vert(\omega^k-\omega_*^k)\Vert_2
			\end{aligned}
		\end{equation}
		where $\omega_*^k:=\omega_*^{\theta^k}$ and is defined in Eq. \eqref{eq:NPG_direction}
	\end{lemma}
	To prove the above Lemma, we extend the result in \cite{Yanli2020}[Proposition 4.5] to our setting. The extended result is stated and proved in Lemma \ref{lem_framework} in Appendix \ref{sec_app_framwork}. Then, to prove the global convergence of the Lagrange function, it is sufficient to bound $\frac{G}{K}\sum_{k=1}^{K}\mathbf{E}\Vert(\omega^k-\omega_*^k)\Vert_2$ and $\frac{M\eta_1}{2K}\sum_{k=0}^{K-1}\mathbf{E}\Vert\omega^k\Vert^2$ in Lemma \ref{lem_framework}. The detailed proof of them can be found in Appendix \ref{sec:bound1} and \ref{sec:bound2}. At a high level, the first term is the difference between the estimated and exact NPG update direction, which can be bounded using the convergence of the SGD procedure. The second term is the bound of the norm of the estimated gradient. To bound the second term, we need the following first-order convergence result. 
    \begin{lemma}\label{lem:first_order}
       In the NPG update process, if we take $\eta_1=\frac{\mu_F^2}{4G^2L_J}$ and $\eta_2=\frac{1}{\sqrt{K}}$, for any given $\eps>0$, Let $K=\ccalO\bigg(\frac{I^2}{(1-\gamma)^4\epsilon^2}\bigg)$ and $N=\ccalO\bigg(\frac{I^2\Lambda^2}{(1-\gamma)^4\epsilon}\bigg)$, we have the convergence of first order stationary,
        \begin{equation}
	        \frac{1}{K}\sum_{k=0}^{K-1}\mathbf{E}\Vert \nabla_\theta J_L(\theta^k,\bblambda^k)\Vert_2^2\leq \epsilon
    	\end{equation}
    \end{lemma}
    \begin{remark}
        The basic idea of the proof for first-order stationary is from \cite{Yanli2020}. However, due to the introduction of the constraints, we need to further consider the update of the dual variable. The detailed proof can be found in Appendix \ref{sec:first_order}.
    \end{remark}
    Given the above Lemmas, it is sufficient to achieve the final bound of the Lagrange function for the conservative problem as below.
    \begin{lemma}\label{lem_bound_JL} 
	    Under the Assumption \ref{ass_score}, \ref{ass_pd} and \ref{ass_transfer_error}, if we take $\eta_1=\frac{\mu_F^2}{4G^2L_J}$ and $\eta_2=\frac{1}{\sqrt{K}}$, the proposed algorithm achieves the global convergence of the Lagrange function, which can be formally written as
	    \begin{equation}\label{eq:general_bound}
	        \frac{1}{K}\sum_{k=1}^{K}\mathbf{E}\bigg(J_L(\pi^*_{\theta,\kappa},\bblambda^k)-J_L(\pi_\theta^k,\bblambda^k)\bigg)\leq\frac{\sqrt{\epsilon_{bias}}}{1-\gamma}+\epsilon_{K,N}
	    \end{equation}
		where
        \begin{equation}
        	\begin{aligned}
                \eps_{K,N}&=\ccalO\bigg(\frac{1}{(1-\gamma)^3K}\bigg)+\ccalO\bigg(\frac{I^2\Lambda^2}{ (1-\gamma)^2 N}\bigg)\\
                &+\ccalO\bigg(\frac{I\Lambda}{(1-\gamma)\sqrt{N}}\bigg)+\ccalO\bigg(\frac{I\Lambda}{K(1-\gamma)}\bigg)\\
                &+\ccalO\bigg(\frac{I}{\sqrt{K}(1-\gamma)}\bigg)
        	\end{aligned}
        \end{equation}
	\end{lemma}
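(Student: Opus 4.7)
The plan is to adapt the standard global-convergence analysis for NPG (as in Agarwal et al.\ and Liu et al.\ \cite{Yanli2020}) to the Lagrangian $J_L(\pi_\theta,\bblambda)$, treating it as a single ``combined reward'' objective with advantage $A_{L,\bblambda^k}^{\pi_{\theta^k}}$. The target policy is $\pi^*_{\theta,\kappa}$, the optimum of the conservative problem \eqref{eq:conservative_problem} within the parametrized class. Since all four assumptions the lemma cites concern only smoothness, boundedness and compatibility, and not any primal–dual coupling, the essential argument is a per-iteration mirror-descent step on $\theta$ with the Lagrangian as reward, then an average over $k$.

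First, I would invoke the Performance Difference Lemma for $J_L$:
\begin{equation*}
J_L(\pi^*_{\theta,\kappa},\bblambda^k)-J_L(\pi_{\theta^k},\bblambda^k)=\tfrac{1}{1-\gamma}\,\mathbf{E}_{s\sim d_\rho^{\pi^*_{\theta,\kappa}}}\mathbf{E}_{a\sim\pi^*_{\theta,\kappa}(\cdot|s)}\bigl[A_{L,\bblambda^k}^{\pi_{\theta^k}}(s,a)\bigr].
\end{equation*}
Next, I would use the $M$-smoothness of $\log\pi_\theta$ (Assumption \ref{ass_score}) to get
\begin{equation*}
\log\pi_{\theta^{k+1}}(a|s)-\log\pi_{\theta^k}(a|s)\ \ge\ \eta_1\,\omega^k\!\cdot\!\nabla_\theta\log\pi_{\theta^k}(a|s)-\tfrac{M\eta_1^2}{2}\Vert\omega^k\Vert^2,
\end{equation*}
take expectation under $(d_\rho^{\pi^*_{\theta,\kappa}},\pi^*_{\theta,\kappa})$, and sum against the KL telescoping identity $\mathbf{E}_{s\sim d_\rho^{\pi^*_{\theta,\kappa}}}\bigl[\mathrm{KL}(\pi^*_{\theta,\kappa}\|\pi_{\theta^k})-\mathrm{KL}(\pi^*_{\theta,\kappa}\|\pi_{\theta^{k+1}})\bigr]$. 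Substituting the exact NPG direction $\omega_*^k$ replaces $\omega^k\!\cdot\!\nabla_\theta\log\pi_{\theta^k}$ by $A_{L,\bblambda^k}^{\pi_{\theta^k}}/(1-\gamma)$ up to the transferred error of Assumption \ref{ass_transfer_error}, which after Jensen contributes the $\sqrt{\epsilon_{bias}}/(1-\gamma)$ term. Averaging over $k=1,\dots,K$, the telescoping KL gives an $\ccalO(1/((1-\gamma)^3K))$ term (the $(1-\gamma)^{-3}$ arising from combining the $(1-\gamma)^{-1}$ PDL scaling with the smoothness bound on the update norm).

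To pass from the exact $\omega_*^k$ to the computed $\omega=\tfrac{1}{N}\sum_{n}\omega_n$, I would decompose
\begin{equation*}
\omega^k\!\cdot\!\nabla\log\pi_{\theta^k}-A_{L,\bblambda^k}^{\pi_{\theta^k}}=\bigl(\omega^k-\omega_*^k\bigr)\!\cdot\!\nabla\log\pi_{\theta^k}+\bigl(\omega_*^k\!\cdot\!\nabla\log\pi_{\theta^k}-A_{L,\bblambda^k}^{\pi_{\theta^k}}\bigr),
\end{equation*}
bound the second piece by $\sqrt{\epsilon_{bias}}$ after the distribution shift, and bound the first piece by a standard SGD convergence analysis applied to the strongly-convex (by Assumption \ref{ass_pd}) quadratic $L_{d_\rho^\pi,\pi}(\cdot,\theta^k,\bblambda^k)$ with stochastic gradient \eqref{eq:SGD_grad_est}. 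That SGD bound produces the $\ccalO(1/\sqrt{N})$ contribution after taking a square root, and the variance of $\hat A_{L,\bblambda^k}^{\pi_{\theta^k}}$ (computed via Algorithm~\ref{alg:sample} with $T\sim\mathrm{Geo}(1-\gamma)$) scales like $\bigl(\tfrac{1+I\Lambda}{1-\gamma}\bigr)^2$, explaining the $I^2\Lambda^2/((1-\gamma)^2N)$ second-order term and the $I\Lambda/((1-\gamma)\sqrt{N})$ cross term. Finally, the $\ccalO(I\Lambda/(K(1-\gamma)))$ term arises from bounding the initial Lagrange gap $J_L(\pi^*_{\theta,\kappa},\bblambda^1)-J_L(\pi_{\theta^1},\bblambda^1)$ and, after averaging, dividing by $K$.

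The main obstacle I foresee is the SGD piece: the quadratic being minimized is defined through an expectation over $(s,a)\sim(d_\rho^{\pi_{\theta^k}},\pi_{\theta^k})$ whose sampler (Algorithm~\ref{alg:sample}) is itself unbiased only in expectation over the geometric horizon, and the target $\hat A_{L,\bblambda^k}^{\pi_{\theta^k}}$ inherits a variance that grows with $\Vert\bblambda^k\Vert_1 \le I\sigma_\lambda$. I would therefore first establish, as a short preparatory lemma, that $\hat A$ is an unbiased estimate of $A_{L,\bblambda^k}^{\pi_{\theta^k}}$ with second moment $\ccalO((I\Lambda)^2/(1-\gamma)^2)$, and that $\Vert\hat\nabla_\omega L\Vert^2$ is bounded by $\ccalO(G^2(1+I\Lambda)^2/(1-\gamma)^2)$; plugging these into the classical strongly-convex SGD rate with step size $\alpha=\Theta(1/N)$ gives the two $N$-dependent terms in $\epsilon_{K,N}$. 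Once these variance estimates are in hand, the assembly of the four terms is routine, and the desired inequality \eqref{eq:general_bound} follows by combining the KL-telescoping bound with the SGD-error, transferred-error and initialization-error contributions.
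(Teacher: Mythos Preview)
Your high-level skeleton (performance-difference lemma for $J_L$, $M$-smoothness of $\log\pi_\theta$, KL telescoping, transfer-error bound via Assumption~\ref{ass_transfer_error}, and SGD convergence for $\omega^k-\omega_*^k$) matches the paper's framework Lemma~\ref{lem_framework}. However, there is a genuine gap in how you handle the quadratic penalty $\frac{M\eta_1}{2K}\sum_{k}\mathbf{E}\Vert\omega^k\Vert^2$ produced by the smoothness step. With the constant step size $\eta_1=\mu_F^2/(4G^2L_J)$ that the algorithm uses, a naive bound $\Vert\omega^k\Vert\le\Vert\omega^k-\omega_*^k\Vert+\Vert\omega_*^k\Vert$ together with Lemma~\ref{lem:bound_omega} yields only $\Vert\omega_*^k\Vert^2=\mathcal{O}\bigl((I\Lambda)^2/(1-\gamma)^4\bigr)$, so $\frac{M\eta_1}{2}\cdot\frac{1}{K}\sum_k\Vert\omega_*^k\Vert^2=\mathcal{O}\bigl((I\Lambda)^2/(1-\gamma)\bigr)$, a constant that does \emph{not} vanish as $K,N\to\infty$. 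Your sketch never isolates this term, and the vague phrase ``smoothness bound on the update norm'' does not resolve it.

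The paper closes this gap with a separate first-order stationary result (Lemma~\ref{lem:first_order}): it proves $\frac{1}{K}\sum_k\mathbf{E}\Vert\nabla_\theta J_L(\theta^k,\bblambda^k)\Vert^2=\mathcal{O}\bigl(\tfrac{I\Lambda}{(1-\gamma)K}\bigr)+\mathcal{O}\bigl(\tfrac{I^2\Lambda^2}{(1-\gamma)^4N}\bigr)$ and then invokes Assumption~\ref{ass_pd} to write $\Vert\omega_*^k\Vert^2\le\mu_F^{-2}\Vert\nabla_\theta J_L(\theta^k,\bblambda^k)\Vert^2$. Establishing Lemma~\ref{lem:first_order} is itself nontrivial in the primal--dual setting: the $L_J$-smoothness descent inequality on $J_L(\cdot,\bblambda^k)$ must be corrected by the dual drift $J_L(\theta^{k+1},\bblambda^{k+1})-J_L(\theta^{k+1},\bblambda^k)=\sum_i(\lambda_i^{k+1}-\lambda_i^k)J_{g^i}(\theta^{k+1})$ before telescoping in $k$, and it is precisely this drift (bounded via $|J_{g^i}|\le 1/(1-\gamma)$ and Assumption~\ref{ass_bound_lambda}) that produces the $\mathcal{O}\bigl(\tfrac{I\Lambda}{K(1-\gamma)}\bigr)$ contribution to $\epsilon_{K,N}$---not an ``initial Lagrange gap'' as you suggest. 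Likewise, the $\mathcal{O}\bigl(\tfrac{1}{(1-\gamma)^3K}\bigr)$ term comes directly from $\log|\mathcal{A}|/(\eta_1K)$ with $\eta_1\sim(1-\gamma)^3$, not from any PDL scaling. Your proof needs this first-order ingredient to go through.
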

       Before we get the final result for the regret and constraint violation, we need to bound the gap between the optimal value function of the original problem and the conservative problem. Such a gap can be bounded in the dual domain. To do that, we recall the definition of state-action occupancy measure $d^{\pi}\in\mathbb{R}^{|S||A|}$ as
    \begin{equation}
	    d^{\pi}(s,a)=(1-\gamma)\mathbb{P}\bigg(\sum_{t=0}^{\infty}\gamma^t\cdot1_{s_t=s,a_t=a}|\pi,s_0\sim\rho\bigg)
    \end{equation}
    We note that the objective and constraint can be written as
    \begin{equation}\label{eq:new_value_function}
    	\begin{aligned}
    		&J_r(\pi_\theta)=\frac{1}{1-\gamma}\left<r, d^{\pi_\theta}\right>\\
    		&J_{g^i}(\pi_\theta)=\frac{1}{1-\gamma}\left<g^i, d^{\pi_\theta}\right>,\forall i\in[I]
    	\end{aligned}
    \end{equation}
    Define $\ccalD$ to be the set of vector $\phi\in \mathbb{R}^{\ccalS\times\ccalA}$ satisfying
    \begin{equation}\label{eq:occupancy_set}
    \left\{
        \begin{aligned}
        &\sum_{s'\in\ccalS}\sum_{a\in\ccalA}\phi(s',a)(\delta_s(s')-\gamma \mathbf{P}(s|s',a))=(1-\gamma)\rho(s)\\
        &\phi(s,a)\geq 0,\forall (s,a)\in \ccalS\times\ccalA
        \end{aligned}
    \right.
    \end{equation}
    By summing the first constraint over $s$, we have $\sum_{s,a}\phi(s,a)=1$, which means that $\phi$ in the above set is an occupancy measure. By Eq. \eqref{eq:new_value_function} and \eqref{eq:occupancy_set}, we define the following problem which can be found in the reference \cite{altman1999constrained}
    \begin{equation}\label{eq:new_problem}
        \begin{aligned}
        &\max_{\phi\in \ccalD} \quad \frac{1}{1-\gamma}\left<r, \phi\right>\\
        &s.t. \quad \frac{1}{1-\gamma}\left<g^i, \phi\right>\geq 0,\forall i\in[I]
        \end{aligned}
    \end{equation}
    For the full-parameterized policy, it can be shown that the above problem is equivalent to the original problem Eq. \eqref{eq:origin_problem}. However, the strong duality doesn't hold for general parameterization. Thus, we need the following assumption to bridge the gap between them.
    \begin{assumption}\label{ass:sufficient_para}
    	For any $\phi\in \ccalD$, we define a stationary policy as 
    	\begin{equation}\label{eq:stationary1}
    	    \pi'(a|s)=\frac{\phi(s,a)}{\sum_a \phi(s,a)}.
    	\end{equation}
    	We assume that there always exists a $\theta\in \Theta$ such that $|\pi'(a|s)-\pi_\theta(a|s)|\leq \epsilon_{bias2}, \forall (s,a)\in \ccalS\times\ccalA$
    \end{assumption}
    \begin{remark}
        The intuition behind the above assumption is that the parameterization is rich enough so that we can always find a certain parameter $\theta$ and $\pi_\theta$ is close to the above stationary policy. A special case is a softmax parameterization, where $\eps_{bias2}=0$.
    \end{remark}
    With such an assumption, we reveal the relationship between the optimal value of the primal problem and dual problem as follows, whose proof can be found in the Appendix.
    \begin{lemma}\label{lem:duality}
        Under Assumption \ref{ass:sufficient_para}, denote $\pi_{\theta^*}$ as the optimal policy of the original problem defined in Eq.\eqref{eq:origin_problem} and $\phi^*$ as the optimal occupancy measure for the new problem defined in Eq. \eqref{eq:new_problem}, we have
        \begin{equation}
	        \left<r, \phi^*\right>-\eps_{bias2}\leq J_r^{\pi_\theta^*}\leq \left<r, \phi^*\right>
        \end{equation}
    \end{lemma}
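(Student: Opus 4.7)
The plan is to establish the two inequalities separately, using the well-known correspondence between stationary policies and the state-action occupancy polytope $\ccalD$. For the upper bound $J_r^{\pi_{\theta^*}}\leq \left<r,\phi^*\right>$, I would observe that for any parameterized policy $\pi_\theta$ satisfying the original constraints $J_{g^i}(\pi_\theta)\geq 0$, its state-action occupancy measure $d^{\pi_\theta}$ automatically lies in $\ccalD$---this is the standard fact that occupancy measures solve the Bellman flow system in Eq.~\eqref{eq:occupancy_set}---and, by Eq.~\eqref{eq:new_value_function}, satisfies $\left<g^i,d^{\pi_\theta}\right>\geq 0$. Hence $d^{\pi_{\theta^*}}$ is feasible for Eq.~\eqref{eq:new_problem}, and the optimality of $\phi^*$ immediately yields $\left<r,d^{\pi_{\theta^*}}\right>\leq \left<r,\phi^*\right>$, which is the upper bound (with the shared $\frac{1}{1-\gamma}$ factor in Eq.~\eqref{eq:new_value_function} absorbed into the normalization used in the statement).

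For the lower bound, I would go in the reverse direction and convert $\phi^*$ back into a policy via Eq.~\eqref{eq:stationary1}, i.e., $\pi'(a|s)=\phi^*(s,a)/\sum_{a'}\phi^*(s,a')$. A classical calculation using the flow constraint in Eq.~\eqref{eq:occupancy_set} shows that $\phi^*$ is exactly the occupancy measure of $\pi'$ under $\rho$ (by uniqueness of the Bellman flow associated with a given stationary policy), so $J_r(\pi')=\left<r,\phi^*\right>$ up to the same scaling and $\pi'$ is feasible for the original problem. Assumption~\ref{ass:sufficient_para} then delivers a parameter $\theta'\in\Theta$ with $|\pi'(a|s)-\pi_{\theta'}(a|s)|\leq \eps_{bias2}$ uniformly in $(s,a)$, and a performance-difference/simulation-lemma style inequality converts this pointwise closeness into a bound on $|J_r(\pi')-J_r(\pi_{\theta'})|$ that scales with $\eps_{bias2}$. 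Chaining these inequalities with the optimality of $\pi_{\theta^*}$ then gives the lower bound.

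The main obstacle is handling the feasibility of the perturbed policy $\pi_{\theta'}$: although $\pi'$ itself satisfies $J_{g^i}(\pi')\geq 0$, a small perturbation of the policy can shift constraint values, so $\pi_{\theta'}$ need not be feasible for the original problem and we cannot directly invoke $J_r(\pi_{\theta^*})\geq J_r(\pi_{\theta'})$. I expect to patch this by mixing $\pi'$ with the Slater-feasible policy $\bar{\pi}$ from Assumption~\ref{ass_slater} to obtain a strictly-interior policy whose approximation by some $\pi_{\theta''}$ is guaranteed feasible, while losing only an $O(\eps_{bias2})$ term in the objective; the remaining problem-dependent constants (stemming from the $\frac{1}{1-\gamma}$ horizon factors that appear when converting a per-action total-variation bound into a value-function gap) would then be absorbed into the $\eps_{bias2}$ appearing in the statement.
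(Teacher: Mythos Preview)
Your upper-bound argument coincides with the paper's: show that the occupancy measure $d^{\pi_{\theta^*}}$ of any parameterized feasible policy lies in $\ccalD$ and satisfies the LP constraints, then invoke optimality of $\phi^*$.

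For the lower bound the two routes genuinely differ. The paper does \emph{not} pass through a simulation/performance-difference lemma. It works entirely at the level of occupancy measures: starting from the flow identity for $\phi\in\ccalD$, it multiplies by $\pi'(a\vert s)$ and uses Assumption~\ref{ass:sufficient_para} to replace $\pi'$ by a nearby $\pi_\theta$, obtaining the componentwise matrix inequality $(\mathbf{I}-\gamma\mathbf{P}_{\pi_\theta})\bbphi \le (1-\gamma)\bbrho + \eps_{bias2}\mathbf{1}$. Inverting through the resolvent $(\mathbf{I}-\gamma\mathbf{P}_{\pi_\theta})^{-1}=\sum_{t\ge 0}(\gamma\mathbf{P}_{\pi_\theta})^t$ gives $\bbphi \le \bm{d}^{\pi_\theta}+\tfrac{\eps_{bias2}}{1-\gamma}\mathbf{1}$, and the inner product with the nonnegative reward $r$ finishes. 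This is shorter and avoids trajectory-level error propagation, but it relies on $r\ge 0$ to turn the one-sided occupancy bound into a value bound; your simulation-lemma route is slightly more general (it would handle signed rewards) at the cost of an extra lemma.

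Regarding the feasibility obstacle you flag: the paper's proof does \emph{not} perform any Slater mixing. It stops after establishing that for every $\phi\in\ccalD$ there exists some $\theta$ with $\tfrac{1}{1-\gamma}\langle\phi,r\rangle \le J_r^{\pi_\theta}+\tfrac{\eps_{bias2}}{(1-\gamma)^2}$, and never verifies that this $\pi_\theta$ is feasible for the original constrained problem before comparing to $J_r^{\pi_{\theta^*}}$. So your concern is well-founded, and your proposed patch (mix $\phi^*$ with the Slater occupancy before invoking Assumption~\ref{ass:sufficient_para}) is the standard and correct remedy; just note that it introduces a $1/\varphi$ dependence and further $(1-\gamma)^{-1}$ factors beyond the bare $\eps_{bias2}$ in the lemma statement. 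Since the paper's own proof already yields $\tfrac{\eps_{bias2}}{(1-\gamma)^2}$ (which is the form actually used downstream in Lemma~\ref{lem:bound_conservative}) rather than the stated $\eps_{bias2}$, the constant in the lemma should not be read literally.
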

    Equipped with the above lemma, we bound the gap between the original problem and the conservative problem in the following lemma.
    \begin{lemma}\label{lem:bound_conservative}
        Under Assumption \ref{ass:sufficient_para}, Denote $\pi_{\theta_\kappa^*}$ as the optimal policy for the conservative problem, we have
        \begin{equation}
	        J_r^{\pi_{\theta^*}}-J_r^{\pi_{\theta_\kappa^*}}\leq \frac{\eps_{bias2}}{(1-\gamma)^2}+\frac{\kappa}{(1-\gamma)\varphi}
	    \end{equation}
    \end{lemma}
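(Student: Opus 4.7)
The plan is to exhibit a policy $\pi_{\theta_\beta}$ in the parameterized class that is (approximately) feasible for the conservative problem~\eqref{eq:conservative_problem} and whose reward is close to $J_r^{\pi_{\theta^*}}$. The gap $J_r^{\pi_{\theta^*}}-J_r^{\pi_{\theta_\kappa^*}}$ then decomposes into a ``tightening'' loss of order $\kappa/\varphi$, obtained by mixing the unconstrained optimum with the Slater point, plus a ``parameterization'' loss of order $\eps_{bias2}/(1-\gamma)^2$, obtained via Assumption~\ref{ass:sufficient_para} in the same manner as in Lemma~\ref{lem:duality}.

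Let $\phi^*$ be the optimizer of the LP relaxation~\eqref{eq:new_problem} and let $\bar\phi = d^{\bar\pi}$ be the occupancy measure of the Slater policy from Assumption~\ref{ass_slater}. Form the convex combination $\phi_\beta = (1-\beta)\phi^* + \beta\bar\phi$ with $\beta = \kappa/\varphi$. Since $\ccalD$ is defined by affine equality and nonnegativity constraints, $\phi_\beta \in \ccalD$, and by linearity
\[
\tfrac{1}{1-\gamma}\langle g^i, \phi_\beta\rangle \geq (1-\beta)\cdot 0 + \beta\cdot \varphi = \kappa,
\]
so $\phi_\beta$ satisfies the conservative constraints. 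Using $\langle r,\phi\rangle \leq 1$ for every $\phi \in \ccalD$ (since $\sum_{s,a}\phi(s,a)=1$ and $r\in[0,1]$), the loss in LP value is bounded by
\[
\tfrac{1}{1-\gamma}\langle r, \phi^* - \phi_\beta\rangle = \tfrac{\beta}{1-\gamma}\langle r, \phi^* - \bar\phi\rangle \leq \tfrac{\kappa}{\varphi},
\]
where the $(1-\gamma)$ scaling is read consistently with Lemma~\ref{lem:duality}.

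Next, let $\pi'_\beta$ denote the stationary policy associated with $\phi_\beta$ through~\eqref{eq:stationary1}; by construction its occupancy measure is exactly $\phi_\beta$, so $J_r(\pi'_\beta) = \tfrac{1}{1-\gamma}\langle r, \phi_\beta\rangle$ and $J_{g^i}(\pi'_\beta) \geq \kappa$. By Assumption~\ref{ass:sufficient_para} there is $\theta_\beta\in\Theta$ with $|\pi'_\beta(a|s)-\pi_{\theta_\beta}(a|s)|\leq \eps_{bias2}$ for every $(s,a)$. A performance-difference/simulation-lemma argument, of the same type used in the proof of Lemma~\ref{lem:duality}, yields
\[
|J_r(\pi'_\beta)-J_r(\pi_{\theta_\beta})| \leq \tfrac{\eps_{bias2}}{(1-\gamma)^2}, \qquad |J_{g^i}(\pi'_\beta)-J_{g^i}(\pi_{\theta_\beta})| \leq \tfrac{\eps_{bias2}}{(1-\gamma)^2}.
\]
Combining the upper bound $J_r^{\pi_{\theta^*}}\leq \tfrac{1}{1-\gamma}\langle r,\phi^*\rangle$ from Lemma~\ref{lem:duality} with the LP-gap estimate above,
\[
J_r^{\pi_{\theta_\beta}} \geq J_r(\pi'_\beta) - \tfrac{\eps_{bias2}}{(1-\gamma)^2} \geq J_r^{\pi_{\theta^*}} - \tfrac{\kappa}{\varphi} - \tfrac{\eps_{bias2}}{(1-\gamma)^2}.
\]
Since $\pi_{\theta_\kappa^*}$ is optimal for the parameterized conservative problem and $\pi_{\theta_\beta}$ is (approximately) feasible, $J_r^{\pi_{\theta_\kappa^*}}\geq J_r^{\pi_{\theta_\beta}}$, which gives the claim.

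The main obstacle is the last step: $\pi_{\theta_\beta}$ is only $\eps_{bias2}$-close to $\pi'_\beta$ in the pointwise sense, so $J_{g^i}(\pi_{\theta_\beta})$ may drop by $O(\eps_{bias2}/(1-\gamma)^2)$ below $\kappa$, breaking strict feasibility for~\eqref{eq:conservative_problem}. This is resolved either by absorbing the simulation-lemma slack into the $\eps_{bias2}/(1-\gamma)^2$ error term, or by enlarging $\beta$ by an $O(\eps_{bias2})$ amount (permissible since Slater gives room $\varphi>\kappa$) to restore exact feasibility; in either case the order of the final bound is unchanged.
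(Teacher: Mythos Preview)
Your argument is essentially the paper's: form the convex combination $\hat\phi=(1-\kappa/\varphi)\phi^*+(\kappa/\varphi)\tilde\phi$ of the optimal and Slater occupancy measures, observe that it is feasible for the $\kappa$-tightened LP, bound the reward loss by $\kappa/\varphi$, and pay the parameterization gap $\eps_{bias2}/(1-\gamma)^2$ via Assumption~\ref{ass:sufficient_para}/Lemma~\ref{lem:duality}. The one structural difference is the order in which the LP step and the parameterization step are applied. You pass to a parameterized policy $\pi_{\theta_\beta}$ first and then compare it with $\pi_{\theta_\kappa^*}$, which forces the approximate-feasibility issue you flag at the end. The paper instead stays at the occupancy level throughout: it compares $\hat\phi$ with the conservative \emph{LP} optimum $\phi_\kappa^*$ (where feasibility of $\hat\phi$ is exact, so $\langle r,\phi_\kappa^*\rangle\ge\langle r,\hat\phi\rangle$ holds cleanly), and only then invokes Lemma~\ref{lem:duality} for the conservative problem to obtain $J_r^{\pi_{\theta_\kappa^*}}\geq\tfrac{1}{1-\gamma}\langle r,\phi_\kappa^*\rangle-\eps_{bias2}/(1-\gamma)^2$. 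This ordering removes the need to exhibit a feasible parameterized competitor. Your feasibility concern is legitimate, and strictly speaking the same issue is buried in the paper's invocation of Lemma~\ref{lem:duality} for the conservative problem (the $\pi_\theta$ approximating the policy induced by $\phi_\kappa^*$ may also miss the $\kappa$-constraint by $O(\eps_{bias2}/(1-\gamma)^2)$); the paper simply does not comment on it. Your ``inflate $\beta$'' fix is the correct patch in either presentation and only perturbs constants.
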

        
	Equipped with Lemma \ref{lem_bound_JL} and \ref{lem:bound_conservative}, we provide the main result for the NPG-PD algorithm for the objective function and constrained violation. The detailed proof can be found in the Appendix.
    \begin{figure*}[htbp]
	\centering
	\subfigure[Comparison for the convergence of objective]{
		\includegraphics[width=2.5in]{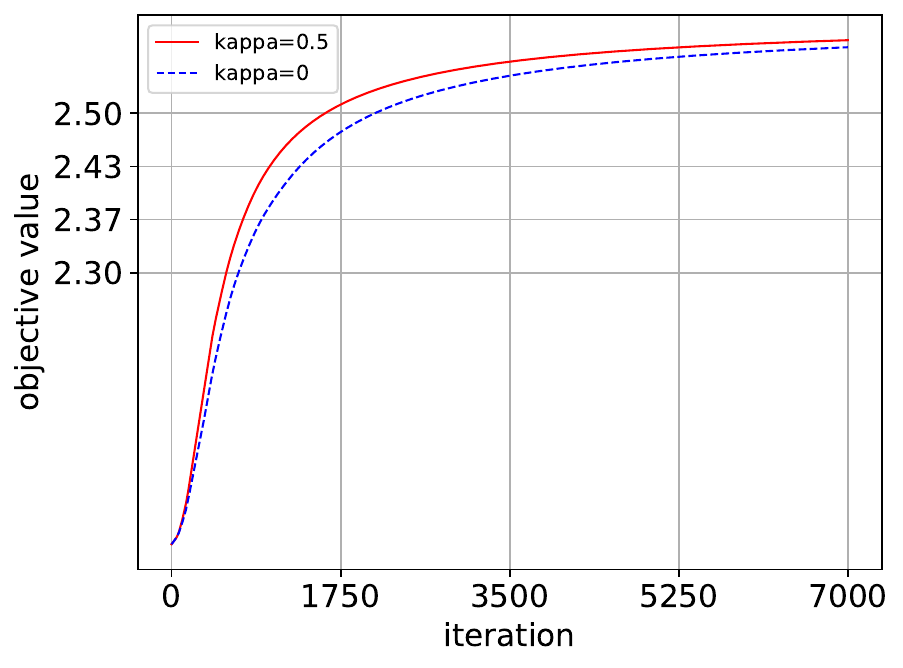}
	}
	\subfigure[Comparison for the constriant violation]{
		\includegraphics[width=2.5in]{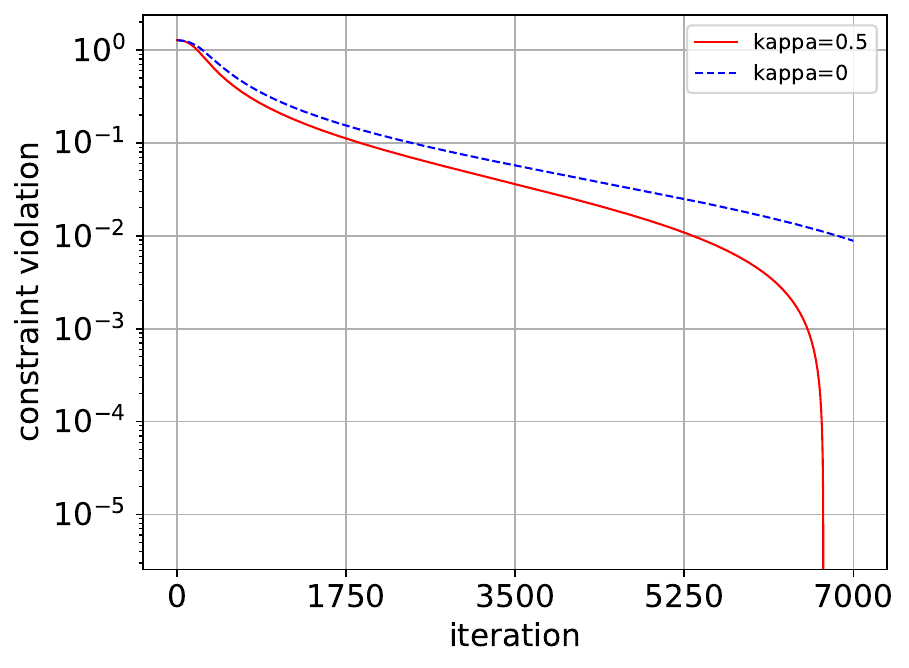}
	}
	\caption{Comparison of objective and constraint violation between $\kappa=0.5$ and $\kappa=0$. For the constraint violation figure, we use the log axis to make zero constraint violation more obvious.}
	\label{fig:compare}
\end{figure*}
	\begin{theorem}\label{main_theorem}
		For any $\epsilon>0$, in the Natural Policy Gradient Algorithm \ref{alg:spdgd}, if step-size $\eta_1=\frac{\mu_F^2}{4G^2L_J}$ and $\eta_2=\frac{1}{\sqrt{K}}$, the number of iterations $K=\ccalO\bigg(\frac{I^2}{\varphi^2(1-\gamma)^4\eps^2}\bigg)$, the number of samples for per iteration $N=\ccalO\bigg(\frac{I^2\Lambda^2}{(1-\gamma)^2\eps^2}\bigg)$ and take the conservative variable as 
		\begin{equation}
		    \kappa = \varphi\sqrt{\epsilon_{bias}}+ \varphi(1-\gamma)\epsilon_{K,N}  + \frac{2I}{(1-\gamma)\varphi\sqrt{K}} + \frac{2\varphi I}{\sqrt{K}(1-\gamma)}
		\end{equation}
		then we have $\epsilon$-optimal policy with zero constraint violations. Formally,
		\begin{equation}
		    \begin{aligned}
		        \frac{1}{K}\sum_{k=0}^{K-1}\bigg(J_r(\pi^*_{\theta})-J_r(\pi_\theta^k)\bigg)&\leq \ccalO\bigg(\frac{\sqrt{\epsilon_{bias}}}{1-\gamma}\bigg)+\ccalO\bigg(\frac{\eps_{bias2}}{(1-\gamma)^2}\bigg)\\
		        &+\ccalO\bigg(\epsilon\bigg)  \\ \frac{1}{K}\sum_{k=0}^{K-1}J_{g^i}(\pi_\theta^k)&\geq 0
		    \end{aligned}
		\end{equation}
		In other words, the NPG-PD algorithm needs $\mathcal{O}\big(\frac{I^4\Lambda^2}{\varphi^2(1-\gamma)^6\epsilon^4}\big)$ trajectories.
	\end{theorem}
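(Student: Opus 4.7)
The plan is to derive the two parts of the theorem—$\epsilon$-optimality and zero average constraint violation—by stitching together the Lagrange-function convergence of the conservative problem (Lemma \ref{lem_bound_JL}), the approximate-duality gap between the conservative and original optima (Lemma \ref{lem:bound_conservative}), and a projected-descent regret inequality for the dual update. The conservative shift $\kappa$ is then chosen to exactly absorb the dual-regret plus sampling error in the violation inequality, so that what would normally be a small but positive violation collapses to zero; $\kappa$ simultaneously enters the optimality bound via Lemma \ref{lem:bound_conservative} (as $\kappa/\varphi$), and the parameter schedule in the theorem is what makes both effects compatible.

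For the optimality step, expand
\begin{equation*}
J_L(\pi^*_{\theta,\kappa},\bblambda^k)-J_L(\pi_\theta^k,\bblambda^k)=\bigl[J_r(\pi^*_{\theta,\kappa})-J_r(\pi_\theta^k)\bigr]+\sum_{i\in[I]}\lambda_i^k\bigl[J_{g^i}(\pi^*_{\theta,\kappa})-J_{g^i}(\pi_\theta^k)\bigr],
\end{equation*}
use $J_{g^i}(\pi^*_{\theta,\kappa})\geq\kappa$ (conservative feasibility) and $\lambda_i^k\geq 0$ to isolate $J_r(\pi^*_{\theta,\kappa})-J_r(\pi_\theta^k)$, average over $k$, and bound the resulting Lagrange difference by Lemma \ref{lem_bound_JL}; a final application of Lemma \ref{lem:bound_conservative} replaces $J_r(\pi^*_{\theta,\kappa})$ by $J_r(\pi^*_{\theta})$ at the cost of $\epsilon_{bias2}/(1-\gamma)^2$ and an additive $\kappa/\varphi$ that the parameter choice will absorb into $O(\epsilon)$. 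For the constraint step, non-expansiveness of $\ccalP_{[0,\sigma_\lambda]}$ in the dual update yields, for any fixed $\lambda\in[0,\sigma_\lambda]$,
\begin{equation*}
(\lambda_i^{k+1}-\lambda)^2\leq(\lambda_i^k-\lambda)^2-2\eta_2\bigl(\hat{J}_{g^i}(\pi_{\theta^k})-\kappa\bigr)(\lambda_i^k-\lambda)+\eta_2^2\bigl(\hat{J}_{g^i}(\pi_{\theta^k})-\kappa\bigr)^2.
\end{equation*}
Summing, choosing $\lambda=\lambda_i^*$ (admissible by Assumption \ref{ass_bound_lambda}), using $|\hat{J}_{g^i}-\kappa|\le 2/(1-\gamma)$, taking expectations so that $\mathbf{E}[\hat{J}_{g^i}(\pi_{\theta^k})]=J_{g^i}(\pi_{\theta^k})$ by the unbiasedness of Algorithm \ref{alg:sample}, and combining with the Lagrange bound of Lemma \ref{lem_bound_JL} leads, after a Jensen-type square-root rearrangement, to
\begin{equation*}
\kappa-\frac{1}{K}\sum_{k=0}^{K-1}J_{g^i}(\pi_{\theta^k})\leq\frac{1}{\eta_2 K}\sqrt{2\eta_2 K\Bigl(\tfrac{\sqrt{\epsilon_{bias}}}{1-\gamma}+\epsilon_{K,N}+\tfrac{2\sum_i\lambda_i^*+1}{1-\gamma}\Bigr)+\tfrac{4\eta_2^2 K}{(1-\gamma)^2}},
\end{equation*}
and the $\kappa$ prescribed in the theorem equals this right-hand side, which forces $\frac{1}{K}\sum_k J_{g^i}(\pi_{\theta^k})\geq 0$.

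Plugging $\eta_2=1/\sqrt{K}$ with $K=\Theta(I^2\Lambda^2/((1-\gamma)^4\epsilon^2))$ and $N=\Theta(I^2\Lambda^2/((1-\gamma)^2\epsilon^2))$ into $\epsilon_{K,N}$ makes each of the four summands $O(\epsilon)$ (the $I\Lambda/((1-\gamma)\sqrt{N})$ term dominates), which matches the stated objective rate. Since Algorithm \ref{alg:sample} uses a $\mathrm{Geo}(1-\gamma)$ rollout of expected length $1/(1-\gamma)$, each outer iteration spends $\Theta(N)$ trajectories and the total budget is $KN=O(I^4\Lambda^4/((1-\gamma)^6\epsilon^4))$, matching the claimed sample complexity. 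The main obstacle is the joint calibration in the constraint step: $\kappa$ must be large enough to cover the combined Lagrange and dual-regret error (making the violation vanish) yet small enough that its contribution through Lemma \ref{lem:bound_conservative} is absorbed into the $O(\epsilon)$ optimality error. These two requirements pin down the schedule $\eta_2=\Theta(1/\sqrt{K})$, $K=\Theta(1/\epsilon^2)$; everything else—handling the cross term $\sum_i\lambda_i^k(J_{g^i}(\pi_{\theta^k})-\kappa)$ in the optimality derivation by the same dual-regret inequality, and routing unbiasedness through Algorithm \ref{alg:sample}—is mechanical bookkeeping.
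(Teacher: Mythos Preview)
Your high-level plan is the paper's: combine Lemma~\ref{lem_bound_JL} with Lemma~\ref{lem:bound_conservative} for the objective, derive a dual-recursion bound for the violation, and calibrate $\kappa$ to kill the latter while keeping the former $O(\epsilon)$. The objective step, the parameter accounting, and the sample-complexity bookkeeping all match.

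The gap is in the constraint step. Anchoring the projected-descent inequality at $\lambda=\lambda_i^*$ does \emph{not} produce the displayed bound: after summing and taking expectations you control
\[
\lambda_i^*\cdot\frac{1}{K}\sum_{k}\bigl(\kappa-J_{g^i}(\pi_{\theta^k})\bigr),
\]
not the violation itself; if $\lambda_i^*=0$ (an inactive constraint) the inequality is vacuous, and no ``Jensen-type square-root rearrangement'' recovers the unweighted quantity. The paper's route is different and is what actually generates the square root. First it telescopes the \emph{linear} increment $\lambda_i^{k+1}-\lambda_i^k\ge -\eta_2(\hat J_{g^i}(\pi_{\theta^k})-\kappa)$ to obtain
\[
\frac{1}{K}\sum_{k} J_{g^i}(\pi_{\theta^k}) \;\ge\; \kappa-\frac{\lambda_i^K}{\eta_2 K}.
\]
Second, it bounds $(\lambda_i^K)^2$ by telescoping $(\lambda_i^{k+1})^2-(\lambda_i^k)^2$ from $\lambda_i^0=0$ (i.e., the anchor is $0$, not $\lambda_i^*$), which yields
\[
(\lambda_i^K)^2\;\le\;2\eta_2\sum_k\lambda_i^k\bigl(J_{g^i}(\pi^*_{\theta,\kappa})-J_{g^i}(\pi_{\theta^k})\bigr)+\frac{4\eta_2^2K}{(1-\gamma)^2}.
\]
Third, the inner sum is bounded via Lemma~\ref{lem_bound_JL} together with the crude lower bound $J_r(\pi^*_{\theta,\kappa})-J_r(\pi_{\theta^k})\ge -\frac{2\sum_i\lambda_i^*+1}{1-\gamma}$, obtained from the Lagrangian at $\lambda^*$; this is where $\sum_i\lambda_i^*$ enters, \emph{not} as a dual anchor. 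Taking the square root of the resulting bound on $(\lambda_i^K)^2$ and substituting into the first display gives exactly the expression that defines $\kappa$. Replace your anchored-at-$\lambda_i^*$ argument with these three steps and the proof goes through.
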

    \begin{remark}
        The proposed algorithm doesn't only achieve the zero constraint violation but also achieves the state of art sample complexity over a general parameterization policy-based algorithm. In Theorem \ref{main_theorem}, we can see that the algorithm converges to the neighborhood of the global optimal and the bias is controlled by two parameters $\eps_{bias}$ and $\eps_{bias2}$ defined in Assumption \ref{ass_transfer_error} and \ref{ass:sufficient_para}, respectively. If the parameterization is sufficient enough, then $\eps_{bias}=\eps_{bias2}=0$. However, whether there exists a certain relationship between Assumption \ref{ass_transfer_error} and \ref{ass:sufficient_para} is an interesting question for future work.
    \end{remark}
\section{Simulation}

In order to verify the performance of the proposed algorithm (Algorithm \ref{alg:spdgd}), we utilize the simulation code from \cite{Ding2020} and compare the proposed algorithm with them. We establish a random CMDP, where the state space and action space are $|\ccalS|=10,|\ccalA|=5$. The transition matrix $P(s'|s,a)$ is chosen by generating each entry uniformly at random in $[0,1]$, followed by normalization. Similarly, the reward function $r(s,a)\sim U(0,1)$ and constraint function $g(s,a)\sim U(-0.71, 0.29)$. Only 1 constraint function is considered here. The initial state distribution is set to uniform and the discount factor is $\gamma=0.8$. For the general parameterization, we use a feature map with dimension $d=35$, and for each SGD procedure we use $N=100$ number of samples. The learning rate for $\theta$ and $\lambda$ are set to 0.1. The more detailed information for the simulation setting can be found in Appendix \ref{app:sim}. We run the algorithm for $K=7000$ iterations and compare the proposed algorithm with $\kappa=0.5$ and the NPG-PD algorithm \cite{Ding2020} which doesn't consider the zero constraint violation case (equivalently $\kappa=0$) in  Figure \ref{fig:compare}.

From Fig. \ref{fig:compare}, we find that the convergence of the reward is similar and the proposed algorithm converges even faster than the non-zero constraint violation case. However, for the constraint violation, we find that when $\kappa=0.5$,  the log of constraint violation converges to negative infinity, which means that the constraint violation is below 0. In contrast, the constraint violation still exists when $\kappa=0$. The comparison between $\kappa=0.5$ and $\kappa=0$ validates the result in Theorem \ref{main_theorem}.
\section{Conclusion}\label{sec_conclusion}
In this paper, we propose a novel algorithm for Constrained Markov Decision Process and the proposed algorithm achieves the state-of-the-art sample complexity over general parametrization policy-based algorithms. By revealing the relationship between the primal and dual problem, the gap between conservative problem and original problem is bounded, which finally leads to the analysis of zero constraint violation. The proposed algorithm converges to the neighbourhood of the global optimal and the gap is controlled by the richness of parametrization. 

The key limitation of the work includes the assumptions used to prove the results. Simplifying or removing Assumptions \ref{ass_transfer_error} and \ref{ass:sufficient_para} on the bias parameters is a valuable problem in the future work.

\bibliography{main.bib}

\begin{thebibliography}{30}
\providecommand{\natexlab}[1]{#1}

\bibitem[{Agarwal et~al.(2020)Agarwal, Kakade, Lee, and Mahajan}]{Alekh2020}
Agarwal, A.; Kakade, S.~M.; Lee, J.~D.; and Mahajan, G. 2020.
\newblock Optimality and Approximation with Policy Gradient Methods in Markov
  Decision Processes.
\newblock In Abernethy, J.; and Agarwal, S., eds., \emph{Proceedings of Thirty
  Third Conference on Learning Theory}, volume 125 of \emph{Proceedings of
  Machine Learning Research}, 64--66. PMLR.

\bibitem[{Akhtar, Bedi, and Rajawat(2021)}]{Amrit_zero}
Akhtar, Z.; Bedi, A.~S.; and Rajawat, K. 2021.
\newblock Conservative Stochastic Optimization With Expectation Constraints.
\newblock \emph{IEEE Transactions on Signal Processing}, 69: 3190--3205.

\bibitem[{Altman(1999)}]{altman1999constrained}
Altman, E. 1999.
\newblock \emph{Constrained Markov decision processes: stochastic modeling}.
\newblock Routledge.

\bibitem[{Arulkumaran et~al.(2017)Arulkumaran, Deisenroth, Brundage, and
  Bharath}]{arulkumaran2017deep}
Arulkumaran, K.; Deisenroth, M.~P.; Brundage, M.; and Bharath, A.~A. 2017.
\newblock Deep reinforcement learning: A brief survey.
\newblock \emph{IEEE Signal Processing Magazine}, 34(6): 26--38.

\bibitem[{Bach and Moulines(2013)}]{Non-strongly-convex}
Bach, F.; and Moulines, E. 2013.
\newblock Non-strongly-convex smooth stochastic approximation with convergence
  rate O(1/n).
\newblock In Burges, C.; Bottou, L.; Welling, M.; Ghahramani, Z.; and
  Weinberger, K., eds., \emph{Advances in Neural Information Processing
  Systems}, volume~26. Curran Associates, Inc.

\bibitem[{Bai, Aggarwal, and Gattami(2023)}]{bai2023provably}
Bai, Q.; Aggarwal, V.; and Gattami, A. 2023.
\newblock Provably Sample-Efficient Model-Free Algorithm for MDPs with Peak
  Constraints.
\newblock \emph{Journal of Machine Learning Research}, 24(60): 1--25.

\bibitem[{Ding et~al.(2020)Ding, Zhang, Basar, and Jovanovic}]{Ding2020}
Ding, D.; Zhang, K.; Basar, T.; and Jovanovic, M. 2020.
\newblock Natural Policy Gradient Primal-Dual Method for Constrained Markov
  Decision Processes.
\newblock In Larochelle, H.; Ranzato, M.; Hadsell, R.; Balcan, M.; and Lin, H.,
  eds., \emph{Advances in Neural Information Processing Systems}, volume~33,
  8378--8390. Curran Associates, Inc.

\bibitem[{Jin et~al.(2020)Jin, Yang, Wang, and Jordan}]{Chi2019}
Jin, C.; Yang, Z.; Wang, Z.; and Jordan, M.~I. 2020.
\newblock Provably efficient reinforcement learning with linear function
  approximation.
\newblock In Abernethy, J.; and Agarwal, S., eds., \emph{Proceedings of Thirty
  Third Conference on Learning Theory}, volume 125 of \emph{Proceedings of
  Machine Learning Research}, 2137--2143. PMLR.

\bibitem[{Kakade(2001)}]{Kakade2002}
Kakade, S. 2001.
\newblock A Natural Policy Gradient.
\newblock In \emph{Proceedings of the 14th International Conference on Neural
  Information Processing Systems: Natural and Synthetic}, NIPS'01, 1531–1538.
  Cambridge, MA, USA: MIT Press.

\bibitem[{Kay(1997)}]{Kay97}
Kay, S.~M. 1997.
\newblock \emph{Fundamentals of Statistical Signal Processing: Estimation
  Theory}.
\newblock Prentice Hall.

\bibitem[{Kiran et~al.(2021)Kiran, Sobh, Talpaert, Mannion, Al~Sallab,
  Yogamani, and P{\'e}rez}]{kiran2021deep}
Kiran, B.~R.; Sobh, I.; Talpaert, V.; Mannion, P.; Al~Sallab, A.~A.; Yogamani,
  S.; and P{\'e}rez, P. 2021.
\newblock Deep reinforcement learning for autonomous driving: A survey.
\newblock \emph{IEEE Transactions on Intelligent Transportation Systems}.

\bibitem[{Konda and Tsitsiklis(2000)}]{konda2000actor}
Konda, V.~R.; and Tsitsiklis, J.~N. 2000.
\newblock Actor-critic algorithms.
\newblock In \emph{Advances in neural information processing systems},
  1008--1014. Citeseer.

\bibitem[{Liu et~al.(2021)Liu, Zhou, Kalathil, Kumar, and Tian}]{PMDPD}
Liu, T.; Zhou, R.; Kalathil, D.; Kumar, P.~R.; and Tian, C. 2021.
\newblock Policy Optimization for Constrained MDPs with Provable Fast Global
  Convergence.

\bibitem[{Liu et~al.(2020)Liu, Zhang, Basar, and Yin}]{Yanli2020}
Liu, Y.; Zhang, K.; Basar, T.; and Yin, W. 2020.
\newblock An Improved Analysis of (Variance-Reduced) Policy Gradient and
  Natural Policy Gradient Methods.
\newblock In Larochelle, H.; Ranzato, M.; Hadsell, R.; Balcan, M.~F.; and Lin,
  H., eds., \emph{Advances in Neural Information Processing Systems},
  volume~33, 7624--7636. Curran Associates, Inc.

\bibitem[{Mei et~al.(2020)Mei, Xiao, Szepesvari, and Schuurmans}]{Mei2020}
Mei, J.; Xiao, C.; Szepesvari, C.; and Schuurmans, D. 2020.
\newblock On the Global Convergence Rates of Softmax Policy Gradient Methods.
\newblock In III, H.~D.; and Singh, A., eds., \emph{Proceedings of the 37th
  International Conference on Machine Learning}, volume 119 of
  \emph{Proceedings of Machine Learning Research}, 6820--6829. PMLR.

\bibitem[{Papini et~al.(2018)Papini, Binaghi, Canonaco, Pirotta, and
  Restelli}]{Tianbing2017}
Papini, M.; Binaghi, D.; Canonaco, G.; Pirotta, M.; and Restelli, M. 2018.
\newblock Stochastic Variance-Reduced Policy Gradient.
\newblock In Dy, J.; and Krause, A., eds., \emph{Proceedings of the 35th
  International Conference on Machine Learning}, volume~80 of \emph{Proceedings
  of Machine Learning Research}, 4026--4035. PMLR.

\bibitem[{Peters and Schaal(2008)}]{Peters2008}
Peters, J.; and Schaal, S. 2008.
\newblock Reinforcement learning of motor skills with policy gradients.
\newblock \emph{Neural Networks}, 21(4): 682--697.
\newblock Robotics and Neuroscience.

\bibitem[{Sutton(1988)}]{sutton1988learning}
Sutton, R.~S. 1988.
\newblock Learning to predict by the methods of temporal differences.
\newblock \emph{Machine learning}, 3(1): 9--44.

\bibitem[{Sutton et~al.(2000)Sutton, McAllester, Singh, and
  Mansour}]{sutton2000}
Sutton, R.~S.; McAllester, D.; Singh, S.; and Mansour, Y. 2000.
\newblock Policy Gradient Methods for Reinforcement Learning with Function
  Approximation.
\newblock In Solla, S.; Leen, T.; and M\"{u}ller, K., eds., \emph{Advances in
  Neural Information Processing Systems}, volume~12. MIT Press.

\bibitem[{Vaswani, Yang, and Szepesv{\'a}ri(2022)}]{Csaba2022lower}
Vaswani, S.; Yang, L.~F.; and Szepesv{\'a}ri, C. 2022.
\newblock Near-optimal sample complexity bounds for constrained MDPs.
\newblock \emph{arXiv preprint arXiv:2206.06270}.

\bibitem[{Vu et~al.(2020)Vu, Mukherjee, Yin, Huang, Huang et~al.}]{vu2020safe}
Vu, T.~L.; Mukherjee, S.; Yin, T.; Huang, R.; Huang, Q.; et~al. 2020.
\newblock Safe reinforcement learning for emergency loadshedding of power
  systems.
\newblock \emph{arXiv preprint arXiv:2011.09664}.

\bibitem[{Wang et~al.(2019)Wang, Cai, Yang, and Wang}]{Lingxiao2019}
Wang, L.; Cai, Q.; Yang, Z.; and Wang, Z. 2019.
\newblock Neural Policy Gradient Methods: Global Optimality and Rates of
  Convergence.
\newblock arXiv:1909.01150.

\bibitem[{Wen et~al.(2020)Wen, Duan, Li, Xu, and Peng}]{wen2020safe}
Wen, L.; Duan, J.; Li, S.~E.; Xu, S.; and Peng, H. 2020.
\newblock Safe reinforcement learning for autonomous vehicles through parallel
  constrained policy optimization.
\newblock In \emph{2020 IEEE 23rd International Conference on Intelligent
  Transportation Systems (ITSC)}, 1--7. IEEE.

\bibitem[{Xu, Gao, and Gu(2020{\natexlab{a}})}]{Xu2019}
Xu, P.; Gao, F.; and Gu, Q. 2020{\natexlab{a}}.
\newblock An Improved Convergence Analysis of Stochastic Variance-Reduced
  Policy Gradient.
\newblock In Adams, R.~P.; and Gogate, V., eds., \emph{Proceedings of The 35th
  Uncertainty in Artificial Intelligence Conference}, volume 115 of
  \emph{Proceedings of Machine Learning Research}, 541--551. PMLR.

\bibitem[{Xu, Gao, and Gu(2020{\natexlab{b}})}]{xu2020}
Xu, P.; Gao, F.; and Gu, Q. 2020{\natexlab{b}}.
\newblock Sample Efficient Policy Gradient Methods with Recursive Variance
  Reduction.
\newblock arXiv:1909.08610.

\bibitem[{Xu, Liang, and Lan(2021)}]{xu2021crpo}
Xu, T.; Liang, Y.; and Lan, G. 2021.
\newblock Crpo: A new approach for safe reinforcement learning with convergence
  guarantee.
\newblock In \emph{International Conference on Machine Learning}, 11480--11491.
  PMLR.

\bibitem[{Young(1912)}]{Young}
Young, W.~H. 1912.
\newblock On classes of summable functions and their Fourier Series.
\newblock \emph{Proceedings of the Royal Society of London. Series A,
  Containing Papers of a Mathematical and Physical Character}, 87(594):
  225--229.

\bibitem[{Zeng, Doan, and Romberg(2021)}]{PDNACA}
Zeng, S.; Doan, T.~T.; and Romberg, J. 2021.
\newblock Finite-Time Complexity of Online Primal-Dual Natural Actor-Critic
  Algorithm for Constrained Markov Decision Processes.

\bibitem[{Zhang et~al.(2021)Zhang, Ni, Yu, Szepesvari, and Wang}]{Mengdi2021}
Zhang, J.; Ni, C.; Yu, Z.; Szepesvari, C.; and Wang, M. 2021.
\newblock On the Convergence and Sample Efficiency of Variance-Reduced Policy
  Gradient Method.
\newblock arXiv:2102.08607.

\bibitem[{Zhang et~al.(2020)Zhang, Koppel, Zhu, and Başar}]{Kaiqing2019}
Zhang, K.; Koppel, A.; Zhu, H.; and Başar, T. 2020.
\newblock Global Convergence of Policy Gradient Methods to (Almost) Locally
  Optimal Policies.
\newblock \emph{SIAM Journal on Control and Optimization}, 58(6): 3586--3612.

\end{thebibliography}
\appendix

\onecolumn
	\section{Proof of Helpful Lemmas}\label{sec_app_helpful}
	\subsection{Computation of the Gradient for Value Function}\label{sec:app_grad_computation}
	We compute the gradient for the reward value function and the gradient of constraint value function can be achieved similarly. Define $\tau=(s_0,a_1,s_1,a_1,s_2,a_2\cdots)$ as a trajectory, whose distribution induced by policy $\pi_\theta$ is $p(\tau\vert \theta)$  can be expressed as
	\begin{equation}\label{eq:traj_dis}
		p(\tau\vert \theta)=\rho(s_0)\prod_{t=0}^{\infty}\pi_\theta(a_t\vert s_t)P(s_{t+1}\vert s_t,a_t)
	\end{equation}
	Define $R(\tau)=\sum_{t=0}^{\infty}\gamma^tr(s_t,a_t)$ as the cumulative reward following the trajectory $\tau$. Then, the expected return $J_r^\pi(\theta)$ can  be expressed as
	\begin{equation*}
		J_r(\pi_\theta)=\mathbf{E}_{\tau\sim p(\tau\vert\theta)}[R(\tau)]
	\end{equation*}
	whose gradient is calculated as
	\begin{equation}\label{eq:grad_J}
		\nabla_\theta J_r(\pi_\theta)=\int_\tau R(\tau)p(\tau\vert \theta)d\tau =\int_{\tau}R(\tau)\frac{\nabla_\theta p(\tau\vert \theta)}{p(\tau\vert \theta)}p(\tau\vert \theta)d\tau=\mathbf{E}_{\tau\sim p}\big[\nabla_\theta \log p(\tau\vert\theta)R(\tau)\big]
	\end{equation}
	Notice that $\nabla_\theta \log p(\tau\vert \theta)$ is independent of the transition dynamics because 
	\begin{equation*}
		\nabla_\theta \log p(\tau\vert \theta)=\nabla_\theta\bigg[\log \rho(s_0)+\sum_{t=0}^{\infty}\big[\log\pi_\theta(a_t\vert s_t)+\log P(s_{t+1}\vert s_t,a_t)\big]\bigg]=\sum_{t=0}^\infty\nabla_\theta \log\pi_\theta(a_t\vert s_t)
	\end{equation*}
	and thus the gradient in Eq. \eqref{eq:grad_J} can be written as
	\begin{equation}\label{eq:gradient_origin}
		\nabla_\theta J_r(\pi_\theta)=\mathbf{E}_{\tau\sim p(\tau\vert \theta)}\bigg[\bigg(\sum_{t=0}^\infty \nabla_\theta \log \pi_\theta(a_t\vert s_t)\bigg)\sum_{t=0}^{\infty}\gamma^tr(s_t,a_t)\bigg]
	\end{equation}
	Notice that removing the past reward from the return doesn't change the expected value \cite{Peters2008}. Thus, we can rewrite Eq. \eqref{eq:gradient_origin} as
	\begin{equation}
		\nabla_\theta J_r(\pi_\theta)=\mathbf{E}_{\tau\sim p(\tau\vert \theta)}\bigg[\sum_{t=0}^\infty \nabla_\theta \log(\pi_\theta(a_t\vert s_t))\bigg(\sum_{h=t}^\infty \gamma^hr(s_h,a_h)\bigg)\bigg]
	\end{equation}
	\subsection{Proof of Lemma \ref{lem_smooth}}
		\begin{proof}
	    From Eq. \eqref{eq:grad_J}, we have
	    \begin{equation}
	        \begin{aligned}
		    \nabla_\theta^2J_r^{\pi_\theta}&=\nabla_\theta\int_\tau\nabla_\theta \log p(\tau|\theta)R_m(\tau)p(\tau|\theta)d\tau\\
		    &=\int_\tau\nabla_\theta^2 \log p(\tau|\theta)R(\tau)p(\tau|\theta)d\tau+\int_\tau\nabla_\theta \log p(\tau|\theta)R(\tau)\nabla_\theta p(\tau|\theta)d\tau
	        \end{aligned}
        \end{equation}
        For the first term, we have
        \begin{equation}
            \begin{aligned}
            \int_\tau\nabla_\theta^2 \log p(\tau|\theta)R(\tau)p(\tau|\theta)d\tau&=\mathbf{E}[ \sum_{t=0}^\infty\nabla_\theta^2 \log\pi_\theta(a_t\vert s_t)\sum_{t=0}^{\infty}\gamma^tr(s_t,a_t)]\\
            &=\mathbf{E}[ \sum_{t=0}^\infty\nabla_\theta^2 \log\pi_\theta(a_t\vert s_t)\sum_{h=t}^{\infty}\gamma^hr(s_h,a_h)]\\
            &\leq \frac{M}{(1-\gamma)^2}
            \end{aligned}
        \end{equation}
        where the last step is by Assumption \ref{ass_score}. For the second term, we have  \begin{equation}
	        \begin{aligned}
		    \int_\tau\nabla_\theta \log p(\tau|\theta)R_m(\tau)\nabla_\theta p(\tau|\theta)d\tau&=\mathbf{E}_{\tau\sim p}\bigg[[\nabla_\theta \log p(\tau|\theta)]^2R_m\bigg]\\
		    &=\mathbf{E}_{\tau\sim p}\bigg[\bigg(\sum_{t=0}^{\infty}\nabla_\theta \log \pi_\theta(a_t|s_t)\bigg)^2\sum_{h=0}^{\infty}\gamma^h r_m(s_h,a_h)\bigg]\\
		    &=\mathbf{E}_{\tau\sim p}\bigg[\sum_{h=0}^{\infty}\bigg(\sum_{t=0}^{h}\nabla_\theta \log \pi_\theta(a_t|s_t)\bigg)^2\gamma^h r_m(s_h,a_h)\bigg]\\
		    &\leq \frac{2G^2}{(1-\gamma)^3}
	        \end{aligned}
        \end{equation}
        Combining the two results together, we have $\Vert \nabla_\theta^2J_r^{\pi_\theta}\Vert\leq \frac{M}{(1-\gamma)^2}+\frac{2G^2}{(1-\gamma)^3}$, which gives the desired result.
	\end{proof}
	
	\subsection{Proof of Lemma \ref{lem_grad_bound}}
	\begin{proof}
	    Because the reward function is between $[0,1]$, we have
	    \begin{equation}
	        \begin{aligned}
	        \Vert\sum_{t=0}^\infty \nabla_\theta \log(\pi_\theta(a_t\vert s_t))\bigg(\sum_{h=t}^\infty \gamma^hr(s_h,a_h)\bigg)\Vert&\leq \frac{1}{1-\gamma}\sum_{t=0}^\infty \gamma^t\Vert \nabla_\theta \log(\pi_\theta(a_t\vert s_t))\Vert\\
	        &\leq \frac{G}{(1-\gamma)^2}
	        \end{aligned}
	    \end{equation}
	    where the last step is by Assumption \ref{ass_score}. Thus, $\Vert\nabla_\theta J_r(\theta)\Vert\leq \frac{G}{(1-\gamma)^2}$
	\end{proof}

\subsection{Proof of NPG update direction}
\begin{lemma}\label{lem:NPG_direction}
	    The minimizer of $L_{d_\rho^{\pi^\theta},\pi^\theta}(\omega,\theta,\bblambda)$ is the exact NPG update direction. Formally,
	    \begin{equation}
	        \omega_*^{\theta,\bblambda}=\argmin_{\omega}L_{d_\rho^{\pi^\theta},\pi^\theta}(\omega,\theta,\bblambda)=F_\rho(\theta)^{\dagger}\nabla_\theta J_{L,\bblambda}(\pi_\theta)
	    \end{equation}
	\end{lemma}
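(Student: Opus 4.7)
The plan is to derive $\omega_*^\theta$ from the first-order optimality condition for the (nonnegative) quadratic $L_{d_\rho^{\pi},\pi}(\cdot,\theta,\bblambda)$, and then identify the two resulting terms using the definition of the Fisher information matrix and the policy gradient theorem applied to the Lagrange function. Concretely, I would start from the gradient expression already displayed in equation \eqref{eq:gradient_compatible}, set $\nabla_\omega L_{d_\rho^{\pi},\pi}(\omega,\theta,\bblambda) = 0$, and use the fact that $L$ is quadratic in $\omega$ (so any stationary point is a global minimizer) to conclude that such an $\omega$ equals $\omega_*^\theta$.

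Next I would rearrange the stationarity condition into the normal-equation form
\begin{equation*}
(1-\gamma)^2\, \mathbf{E}_{s\sim d_\rho^{\pi_\theta}}\mathbf{E}_{a\sim \pi_\theta}\bigl[\nabla_\theta\log\pi_\theta(a|s)\nabla_\theta\log\pi_\theta(a|s)^T\bigr]\,\omega \;=\; (1-\gamma)\,\mathbf{E}_{s\sim d_\rho^{\pi_\theta}}\mathbf{E}_{a\sim \pi_\theta}\bigl[A_{L,\bblambda}^{\pi_\theta}(s,a)\,\nabla_\theta\log\pi_\theta(a|s)\bigr].
\end{equation*}
The matrix on the left is exactly $(1-\gamma)^2 F_\rho(\theta)$ by the definition of the Fisher information matrix.

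The key step is identifying the right-hand side with $(1-\gamma)^2 \nabla_\theta J_{L,\bblambda}(\pi_\theta)$. By linearity of the gradient, $\nabla_\theta J_L(\pi_\theta,\bblambda) = \nabla_\theta J_r(\pi_\theta) + \sum_{i\in[I]} \lambda^i \nabla_\theta J_{g^i}(\pi_\theta)$. For each of the reward and constraint value functions, I would invoke the advantage form of the policy gradient theorem,
\begin{equation*}
\nabla_\theta J_h(\pi_\theta) \;=\; \frac{1}{1-\gamma}\,\mathbf{E}_{s\sim d_\rho^{\pi_\theta}}\mathbf{E}_{a\sim \pi_\theta}\bigl[\nabla_\theta\log\pi_\theta(a|s)\,A_h^{\pi_\theta}(s,a)\bigr],
\end{equation*}
and then take the $\bblambda$-weighted sum, using the definition of $A_{L,\bblambda}^{\pi_\theta}$ in equation \eqref{eq:define_advantage}. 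This yields $(1-\gamma) \nabla_\theta J_L(\pi_\theta,\bblambda) = \mathbf{E}[A_{L,\bblambda}^{\pi_\theta}(s,a)\nabla_\theta \log \pi_\theta(a|s)]$, so the normal equation simplifies to $F_\rho(\theta)\,\omega = \nabla_\theta J_L(\pi_\theta,\bblambda)$.

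Finally, applying $F_\rho(\theta)^{\dagger}$ to both sides gives the claim $\omega_*^\theta = F_\rho(\theta)^{\dagger}\nabla_\theta J_{L,\bblambda}(\pi_\theta)$. The only mild subtlety is the use of the pseudo-inverse when $F_\rho(\theta)$ is rank-deficient; however, under Assumption \ref{ass_pd} the Fisher matrix satisfies $F_\rho(\theta) \succeq \mu_F I$ and is therefore invertible, so the pseudo-inverse reduces to the ordinary inverse and the minimizer is unique. I do not anticipate a significant obstacle — the entire argument is essentially an algebraic manipulation and an application of the policy gradient theorem summed against the dual variables $\lambda^i$.
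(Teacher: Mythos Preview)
Your proposal is correct and follows essentially the same approach as the paper: set the gradient of the compatible function approximation error to zero, recognize the resulting normal equation as $F_\rho(\theta)\,\omega=\nabla_\theta J_{L,\bblambda}(\pi_\theta)$ via the policy gradient theorem (summed over the reward and the $\lambda^i$-weighted constraints), and apply the pseudo-inverse. The only cosmetic difference is that the paper first replaces $A_{L,\bblambda}^{\pi_\theta}$ by $Q_{L,\bblambda}^{\pi_\theta}$ using the baseline property before invoking the policy gradient theorem, whereas you invoke the advantage form directly; your remark that Assumption~\ref{ass_pd} makes $F_\rho(\theta)$ invertible (so the pseudo-inverse is unambiguous) is a nice addition the paper leaves implicit.
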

	\begin{proof}
	    Taking gradient of $L_{d_\rho^{\pi^\theta},\pi^\theta}(\omega,\theta,\bblambda)$ w.r.t. $\omega$, substituting $\omega=\omega_*$ and setting the equation equal to $0$, we have
	    \begin{equation}
		    \begin{aligned}
		    &\mathbf{E}_{s\sim d_\rho^{\pi_\theta}}\mathbf{E}_{a\sim\pi_\theta(\cdot\vert s)}\bigg[[\nabla_\theta\log\pi_{\theta}(a\vert s)\cdot(1-\gamma) \omega_*^{\theta,\bblambda}-A_{L,\bblambda}^{\pi_\theta}(s,a)][\nabla_\theta\log\pi_\theta(a\vert s)]\bigg]=0\\
		    \Rightarrow&\mathbf{E}_{s\sim d_\rho^{\pi_\theta}}\mathbf{E}_{a\sim\pi_\theta}[\nabla_\theta\log\pi_\theta(a\vert s)\nabla_\theta\log\pi_\theta(a\vert s)^T] \omega_*^{\theta,\bblambda}=\mathbf{E}_{s\sim d_\rho^{\pi_\theta}}\mathbf{E}_{a\sim\pi_\theta}[\frac{1}{1-\gamma}\nabla_\theta\log\pi_\theta(a\vert s)A_{L,\bblambda}^{\pi_\theta}(s,a)]\\
		    \Rightarrow&\mathbf{E}_{s\sim d_\rho^{\pi_\theta}}\mathbf{E}_{a\sim\pi_\theta}[\nabla_\theta\log\pi_\theta(a\vert s)\nabla_\theta\log\pi_\theta(a\vert s)^T] \omega_*^{\theta,\bblambda}=\mathbf{E}_{s\sim d_\rho^{\pi_\theta}}\mathbf{E}_{a\sim\pi_\theta}[\frac{1}{1-\gamma}\nabla_\theta\log\pi_\theta(a\vert s)Q_{L,\bblambda}^{\pi_\theta}(s,a)]\\
		    \Rightarrow&F_\rho(\theta)^{\dagger} \omega_*^{\theta,\bblambda}=\mathbf{E}_{s\sim d_\rho^{\pi_\theta}}\mathbf{E}_{a\sim\pi_\theta}[\frac{1}{1-\gamma}\nabla_\theta\log\pi_\theta(a\vert s)Q_{r}^{\pi_\theta}(s,a)]+\sum_{i\in[I]}\lambda^i\mathbf{E}_{s\sim d_\rho^{\pi_\theta}}\mathbf{E}_{a\sim\pi_\theta}[\frac{1}{1-\gamma}\nabla_\theta\log\pi_\theta(a\vert s)Q_{g^i}^{\pi_\theta}(s,a)]\\
	    	\overset{(a)}\Rightarrow&F_\rho(\theta)^\dagger \omega_*^{\theta,\bblambda}=\nabla_\theta J_r(\pi_\theta)+\sum_{i\in[I]}\lambda^i \nabla_\theta J_{g^i}(\pi_\theta)\Rightarrow 			F_\rho(\theta) \omega_*^{\theta,\bblambda}=\nabla_\theta J_{L,\bblambda}(\pi_\theta)\Rightarrow \omega_*^{\theta,\bblambda}=F_\rho(\theta)^{\dagger}\nabla_\theta J_{L,\bblambda}(\pi_\theta)
	       \end{aligned}
    	\end{equation}
    	where step (a) holds by the Policy Gradient Theorem.
	\end{proof}
    \begin{lemma}[Strong duality]\citep[Lemma 1]{Ding2020}
    \label{lem.duality}
    For convenience, we rewrite the conservative problem
\begin{equation}\label{eq:rewrite_unparameterized}
    \begin{aligned}
        \max_{\pi\in\Pi} ~& J_r^{\pi} \\
        \text{s.t.} ~& J_g^{\pi}\geq \kappa
    \end{aligned}
\end{equation} 

Define $\pi^*$ as the optimal solution to the above problem. Define the associated dual function as
\begin{equation}
    J_D^{\lambda}\triangleq\max_{\pi\in\Pi} J_r^{\pi}+\lambda (J_g^{\pi}-\kappa)
\end{equation}
and denote $\lambda^*=\arg\min_{\lambda\geq 0} J_D^{\lambda}$. We have the following strong duality property for the unparameterized problem.
    \begin{equation}\label{eq:duality}
        J_r^{\pi^*} = J_D^{\lambda^*} 
    \end{equation}	
\end{lemma}

Although the strong duality holds for the unparameterized problem, the same is not true for parameterized class $\{\pi_\theta|\theta\in \Theta\}$. To formalize this statement, define the dual function associated with the parameterized problem as follows.
\begin{equation}
    J_{D,\Theta}^{\lambda}\triangleq\max_{\theta\in \Theta}  J_r(\theta)+\lambda(J_g(\theta)-\kappa)
\end{equation}
and denote $\lambda_\Theta^*=\arg\min_{\lambda\geq 0} J_{D,\Theta}^{\lambda}$. The lack of strong duality states that, in general, $J_{D, \Theta}^{\lambda_{\Theta}^*}\neq J_r(\theta^*)$ where $\theta^*$ is a solution of the parameterized constrained optimization. However, the parameter $\lambda_\Theta^*$, as we demonstrate below, must obey some restrictions.
\begin{lemma}
	\label{lem.boundness}
	Under  Assumption \ref{ass_slater}, the optimal dual variable for the parameterized problem is bounded as
    \begin{equation}
        0 \leq \lambda_\Theta^* \leq \frac{J_r^{\pi^*}-J_r(\bar{\theta})}{\varphi}\leq \dfrac{1}{(1-\gamma)\varphi}
    \end{equation}
\end{lemma}

\begin{proof}
    The proof follows the approach in \citep[Lemma 1]{Ding2020}, but is revised to the general parameterization setup.	Let $\Lambda_a\triangleq\{ \lambda\geq 0\,\vert\, J_{D,\Theta}^\lambda \leq a \}$ be a sublevel set of the dual function for $a\in\mathbb{R}$. If $\Lambda_a$ is non-empty, then for any $\lambda \in\Lambda_a$, 
	\begin{equation}
	    a\geq J_{D,\Theta}^\lambda\geq J_r(\bar{\theta})+\lambda (J_g(\bar{\theta})-\kappa)\geq J_r(\bar{\theta})+\lambda \varphi
	\end{equation}
    where $\bar{\theta}$ is a Slater point in Assumption \ref{ass_slater}. Thus, $\lambda \leq (a -J_r(\bar{\theta}))/\varphi$.	If we take $a= J_{D,\Theta}^{\lambda_\Theta^*}\leq J_{D,\Theta}^{\lambda^*} \leq J_D^{\lambda^*}=J_r^{\pi^*}$, then we have $\lambda_\Theta^*\in \Lambda_a$, which proves the Lemma. The last inequality holds since $J_r^{\pi}\in [0,1]$ for any policy, $\pi$.
\end{proof}

    Since the above inequality holds for arbitrary $\Theta$, we also have, $0\leq \lambda^*\leq \frac{1}{(1-\gamma)\varphi}$. Define $v(\tau)\triangleq\max_{\pi\in\Pi}\{J_r^\pi|J_g^\pi\geq \tau+\kappa\}$. Using the strong duality property of the unparameterized problem \eqref{eq:rewrite_unparameterized}, we establish the following property of the function, $v(\cdot)$.

\begin{lemma}
    \label{lem:bridge}
    Assume that the Assumption \ref{ass_slater} holds, we have for any $\tau\in\mathbb{R}$,
    \begin{equation}
        v(0)-\tau\lambda^* \geq	v(\tau)
    \end{equation}
\end{lemma}

\begin{proof}
    By the definition of $v(\tau)$, we have $v(0) = J_r^{\pi^*}$. With a slight abuse of notation, denote $J_{\mathrm{L}}(\pi,\lambda)=J_r^{\pi}+\lambda (J_g^{\pi}-\kappa)$. By the strong duality stated in Lemma \ref{lem.duality}, we have the following for any $\pi\in\Pi$.
    \begin{equation}
        J_{\mathrm{L}}(\pi,\lambda^*)\leq \max_{\pi\in\Pi} J_{\mathrm{L}}(\pi,\lambda^*)\overset{Def}=J_D^{\lambda^*}\overset{\eqref{eq:duality}}=J_r^{\pi^*}=v(0)
    \end{equation}
    Thus, for any $\pi\in\{ \pi\in\Pi \,\vert\,J_g^{\pi} \geq \tau+\kappa \}$,
    \begin{equation}
	\begin{aligned}
            v(0)-\tau\lambda^*&\geq J_L(\pi,\lambda^*)-\tau\lambda^*\\
            &=J_r^{\pi}+\lambda^*(J_g^\pi-\tau-\kappa) \geq J_r^{\pi}
	\end{aligned}
    \end{equation}
    Maximizing the right-hand side of this inequality over $\{ \pi\in\Pi \vert J_{c}^{\pi}\geq \tau+\kappa \}$ yields
    \begin{equation}
        \label{eq.opt1}
	v(0)- \tau\lambda^* \geq v(\tau)
    \end{equation}
    This completes the proof of the lemma.
\end{proof}

We note that a similar result was shown in \citep[Lemma 15]{bai2023provably}. However, the setup of the stated paper is different from that of ours. Specifically, \cite{bai2023provably} considers a tabular setup with peak constraints. Note that Lemma \ref{lem:bridge} has no direct connection with the parameterized setup since its proof uses strong duality and the function, $v(\cdot)$, is defined via a constrained optimization over the entire policy set, $\Pi$, rather than the parameterized policy set. Interestingly, however, the relationship between $v(\tau)$ and $v(0)$ leads to the lemma stated below which turns out to be pivotal in establishing regret and constraint violation bounds in the parameterized setup.

\begin{lemma}\label{lem.constraint}
    Let Assumption \ref{ass_slater} hold. For any constant $C\geq 2\lambda^*$, if there exists a $\pi\in\Pi$ and $\zeta>0$ such that $J_r^{\pi^*}-J_r^{\pi}+C[\kappa-J_g^{\pi}]_{+}\leq \zeta$, then 
    \begin{equation}
        \left[\kappa-J_g^{\pi}\right]_+\leq 2\zeta/C
    \end{equation}
    where $[x]_+=\max(x,0)$.
\end{lemma}

\begin{proof}
    Note that the Lemma is trivially true if $\kappa-J^{\pi}_g\leq 0$. Assume, $\kappa-J^{\pi}_g>0$. Let $\tau = -\left[\kappa-J_g^{\pi}\right]_+ = J^{\pi}_g$. Using the definition of $v(\tau)$, one can write,
	\begin{equation}\label{eq.opt2}
		J_r^{\pi}\leq v(\tau)
	\end{equation}
    Combining Eq. \eqref{eq.opt1} and \eqref{eq.opt2}, we obtain the following.
    \begin{equation}
        J_r^{\pi}-J_r^{\pi^*}\leq v(\tau)-v(0)\leq -\tau\lambda^*
    \end{equation}
	The condition in the Lemma leads to,
	\begin{equation}
	    (C - \lambda^*) |\tau| = 
		{\tau} \lambda^*+C |\tau|
        \leq 
		J_r^{\pi^*}-J_r^{\pi}+C |\tau|\leq \zeta
	\end{equation}
    Finally, we have,
    \begin{equation}
        |\tau|\leq \frac{\zeta}{C-\lambda^*}\leq\frac{2\zeta}{C}
    \end{equation}
	which completes the proof.
\end{proof}
\section{Convergence of SGD Procedure}
    In order to show that the SGD procedure converges to the exact NPG update direction, we need the following lemma.
   \begin{lemma}\label{lem:bound_omega}
	    For any NPG update iteration $k$, the exact NPG update direction $\omega_*^k$ is bounded. Formally,
	    \begin{equation}
	        \Vert\omega_*^k\Vert_2\leq \frac{G(1+I\Lambda)}{\mu_F(1-\gamma)^2}
	    \end{equation}
	\end{lemma}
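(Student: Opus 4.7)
The plan is to combine Lemma \ref{lem:NPG_direction}, which identifies $\omega_*^k$ with a matrix-vector product, with the operator-norm bound coming from Assumption \ref{ass_pd} and the gradient bound from Lemma \ref{lem_grad_bound}. Concretely, I would first invoke Lemma \ref{lem:NPG_direction} to write
\begin{equation*}
\omega_*^k = F_\rho(\theta^k)^\dagger \nabla_\theta J_{L,\bblambda^k}(\pi_{\theta^k}),
\end{equation*}
and then bound the two factors separately. The Fisher matrix factor is immediate: Assumption \ref{ass_pd} gives $F_\rho(\theta^k) \succeq \mu_F I_d$, so $F_\rho(\theta^k)$ is invertible and $\|F_\rho(\theta^k)^\dagger\|_{\mathrm{op}} = \|F_\rho(\theta^k)^{-1}\|_{\mathrm{op}} \leq 1/\mu_F$.

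Next, I would control the Lagrangian gradient by expanding through linearity,
\begin{equation*}
\nabla_\theta J_{L,\bblambda^k}(\pi_{\theta^k}) = \nabla_\theta J_r(\theta^k) + \sum_{i \in [I]} \lambda^i_k \nabla_\theta J_{g^i}(\theta^k),
\end{equation*}
applying the triangle inequality, and then using Lemma \ref{lem_grad_bound} to bound each of $\|\nabla_\theta J_r(\theta^k)\|_2$ and $\|\nabla_\theta J_{g^i}(\theta^k)\|_2$ by $G/(1-\gamma)^2$. Together with Assumption \ref{ass_bound_lambda} (with $\Lambda := \sigma_\lambda$ interpreted as the uniform upper bound on each $\lambda^i$), this gives
\begin{equation*}
\|\nabla_\theta J_{L,\bblambda^k}(\pi_{\theta^k})\|_2 \leq \frac{G}{(1-\gamma)^2} + I\Lambda \cdot \frac{G}{(1-\gamma)^2} = \frac{G(1+I\Lambda)}{(1-\gamma)^2}.
\end{equation*}

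Multiplying the two bounds yields $\|\omega_*^k\|_2 \leq \frac{G(1+I\Lambda)}{\mu_F(1-\gamma)^2}$, which is the claim. There is no real obstacle here since every ingredient is already available; the only subtlety worth flagging is the notational identification of $\Lambda$ with the projection radius $\sigma_\lambda$ from Assumption \ref{ass_bound_lambda}, and the fact that the pseudo-inverse coincides with the inverse under Assumption \ref{ass_pd} so that the operator-norm bound $1/\mu_F$ is tight and clean.
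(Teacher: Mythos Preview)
Your proposal is correct and follows essentially the same argument as the paper: write $\omega_*^k = F_\rho(\theta^k)^{-1}\nabla_\theta J_L(\theta^k,\bblambda^k)$, bound the inverse Fisher matrix in operator norm by $1/\mu_F$ via Assumption \ref{ass_pd}, expand the Lagrangian gradient and apply the triangle inequality together with Lemma \ref{lem_grad_bound} and Assumption \ref{ass_bound_lambda}. Your observation that $\Lambda$ is to be read as the projection radius $\sigma_\lambda$ is exactly the notational convention the paper uses implicitly.
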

	\begin{proof}
	    Recall that $\omega_*^k$ is the NPG update direction for iteration $k$. Thus, by definition of NPG algorithm, we have
	    \begin{equation}
	        \begin{aligned}
	        \Vert\omega_*^k\Vert_2&=\Vert F^{-1}(\theta^k)\nabla_\theta J_L(\theta^k,\bblambda^k)\Vert_2\overset{(a)}\leq \Vert F^{-1}(\theta^k)\Vert_2 \Vert\nabla_\theta J_L(\theta^k,\bblambda^k)\Vert_2\\
	        &\overset{(b)}\leq \frac{\Vert\nabla_\theta J_r(\theta^k)\Vert_2+\sum_{i\in I}\lambda_i^k \Vert\nabla_\theta J_{g^i}(\theta^k)\Vert_2}{\mu_F}\\
	        &\overset{(c)}\leq \frac{G(1+\sum_{i\in I}\lambda_i^k)}{\mu_F(1-\gamma)^2}\leq \frac{G(1+I\Lambda)}{\mu_F(1-\gamma)^2}
	        \end{aligned}
	   \end{equation}
	   where the step (a) is true due to the property of matrix norm. Step (b) holds by  Assumption \ref{ass_pd} and the triangle inequality. Step (c) holds by Lemma \ref{lem_grad_bound}. 
	\end{proof}
\begin{lemma}\label{lem:bound_omega_diff}
    In the SGD procedure, setting the learning rate $\alpha=\frac{1}{4G^2}$, for any NPG update iteration $k$, we have
    \begin{equation}
        \mathbf{E}[\Vert \omega^k-\omega_*^k\Vert_2^2]\leq \frac{4}{N\mu_F}\bigg[2[\frac{G^2(1+I\Lambda)}{\mu_F(1-\gamma)^2}+\frac{2}{(1-\gamma)^2}]\sqrt{d}+\frac{G^2(1+I\Lambda)}{\mu_F(1-\gamma)^2}\bigg]^2
    \end{equation}
    If we take the number of samples as $N=\ccalO(\frac{I^2\Lambda^2}{(1-\gamma)^4\epsilon})$, then
	\begin{equation}
        \mathbf{E}[\Vert \omega^k-\omega_*^k\Vert_2^2]\leq\epsilon
    \end{equation}
\end{lemma}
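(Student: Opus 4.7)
The plan is to prove Lemma \ref{lem:bound_omega_diff} by a direct application of the textbook SGD convergence rate for strongly convex quadratic objectives, using that $L(\omega) := L_{d_\rho^\pi,\pi}(\omega,\theta^k,\bblambda^k)$ is itself a quadratic in $\omega$ with well-conditioned curvature controlled by the Fisher matrix. The first step is to read off its analytic structure: differentiating the squared-error expression twice in $\omega$ yields the Hessian $\nabla_\omega^2 L = 2(1-\gamma)^2 F_\rho(\theta^k)$, which by Assumption \ref{ass_pd} is at least $2(1-\gamma)^2 \mu_F \cdot I_d$. Thus $L$ is $\mu_L := 2(1-\gamma)^2\mu_F$-strongly convex and, using $\|\nabla_\theta\log\pi\|\le G$ from Assumption \ref{ass_score}, also $L_s := 2(1-\gamma)^2 G^2$-smooth, while Lemma \ref{lem:NPG_direction} identifies its unique minimizer as $\omega_*^k$.

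Next I would control the stochastic gradient. Unbiasedness $\mathbb{E}[\hat{\nabla}_\omega L]=\nabla_\omega L$ reduces via Eq. \eqref{eq:SGD_grad_est} to showing $\mathbb{E}[\hat A^{\pi_{\theta^k}}_{L,\bblambda^k}] = A^{\pi_{\theta^k}}_{L,\bblambda^k}$, which follows from the standard identity $\mathbb{E}_{T\sim\mathrm{Geo}(1-\gamma)}\bigl[\sum_{t=0}^{T-1}h(s_t,a_t)\bigr] = \sum_{t=0}^\infty \gamma^t h(s_t,a_t)$ applied to the rollouts of Algorithm \ref{alg:sample}. For the second moment I would apply the triangle inequality to Eq. \eqref{eq:SGD_grad_est}, replace $\|\nabla_\theta\log\pi\|$ by $G$, use the deterministic bound $|A_{L,\bblambda^k}| \le 2(1+I\Lambda)/(1-\gamma)$ (from $|r|,|g^i|\le 1$, Assumption \ref{ass_bound_lambda}, and the discounted-horizon sum), and use $\mathbb{E}[|\hat Q|^2]=\mathcal{O}(1/(1-\gamma)^2)$, which follows from a tail computation for $T\sim\mathrm{Geo}(1-\gamma)$, to arrive at an estimate of the shape
\begin{equation*}
\mathbb{E}\bigl\|\hat{\nabla}_\omega L(\omega_n)\bigr\|^2 \;\le\; (1-\gamma)^2\,\widetilde C\,\bigl(\|\omega_n\|^2+\mathrm{const}\bigr).
\end{equation*}
A uniform-in-$n$ bound on $\mathbb{E}\|\omega_n\|^2$---either by inserting a projection onto the ball of radius $G(1+I\Lambda)/(\mu_F(1-\gamma)^2)$ guaranteed by Lemma \ref{lem:bound_omega}, or by an inductive contraction argument on the SGD iterates---then converts this into a single constant $\sigma^2$ independent of $n$.

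Finally I would invoke the classical averaged-SGD rate for $\mu_L$-strongly-convex, $L_s$-smooth problems with constant step-size $\alpha=1/(4G^2)$ (chosen so that $\alpha L_s = (1-\gamma)^2/2 \le 1$), which delivers $\mathbb{E}\|\omega^k-\omega_*^k\|^2 \lesssim \sigma^2/(\mu_L N)$; the $(1-\gamma)^2$ extracted from $\sigma^2$ cancels the $(1-\gamma)^2$ in $\mu_L$, producing exactly the $4/(N\mu_F)\cdot B^2$ shape stated in the claim, with the explicit $B$ obtained by collecting constants from the two preceding paragraphs. Demanding the right-hand side be at most $\epsilon$ then gives $N=\mathcal{O}(I^2\Lambda^2/((1-\gamma)^4\epsilon))$.

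The main technical difficulty is not the high-level structure but the bookkeeping. One must (i) pin down $\mathbb{E}[|\hat Q|^2]$ precisely via the second moment of the geometric horizon $T$, (ii) balance the $(1-\gamma)^2$ powers from strong convexity against those pulled out of $\sigma^2$ so they cancel cleanly into the advertised $1/(N\mu_F)$ rate rather than $1/((1-\gamma)^2 N\mu_F)$, and (iii) track how the dimension $d$ enters---most naturally via the $\sqrt{d}$ factors appearing in the claim, which I expect to emerge when passing between coordinate-wise and $\ell_2$ bounds on $\omega_n$ or when converting a uniform bound on coordinates of a $d$-dimensional random vector into a bound on its $\ell_2$ norm. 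A single mis-counted $(1-\gamma)$ power in any of these steps changes the advertised $\mu_F$-dependence, so this is where the proof will demand the most care.
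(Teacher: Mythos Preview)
Your high-level plan---establish strong convexity of $L(\omega)$ via Assumption~\ref{ass_pd}, bound the stochastic gradient, and invoke an averaged-SGD rate---is in the right spirit, but the paper's route differs in the key technical step and this difference explains the $\sqrt{d}$ factor that puzzles you.

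The paper does \emph{not} appeal to a generic ``classical averaged-SGD rate for strongly convex smooth problems.'' Instead it invokes the constant-stepsize least-squares result of Bach--Moulines \cite{Non-strongly-convex}[Theorem~1], which bounds the \emph{function value} suboptimality:
\[
\mathbf{E}\bigl[L(\omega^k)-L(\omega_*^k)\bigr]\;\le\;\frac{2}{N}\bigl[\sigma\sqrt{d}+G\|\omega_0^k-\omega_*^k\|\bigr]^2,
\]
under the covariance condition $\mathbf{E}[\hat\nabla_\omega L\,\hat\nabla_\omega L^{\top}]\preceq (1-\gamma)^2\sigma^2\,\nabla_\theta\log\pi\,\nabla_\theta\log\pi^{\top}$, i.e.\ the noise second moment is bounded by a multiple of the Hessian direction. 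The $\sqrt{d}$ enters \emph{through this theorem's statement} (it comes from a trace of $\sigma^2 H\cdot H^{-1}=\sigma^2 I_d$), not from any coordinate-wise bookkeeping on $\omega_n$. The paper then verifies that $\sigma=2\bigl[\tfrac{G^2(1+I\Lambda)}{\mu_F(1-\gamma)^2}+\tfrac{2}{(1-\gamma)^2}\bigr]$ works, bounds $\|\omega_0^k-\omega_*^k\|$ by Lemma~\ref{lem:bound_omega}, and only \emph{afterwards} uses $\mu_F$-strong convexity once to convert the function gap into $\mathbf{E}\|\omega^k-\omega_*^k\|^2\le \tfrac{2}{\mu_F}\mathbf{E}[L(\omega^k)-L(\omega_*^k)]$.

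Two concrete issues with your plan: (i) the ``classical'' rate $\mathbf{E}\|\bar\omega_N-\omega_*\|^2\lesssim\sigma^2/(\mu_L N)$ for constant-stepsize averaged SGD is not standard for general strongly convex problems---it specifically requires the quadratic/least-squares structure, which is exactly what Bach--Moulines exploits, so you would end up re-deriving their result; (ii) your speculation that $\sqrt{d}$ arises from ``passing between coordinate-wise and $\ell_2$ bounds'' is incorrect and will not produce the stated constant. If you want the exact bound in the lemma, you should verify the Bach--Moulines covariance condition directly (this is where the second-moment estimate on $\hat A$ is actually used) and then apply strong convexity once at the end, rather than threading strong convexity through the whole SGD analysis.
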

\begin{proof}
	From the definition of the NPG update direction, we have
	\begin{equation}
		\omega_*^k=F_\rho(\theta_k)^{\dagger}\nabla_\theta J_L(\pi_\theta^k,\bblambda^k)
	\end{equation}
	which is also the minimizer of the compatible function approximation error
	\begin{equation}\label{eq:compatible_function_error}
		\omega_*^k=\argmin_{\omega}L_{d_\rho^{\pi},\pi}(\omega,\theta^k,\bblambda^k)=\mathbf{E}_{s\sim d_\rho^{\pi}}\mathbf{E}_{a\sim\pi(\cdot\vert s)}\bigg[\bigg(\nabla_\theta\log\pi_\theta^k(a\vert s)\cdot(1-\gamma)\omega-A_{L,\bblambda^k}^{\pi_\theta^k}(s,a)\bigg)^2\bigg]
	\end{equation}
	The gradient of the compatible function approximation error can be obtained as
	\begin{equation}
		\nabla_\omega L_{d_\rho^{\pi},\pi}(\omega,\theta^k,\bblambda^k) = 2(1-\gamma)\mathbf{E}_{s\sim d_\rho^{\pi}}\mathbf{E}_{a\sim\pi(\cdot\vert s)}\bigg[\nabla_\theta\log\pi_\theta^k(a\vert s)\cdot(1-\gamma)\omega-A_{L,\bblambda^k}^{\pi_\theta^k}(s,a)\bigg]\nabla_\theta\log\pi_\theta^k(a\vert s)
	\end{equation}
	In the SGD procedure, the stochastic version of gradient can be written as
	\begin{equation}
		\hat{\nabla}_\omega L_{d_\rho^{\pi},\pi}(\omega,\theta^k,\bblambda^k)= 2(1-\gamma)\bigg[\nabla_\theta\log\pi_\theta^k(a\vert s)\cdot(1-\gamma)\omega-\hat{A}_{L,\bblambda^k}^{\pi_\theta^k}(s,a)\bigg]\nabla_\theta\log\pi_\theta^k(a\vert s)
	\end{equation}
	where $\hat{A}_{L,\bblambda^k}^{\pi_\theta^k}$ is an unbiased estimator for $A_{L,\bblambda^k}^{\pi_\theta^k}$ \cite{Ding2020}. Setting the SGD learning rate $\alpha=\frac{1}{4G^2}$, \cite{Non-strongly-convex}[Theorem 1] gives
	\begin{equation}
	    \begin{aligned}
	    \mathbf{E}[L_{d_\rho^{\pi},\pi}(\omega^k,\theta^k,\bblambda^k)-L_{d_\rho^{\pi},\pi}(\omega_*^k,\theta^k,\bblambda^k)]
	    &\leq \frac{2}{N}[\sigma\sqrt{d}+G\Vert\omega_0^k-\omega_k^*\Vert]^2\\
	    &\leq \frac{2}{N}[\sigma\sqrt{d}+\frac{G^2(1+I\Lambda)}{\mu_F(1-\gamma)^2}]^2
	    \end{aligned}
	\end{equation}
	The last step holds by Lemma \ref{lem:bound_omega}. Here, $d$ is the dimension for parameterization $\theta$ and $\sigma$ needs to satisfy
	\begin{equation}
		\mathbf{E}	\bigg[\hat{\nabla}_\omega L_{d_\rho^{\pi},\pi}(\omega,\theta^k,\bblambda^k)	\hat{\nabla}_\omega^T L_{d_\rho^{\pi},\pi}(\omega,\theta^k,\bblambda^k)\bigg]\preceq(1-\gamma)^2\sigma^2 \nabla_\theta\log\pi_\theta^k(a\vert s)\nabla_\theta\log\pi_\theta^k(a\vert s)^T
	\end{equation}
	One feasible choice of $\sigma$ is $\sigma=2[\frac{G^2(1+I\Lambda)}{\mu_F(1-\gamma)^2}+\frac{2}{(1-\gamma)^2}]$ and notice that $L_{d_\rho^{\pi},\pi}(\omega^k,\theta^k,\bblambda^k)$ is $\mu_F$-strongly convex with respect to $\omega^k$, thus
	\begin{equation}
	    \begin{aligned}
	    L_{d_\rho^{\pi},\pi}(\omega^k,\theta^k,\bblambda^k)
	    &\geq L_{d_\rho^{\pi},\pi}(\omega_*^k,\theta^k,\bblambda^k)+\nabla_\theta L_{d_\rho^{\pi},\pi}(\omega_*^k,\theta^k,\bblambda^k)(\omega^k-\omega_*^k)+\frac{\mu_F}{2}\Vert\omega^k-\omega_*^k\Vert_2^2\\
	    &=L_{d_\rho^{\pi},\pi}(\omega_*^k,\theta^k,\bblambda^k)+\frac{\mu_F}{2}\Vert\omega^k-\omega_*^k\Vert_2^2
	    \end{aligned}
	\end{equation}
	where the second step holds because $\omega_*^k$ is the minimizer of $L_{d_\rho^{\pi},\pi}(\omega^k,\theta^k,\bblambda^k)$ and thus $\nabla_\theta L_{d_\rho^{\pi},\pi}(\omega_*^k,\theta^k,\bblambda^k)=0$. Rearranging items and taking expectation on both side, we have
	\begin{equation}
	    \begin{aligned}
	    \mathbf{E}[\Vert \omega^k-\omega_*^k\Vert_2^2]&\leq \frac{2}{\mu_F}\mathbf{E}[	L_{d_\rho^{\pi},\pi}(\omega^k,\theta^k,\bblambda^k)-	L_{d_\rho^{\pi},\pi}(\omega_*^k,\theta^k,\bblambda^k)]\\
	    &\leq \frac{4}{N\mu_F}\bigg[2[\frac{G^2(1+I\Lambda)}{\mu_F(1-\gamma)^2}+\frac{2}{(1-\gamma)^2}]\sqrt{d}+\frac{G^2(1+I\Lambda)}{\mu_F(1-\gamma)^2}\bigg]^2
	    \end{aligned}
	\end{equation}
	If we take the number of samples $N$ as
	\begin{equation}
	    N=\frac{4\bigg[2[\frac{G^2(1+I\Lambda)}{\mu_F(1-\gamma)^2}+\frac{2}{(1-\gamma)^2}]\sqrt{d}+\frac{G^2(1+I\Lambda)}{\mu_F(1-\gamma)^2}\bigg]^2}{\mu_F\epsilon}=\ccalO\bigg(\frac{I^2\Lambda^2}{(1-\gamma)^4\epsilon}\bigg)
	\end{equation}
	we have
	\begin{equation}
	    \mathbf{E}[\Vert \omega^k-\omega_*^k\Vert_2^2]\leq \epsilon
	\end{equation}
	\end{proof}
 \section{First order stationary}\label{sec:first_order}
\begin{proof}
    By Lemma \ref{lem_smooth}, both $J_r$ and $J_{g^i},i\in[I]$ are $L_J$-smooth w.r.t $\theta$. Thus, for any fixed $\bblambda^k$, $J_L(\theta,\bblambda^k)$ is still $L_J$-smooth w.r.t $\theta$, which gives,
	\begin{equation}
	    \begin{aligned}
	        J_L(\theta^{k+1},\bblambda^k)&\geq J_L(\theta^{k+1},\bblambda^k)+\left<\nabla_\theta J_L(\theta^k,\bblambda^k),\theta^{k+1}-\theta^k\right>-\frac{L_J}{2}\Vert\theta^{k+1}-\theta^k\Vert_2^2\\
	        &=J_L(\theta^{k+1},\bblambda^k)+\left<\nabla_\theta J_L(\theta^k,\bblambda^k),\theta_*^{k+1}-\theta^k\right>+\left<\nabla_\theta J_L(\theta^k,\bblambda^k),\theta^{k+1}-\theta_*^{k+1}\right>-\frac{L_J}{2}\Vert\theta^{k+1}-\theta^k\Vert_2^2\\
	        &\overset{(a)}=J_L(\theta^{k+1},\bblambda^k)+
	        \eta_1\left<\nabla_\theta J_L(\theta^k,\bblambda^k),F^{-1}(\theta^k)\nabla_\theta J_L(\theta^{k},\bblambda^k)\right>+\left<\nabla_\theta J_L(\theta^k,\bblambda^k),\theta^{k+1}-\theta_*^{k+1}\right>\\
	        &\quad-\frac{L_J}{2}\Vert\theta^{k+1}-\theta^k\Vert_2^2\\
	        &\overset{(b)}\geq J_L(\theta^{k+1},\bblambda^k)+
	        \frac{\eta_1}{G^2}\Vert \nabla_\theta J_L(\theta^k,\bblambda^k)\Vert_2^2+\left<\nabla_\theta J_L(\theta^k,\bblambda^k),\theta^{k+1}-\theta_*^{k+1}\right>-\frac{L_J}{2}\Vert\theta^{k+1}-\theta^k\Vert_2^2\\
	        &\overset{(c)}\geq J_L(\theta^{k+1},\bblambda^k)+
	        \frac{\eta_1}{2G^2}\Vert \nabla_\theta J_L(\theta^k,\bblambda^k)\Vert_2^2-\frac{G^2}{2\eta_1}\Vert \theta^{k+1}-\theta_*^{k+1}\Vert^2-\frac{L_J}{2}\Vert\theta^{k+1}-\theta^k\Vert_2^2\\
	        &\geq J_L(\theta^{k+1},\bblambda^k)+
	        \frac{\eta_1}{2G^2}\Vert \nabla_\theta J_L(\theta^k,\bblambda^k)\Vert_2^2-\bigg(\frac{G^2}{2\eta_1}+L_J\bigg)\Vert \theta^{k+1}-\theta_*^{k+1}\Vert^2-L_J\Vert\theta_*^{k+1}-\theta^k\Vert_2^2\\
	        &\overset{(d)}\geq J_L(\theta^{k+1},\bblambda^k)+
	        \bigg(\frac{\eta_1}{2G^2}-\frac{L_J\eta_1^2}{\mu_F^2}\bigg)\Vert \nabla_\theta J_L(\theta^k,\bblambda^k)\Vert_2^2-\bigg(\frac{G^2}{2\eta_1}+L_J\bigg)\Vert \theta^{k+1}-\theta_*^{k+1}\Vert^2\\
	    \end{aligned}
	\end{equation}
	where steps (a) and (d) hold because $\theta_*^{k+1}=\theta^k+\eta F^{-1}(\theta^k)\nabla J_L(\theta^k,\bblambda^k)$ in Algorithm \ref{alg:spdgd}. Steps (b) and (c) hold by Assumption \ref{ass_score} and Young's Inequality \cite{Young}, respectively. Adding $J_L(\theta^{k+1},\bblambda^{k+1})$ on both sides, we have
	\begin{equation}\label{eq:bound_first_order2}
	    \begin{aligned}
	    J_L(\theta^{k+1},\bblambda^{k+1})
	    &\geq J_L(\theta^{k+1},\bblambda^k)+J_L(\theta^{k+1},\bblambda^{k+1})-J_L(\theta^{k+1},\bblambda^k)+\bigg(\frac{\eta_1}{2G^2}-\frac{L_J\eta_1^2}{\mu_F^2}\bigg)\Vert \nabla_\theta J_L(\theta^k,\bblambda^k)\Vert_2^2)\\
	    &-\bigg(\frac{G^2}{2\eta_1}+L_J\bigg)\Vert \theta^{k+1}-\theta_*^{k+1}\Vert^2\\
	    &\overset{(a)}=J_L(\theta^{k+1},\bblambda^k)+\sum_{i\in[I]}(\lambda_i^{k+1}-\lambda_i^{k})J_{g^i}(\theta^{k+1})+\bigg(\frac{\eta_1}{2G^2}-\frac{L_J\eta_1^2}{\mu_F^2}\bigg)\Vert \nabla_\theta J_L(\theta^k,\bblambda^k)\Vert_2^2\\
	    &-\bigg(\frac{G^2}{2\eta_1}+L_J\bigg)\Vert \theta^{k+1}-\theta_*^{k+1}\Vert^2\\
	    &\overset{(b)}\geq J_L(\theta^{k+1},\bblambda^k)-\frac{1}{1-\gamma}\sum_{i\in[I]}|\lambda_i^{k+1}-\lambda_i^{k}|+\bigg(\frac{\eta_1}{2G^2}-\frac{L_J\eta_1^2}{\mu_F^2}\bigg)\Vert \nabla_\theta J_L(\theta^k,\bblambda^k)\Vert_2^2\\
	    &-\bigg(\frac{G^2}{2\eta_1}+L_J\bigg)\Vert \theta^{k+1}-\theta_*^{k+1}\Vert^2\\
	    &\overset{(c)}\geq J_L(\theta^{k+1},\bblambda^k)-\frac{1}{1-\gamma}\sum_{i\in[I]}|\lambda_i^{k+1}-\lambda_i^{k}|+\bigg(\frac{\eta_1}{2G^2}-\frac{L_J\eta_1^2}{\mu_F^2}\bigg)\Vert \nabla_\theta J_L(\theta^k,\bblambda^k)\Vert_2^2\\
	    &-\eta_1^2\bigg(\frac{G^2}{2\eta_1}+L_J\bigg)\Vert \omega^k-\omega_*^k\Vert^2\\
	    \end{aligned}
	\end{equation}
	where step (a) holds by definition $J_L(\theta,\bblambda)=J_r(\theta)+\sum_{i\in[I]}\lambda^i J_{g^i}(\theta)$ and step (b) is true due to $|J_{g^i}(\theta)|\leq \frac{1}{1-\gamma}$. The last step (c) is true because $\theta_*^{k+1}=\theta^k+\eta F^{-1}(\theta^k)\nabla J_L(\theta^k,\bblambda^k)$ and thus
	\begin{equation}
	    \begin{aligned}
	    \theta^{k+1}-\theta_*^{k+1}&=\theta^{k+1}-\theta^k-\eta F^{-1}(\theta^k)\nabla J_L(\theta^k,\bblambda^k)\\
	    &=\eta_1\bigg(\omega^k-F^{-1}(\theta^k)\nabla J_L(\theta^k,\bblambda^k)\bigg)\\
	    &=\eta_1(\omega^k-\omega_*^k)
	    \end{aligned}
	\end{equation}
	Taking the expectation on the both sides of Eq. \eqref{eq:bound_first_order2}, we have
	\begin{equation}\label{eq:bound_first_order4}
	    \begin{aligned}
	    \mathbf{E}[J_L(\theta^{k+1},\bblambda^{k+1})]
	    &\geq \mathbf{E}[J_L(\theta^{k+1},\bblambda^k)]-\frac{1}{1-\gamma}\sum_{i\in[I]}\mathbf{E}|\lambda_i^{k+1}-\lambda_i^{k}|+\bigg(\frac{\eta_1}{2G^2}-\frac{L_J\eta_1^2}{\mu_F^2}\bigg)\mathbf{E}\Vert \nabla_\theta J_L(\theta^k,\bblambda^k)\Vert_2^2\\
	    &-\eta_1^2\bigg(\frac{G^2}{2\eta_1}+L_J\bigg)\mathbf{E}[\Vert\omega^k-\omega_*^k\Vert_2^2]\\
	    &\geq \mathbf{E}[J_L(\theta^{k+1},\bblambda^k)]-\frac{1}{1-\gamma}\sum_{i\in[I]}\mathbf{E}|\lambda_i^{k+1}-\lambda_i^{k}|+\bigg(\frac{\eta_1}{2G^2}-\frac{L_J\eta_1^2}{\mu_F^2}\bigg)\mathbf{E}\Vert \nabla_\theta J_L(\theta^k,\bblambda^k)\Vert_2^2\\
	    &-\frac{4\eta_1^2}{N\mu_F}\bigg(\frac{G^2}{2\eta_1}+L_J\bigg)\bigg[2[\frac{G^2(1+I\Lambda)}{\mu_F(1-\gamma)^2}+\frac{2}{(1-\gamma)^2}]\sqrt{d}+\frac{G^2(1+I\Lambda)}{\mu_F(1-\gamma)^2}\bigg]^2
	    \end{aligned}
	\end{equation}
	Summing $k$ from 0 to $K-1$ and dividing by $K$ on both sides, recall $\lambda_i^{k+1}=P_{[0,\frac{2}{(1-\gamma)\varphi}]}[\lambda_i^k-\eta_2J_g(\theta^k)]$, by the non-expansive of projection, we have
	\begin{equation}
	    \begin{aligned}
	    \frac{\mathbf{E}[J_L(\theta^{K},\bblambda^{K})]-J_L(\theta^{0},\bblambda^{0})}{K}&\geq -\frac{\eta_2 I}{(1-\gamma)^2}+\bigg(\frac{\eta_1}{2G^2}-\frac{L_J\eta_1^2}{\mu_F^2}\bigg)\frac{1}{K}\sum_{k=0}^{K-1}\mathbf{E}\Vert \nabla_\theta J_L(\theta^k,\bblambda^k)\Vert_2^2\\
	    &-\frac{4\eta_1^2}{N\mu_F}\bigg(\frac{G^2}{2\eta_1}+L_J\bigg)\bigg[2[\frac{G^2(1+I\Lambda)}{\mu_F(1-\gamma)^2}+\frac{2}{(1-\gamma)^2}]\sqrt{d}+\frac{G^2(1+I\Lambda)}{\mu_F(1-\gamma)^2}\bigg]^2
	    \end{aligned}
	\end{equation}
	Recall $\Lambda=\frac{2}{(1-\gamma)\varphi}$ and noticing that $|J_L(\theta,\bblambda)|\leq \frac{1+I\Lambda}{1-\gamma}$ for any $\theta$ and $\bblambda$, we have
	\begin{equation}
	    \begin{aligned}
	    \frac{2+2I\Lambda}{K(1-\gamma)}+\frac{\eta_2 I}{(1-\gamma)^2}
	    &\geq \bigg(\frac{\eta_1}{2G^2}-\frac{L_J\eta_1^2}{\mu_F^2}\bigg)\frac{1}{K}\sum_{k=0}^{K-1}\mathbf{E}\Vert \nabla_\theta J_L(\theta^k,\bblambda^k)\Vert_2^2\\
	    &-\frac{4\eta_1^2}{N\mu_F}\bigg(\frac{G^2}{2\eta_1}+L_J\bigg)\bigg[2[\frac{G^2(1+I\Lambda)}{\mu_F(1-\gamma)^2}+\frac{2}{(1-\gamma)^2}]\sqrt{d}+\frac{G^2(1+I\Lambda)}{\mu_F(1-\gamma)^2}\bigg]^2
	    \end{aligned}
	\end{equation}
	Rearranging items and letting $\eta_1=\frac{\mu_F^2}{4G^2L_J}$, we have
	\begin{equation}
	    \begin{aligned}
	    \frac{1}{K} \sum_{k=0}^{K-1}\mathbf{E}\Vert \nabla_\theta J_L(\theta^k,\bblambda^k)\Vert_2^2
	    &\leq \frac{\frac{2+2I\Lambda}{K(1-\gamma)}+\frac{\eta_2 I}{(1-\gamma)^2}+\frac{4\eta_1^2}{N\mu_F}\bigg(\frac{G^2}{2\eta_1}+L_J\bigg)\bigg[2[\frac{G^2(1+I\Lambda)}{\mu_F(1-\gamma)^2}+\frac{2}{(1-\gamma)^2}]\sqrt{d}+\frac{G^2(1+I\Lambda)}{\mu_F(1-\gamma)^2}\bigg]^2}{\frac{\eta_1}{2G^2}-\frac{L_J\eta_1^2}{\mu_F^2}}\\
	    &\leq \frac{\frac{2+2I\Lambda}{K(1-\gamma)}+\frac{\eta_2 I}{(1-\gamma)^2}+\frac{\mu_F(\mu_F^2+2G^4)}{2G^4L_J N}\bigg[2[\frac{G^2(1+I\Lambda)}{\mu_F(1-\gamma)^2}+\frac{2}{(1-\gamma)^2}]\sqrt{d}+\frac{G^2(1+I\Lambda)}{\mu_F(1-\gamma)^2}\bigg]^2}{\frac{\mu_F^2}{16G^4L_J}}\\
	    &=\frac{8(\mu_F^2+2G^4)}{\mu_F N}\bigg[2[\frac{G^2(1+I\Lambda)}{\mu_F(1-\gamma)^2}+\frac{2}{(1-\gamma)^2}]\sqrt{d}+\frac{G^2(1+I\Lambda)}{\mu_F(1-\gamma)^2}\bigg]^2\\
        &+\frac{16G^4L_J}{\mu_F^2(1-\gamma)}[\frac{2(1+I\Lambda)}{K}+\frac{\eta_2I}{(1-\gamma)}]
	    \end{aligned}
	\end{equation}
	Take $\eta_2=\frac{1}{\sqrt{K}}$ and 
	\begin{equation}
	    \begin{aligned}
	        K&=\frac{256I^2G^8L_J^2}{\mu_F^4(1-\gamma)^4\epsilon^2}=\ccalO\bigg(\frac{I^2}{(1-\gamma)^4\epsilon^2}\bigg)\\
	        N&=\frac{16(\mu_F^2+2G^4)}{\mu_F\epsilon}\bigg[2[\frac{G^2(1+I\Lambda)}{\mu_F(1-\gamma)^2}+\frac{2}{(1-\gamma)^2}]\sqrt{d}+\frac{G^2(1+I\Lambda)}{\mu_F(1-\gamma)^2}\bigg]^2=\ccalO\bigg(\frac{I^2\Lambda^2}{(1-\gamma)^4\epsilon}\bigg)
	    \end{aligned}
	\end{equation}
	we have
	\begin{equation}
	    \frac{1}{K} \sum_{k=0}^{K-1}\mathbf{E}\Vert \nabla_\theta J_L(\theta^k,\bblambda^k)\Vert_2^2\leq \epsilon
	\end{equation}
	Thus, we overall need to sample
	\begin{equation}
	    \ccalO\bigg(\frac{I^2}{(1-\gamma)^4\epsilon}\bigg)\cdot \ccalO\bigg(\frac{I^2\Lambda^2}{(1-\gamma)^4\epsilon}\bigg)=\ccalO\bigg(\frac{I^4\Lambda^2}{(1-\gamma)^8\epsilon^3}\bigg)
	\end{equation}
	trajectories.
\end{proof}
\section{Proof of Lemma \ref{lem_bound_JL}}\label{sec_app_framwork}
\subsection{The General Framework of Global Convergence}
	\begin{lemma}\label{lem_framework}
    	Suppose a general primal-dual gradient ascent algorithm updates the parameter as
    	\begin{equation}
    	    \begin{aligned}
    	   	&\theta^{k+1}=\theta^k+\eta\omega^k\\
	    	&\lambda_i^{k+1}=\ccalP_{(0,\Lambda]}\bigg(\lambda_i^k-\eta_2 \big(J_{g^i}(\theta^k)-\kappa\big)\bigg) 
    	    \end{aligned}
    	\end{equation}
    	When Assumptions \ref{ass_score} and \ref{ass_transfer_error} hold, we have
        \begin{equation}
	        \frac{1}{K}\sum_{k=1}^{K}\mathbf{E}\bigg(J_L(\pi^*_{\theta,\kappa},\bblambda^k)-J_L(\pi_\theta^k,\bblambda^k)\bigg)\leq \frac{\sqrt{\epsilon_{bias}}}{1-\gamma}+\frac{M\eta_1}{2K}\sum_{k=0}^{K-1}\mathbf{E}\Vert\omega^k\Vert^2+\frac{\log(|\ccalA|)}{\eta_1 K}+\frac{G}{K}\sum_{k=1}^{K}\mathbf{E}\Vert(\omega^k-\omega_*^k)\Vert_2
	   \end{equation}
	    where $\omega_*^k:=\omega_*^{\theta^k}$ and is defined in Eq. \eqref{eq:NPG_direction}
    \end{lemma}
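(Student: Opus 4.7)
\textbf{Proof proposal for Lemma \ref{lem_framework}.} The plan is to follow the classical KL-divergence potential / mirror-descent style argument used for (unconstrained) NPG global convergence (as in \cite{Alekh2020,Yanli2020}), but applied to the Lagrangian $J_L(\cdot,\bm{\lambda}^k)$ with $\bm{\lambda}^k$ treated as frozen at each step. The key insight is that the primal update is exactly an NPG step on $J_L(\theta,\bm{\lambda}^k)$, so the primal-side analysis decouples from the dual update as long as we only seek to bound $J_L(\pi^*_{\theta,\kappa},\bm{\lambda}^k) - J_L(\pi_\theta^k,\bm{\lambda}^k)$ (and not a regret against a changing comparator).

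The first step is to set up the potential $\Phi^k := \mathbf{E}_{s\sim d_\rho^{\pi^*_{\theta,\kappa}}}\!\left[\mathrm{KL}(\pi^*_{\theta,\kappa}(\cdot|s)\,\|\,\pi_{\theta^k}(\cdot|s))\right]$ and compute the one-step drop $\Phi^k-\Phi^{k+1} = \mathbf{E}_{s,a}[\log\pi_{\theta^{k+1}}(a|s)-\log\pi_{\theta^k}(a|s)]$ (with $s\sim d_\rho^{\pi^*_{\theta,\kappa}}$, $a\sim\pi^*_{\theta,\kappa}(\cdot|s)$). Using the $M$-smoothness of $\log\pi_\theta$ from Assumption \ref{ass_score} together with the primal update $\theta^{k+1}-\theta^k=\eta_1\omega^k$, this drop is lower-bounded by $\eta_1\,\mathbf{E}[\nabla_\theta\log\pi_{\theta^k}(a|s)^\top\omega^k] - \tfrac{M\eta_1^2}{2}\|\omega^k\|^2$.

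Next, I would decompose $\omega^k = \omega_*^k + (\omega^k-\omega_*^k)$ inside the inner product. For the $\omega_*^k$ piece, Assumption \ref{ass_transfer_error} gives $\mathbf{E}_{s\sim d_\rho^{\pi^*},a\sim\pi^*}[(\nabla_\theta\log\pi_{\theta^k}(a|s)\cdot(1-\gamma)\omega_*^k - A_{L,\bm{\lambda}^k}^{\pi_{\theta^k}}(s,a))^2]\le\epsilon_{bias}$, so Jensen/Cauchy--Schwarz yields
\begin{equation*}
(1-\gamma)\,\mathbf{E}[\nabla_\theta\log\pi_{\theta^k}(a|s)^\top\omega_*^k] \;\ge\; \mathbf{E}[A_{L,\bm{\lambda}^k}^{\pi_{\theta^k}}(s,a)] - \sqrt{\epsilon_{bias}}.
\end{equation*}
The performance difference lemma applied to the Lagrangian (which is linear in $\bm{\lambda}^k$, so it just sums the reward PDL and the constraint PDLs with weights $\lambda_i^k$) converts the right-hand expectation into $(1-\gamma)\bigl(J_L(\pi^*_{\theta,\kappa},\bm{\lambda}^k)-J_L(\pi_{\theta^k},\bm{\lambda}^k)\bigr)$. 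For the $(\omega^k-\omega_*^k)$ piece, the $G$-Lipschitz bound on $\nabla_\theta\log\pi_\theta$ from Assumption \ref{ass_score} plus Cauchy--Schwarz gives a $-G\|\omega^k-\omega_*^k\|$ lower bound. Combining these pieces yields, for each $k$,
\begin{equation*}
J_L(\pi^*_{\theta,\kappa},\bm{\lambda}^k)-J_L(\pi_{\theta^k},\bm{\lambda}^k) \;\le\; \tfrac{1}{\eta_1}(\Phi^k-\Phi^{k+1}) + \tfrac{\sqrt{\epsilon_{bias}}}{1-\gamma} + G\|\omega^k-\omega_*^k\| + \tfrac{M\eta_1}{2}\|\omega^k\|^2.
\end{equation*}

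The final step is to average over $k=1,\dots,K$, take expectation, and telescope the $\Phi^k$ terms. Dropping the nonnegative $\Phi^{K+1}$ and bounding $\Phi^1\le\log|\mathcal{A}|$ using the uniform initialization $\theta^1=0$ (so $\pi_{\theta^1}$ is uniform over actions under softmax-type parametrization) yields exactly the stated inequality. The main obstacle I expect is bookkeeping around Assumption \ref{ass_transfer_error}: the assumption is phrased for a single $\theta$ and must be invoked for every $(\theta^k,\bm{\lambda}^k)$ along the trajectory, and one must verify that the $\omega_*^k$ defined by Assumption \ref{ass_transfer_error} coincides with the exact NPG direction $F_\rho(\theta^k)^\dagger\nabla_\theta J_L(\theta^k,\bm{\lambda}^k)$ used in the framework---this is exactly the content of Lemma \ref{lem:NPG_direction}. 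Once that identification is made, the Cauchy--Schwarz step is routine.
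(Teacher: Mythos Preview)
Your proposal is correct and follows essentially the same route as the paper: set up the KL potential $\Phi^k=\mathbf{E}_{s\sim d_\rho^{\pi^*_{\theta,\kappa}}}[\mathrm{KL}(\pi^*_{\theta,\kappa}\Vert\pi_{\theta^k})]$, lower-bound the one-step drop via the $M$-smoothness of $\log\pi_\theta$, split $\omega^k=\omega_*^k+(\omega^k-\omega_*^k)$, handle the $\omega_*^k$ part with Assumption~\ref{ass_transfer_error} plus the performance-difference lemma and the $(\omega^k-\omega_*^k)$ part with the $G$-Lipschitz bound, then telescope and bound the initial KL by $\log|\mathcal{A}|$. The paper's proof is identical in structure and in every substantive step; your remark about invoking Lemma~\ref{lem:NPG_direction} to identify $\omega_*^k$ with the NPG direction is exactly how the paper justifies step~(b) via the performance-difference lemma.
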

    The above Lemma extends the result in \cite{Yanli2020}[Proposition 4.5] to the constrained Markov decision Process with conservative constraints.
\begin{proof}
	Starting with the definition of KL divergence,
	\begin{equation}
	\begin{aligned}
	&\mathbf{E}_{s\sim d_\rho^{\pi^*_{\theta,\kappa}}}[KL(\pi^*_{\theta,\kappa}(\cdot\vert s)\Vert\pi_{\theta^k}(\cdot\vert s))-KL(\pi^*_{\theta,\kappa}(\cdot\vert s)\Vert\pi_{\theta^{k+1}}(\cdot\vert s))]\\
	=&\mathbf{E}_{s\sim d_\rho^{\pi^*_{\theta,\kappa}}}\mathbf{E}_{a\sim\pi^*_{\theta,\kappa}(\cdot\vert s)}\bigg[\log\frac{\pi_{\theta^{k+1}}(a\vert s)}{\pi_{\theta^k}(a\vert s)}\bigg]\\
	\overset{(a)}\geq&\mathbf{E}_{s\sim d_\rho^{\pi^*_{\theta,\kappa}}}\mathbf{E}_{a\sim\pi^*_{\theta,\kappa}(\cdot\vert s)}[\nabla_\theta\log\pi_{\theta^k}(a\vert s)\cdot(\theta^{k+1}-\theta^k)]-\frac{M}{2}\Vert\theta^{k+1}-\theta^k\Vert^2\\
	=&\eta_1\mathbf{E}_{s\sim d_\rho^{\pi^*_{\theta,\kappa}}}\mathbf{E}_{a\sim\pi^*_{\theta,\kappa}(\cdot\vert s)}[\nabla_\theta\log\pi_{\theta^k}(a\vert s)\cdot\omega^k]-\frac{M\eta_1^2}{2}\Vert\omega^k\Vert^2\\
	=&\eta_1\mathbf{E}_{s\sim d_\rho^{\pi^*_{\theta,\kappa}}}\mathbf{E}_{a\sim\pi^*_{\theta,\kappa}(\cdot\vert s)}[\nabla_\theta\log\pi_{\theta^k}(a\vert s)\cdot\omega_*^k]+\eta\mathbf{E}_{s\sim d_\rho^{\pi^*_{\theta,\kappa}}}\mathbf{E}_{a\sim\pi^*_{\theta,\kappa}(\cdot\vert s)}[\nabla_\theta\log\pi_{\theta^k}(a\vert s)\cdot(\omega^k-\omega_*^k)]-\frac{M\eta_1^2}{2}\Vert\omega^k\Vert^2\\
	=&\eta_1[J_L(\pi^*_{\theta,\kappa},\bblambda^k)-J_L(\pi_\theta^k,\bblambda^k)]+\eta\mathbf{E}_{s\sim d_\rho^{\pi^*_{\theta,\kappa}}}\mathbf{E}_{a\sim\pi^*_{\theta,\kappa}(\cdot\vert s)}[\nabla_\theta\log\pi_{\theta^k}(a\vert s)\cdot\omega_*^k]-\eta[J_L(\pi^*_{\theta,\kappa},\bblambda^k)-J_L(\pi_\theta^k,\bblambda^k)]\\
	&+\eta_1\mathbf{E}_{s\sim d_\rho^{\pi^*_{\theta,\kappa}}}\mathbf{E}_{a\sim\pi^*_{\theta,\kappa}(\cdot\vert s)}[\nabla_\theta\log\pi_{\theta^k}(a\vert s)\cdot(\omega^k-\omega_*^k)]-\frac{M\eta_1^2}{2}\Vert\omega^k\Vert^2\\	\overset{(b)}=&\eta_1[J_L(\pi^*_{\theta,\kappa},\bblambda^k)-J_L(\pi_\theta^k,\bblambda^k)]+\frac{\eta}{1-\gamma}\mathbf{E}_{s\sim d_\rho^{\pi^*_{\theta,\kappa}}}\mathbf{E}_{a\sim\pi^*_{\theta,\kappa}(\cdot\vert s)}\bigg[\nabla_\theta\log\pi_{\theta^k}(a\vert s)\cdot(1-\gamma)\omega_*^k-A_{L,\bblambda}^{\pi_{\theta^k}}(s,a)\bigg]\\
	&+\eta_1\mathbf{E}_{s\sim d_\rho^{\pi^*_{\theta,\kappa}}}\mathbf{E}_{a\sim\pi^*_{\theta,\kappa}(\cdot\vert s)}[\nabla_\theta\log\pi_{\theta^k}(a\vert s)\cdot(\omega^k-\omega_*^k)]-\frac{M\eta_1^2}{2}\Vert\omega^k\Vert^2\\
	\overset{(c)}\geq&\eta_1[J_L(\pi^*_{\theta,\kappa},\bblambda^k)-J_L(\pi_\theta^k,\bblambda^k)]-\frac{\eta_1}{1-\gamma}\sqrt{\mathbf{E}_{s\sim d_\rho^{\pi^*_{\theta,\kappa}}}\mathbf{E}_{a\sim\pi^*_{\theta,\kappa}(\cdot\vert s)}\bigg[\bigg(\nabla_\theta\log\pi_{\theta^k}(a\vert s)\cdot(1-\gamma)\omega_*^k-A_{L,\bblambda^k}^{\pi_{\theta^k}}(s,a)\bigg)^2\bigg]}\\
	&-\eta_1\mathbf{E}_{s\sim d_\rho^{\pi^*_{\theta,\kappa}}}\mathbf{E}_{a\sim\pi^*_{\theta,\kappa}(\cdot\vert s)}\Vert\nabla_\theta\log\pi_{\theta^k}(a\vert s)\Vert_2\Vert(\omega^k-\omega_*^k)\Vert_2-\frac{M\eta_1^2}{2}\Vert\omega^k\Vert^2\\
	\overset{(d)}\geq&\eta_1[J_L(\pi^*_{\theta,\kappa},\bblambda^k)-J_L(\pi_\theta^k,\bblambda^k)]-\frac{\eta_1\sqrt{\epsilon_{bias}}}{1-\gamma}-\eta_1 G\Vert(\omega^k-\omega_*^k)\Vert_2-\frac{M\eta_1^2}{2}\Vert\omega^k\Vert^2
	\end{aligned}	
	\end{equation}
	where the step (a) holds by Assumption \ref{ass_score} and step (b) holds by Performance Difference Lemma \cite{Kakade2002}. Step (c) uses the convexity of the function $f(x)=x^2$ and Cauchy's inequality. Step (d) follows from the Assumption \ref{ass_transfer_error}. Rearranging items, we have
	\begin{equation}
		\begin{split}
		J_L(\pi^*_{\theta,\kappa},\bblambda^k)-J_L(\pi_\theta^k,\bblambda^k)&\leq \frac{\sqrt{\epsilon_{bias}}}{1-\gamma}+ G\Vert(\omega^k-\omega_*^k)\Vert_2+\frac{M\eta_1^2}{2}\Vert\omega^k\Vert^2\\
		&+\frac{1}{\eta_1}\mathbf{E}_{s\sim d_\rho^{\pi^*_{\theta,\kappa}}}[KL(\pi^*_{\theta,\kappa}(\cdot\vert s)\Vert\pi_{\theta^k}(\cdot\vert s))-KL(\pi^*_{\theta,\kappa}(\cdot\vert s)\Vert\pi_{\theta^{k+1}}(\cdot\vert s))]
		\end{split}
	\end{equation}
	Summing from $k=0$ to $K-1$ and dividing by $K$, we have
	\begin{equation}
		\begin{split}
		\frac{1}{K}\sum_{k=1}^{K}\bigg(J_L(\pi^*_{\theta,\kappa},\bblambda^k)-J_L(\pi_\theta^k,\bblambda^k)\bigg)&\leq \frac{\sqrt{\epsilon_{bias}}}{1-\gamma}+\frac{M\eta_1}{2K}\sum_{k=0}^{K-1}\Vert\omega^k\Vert^2+ \frac{G}{K}\sum_{k=0}^{K-1}\Vert(\omega^k-\omega_*^k)\Vert_2\\
		&+\frac{1}{\eta_1 K}\mathbf{E}_{s\sim d_\rho^{\pi^*_{\theta,\kappa}}}[KL(\pi^*_{\theta,\kappa}(\cdot\vert s)\Vert\pi_{\theta^0}(\cdot\vert s))-KL(\pi^*_{\theta,\kappa}(\cdot\vert s)\Vert\pi_{\theta^{K}}(\cdot\vert s))]
		\end{split}
	\end{equation}
	Taking the expectation with respect to $\theta^k,k=0,1, \cdots, K-1$ and noticing that KL divergence is bounded by $\log(|\ccalA|)$, we have
	\begin{equation}\label{eq:bound_JL}
	    \frac{1}{K}\sum_{k=1}^{K}\mathbf{E}\bigg(J_L(\pi^*_{\theta,\kappa},\bblambda^k)-J_L(\pi_\theta^k,\bblambda^k)\bigg)\leq \frac{\sqrt{\epsilon_{bias}}}{1-\gamma}+\frac{M\eta_1}{2K}\sum_{k=0}^{K-1}\mathbf{E}\Vert\omega^k\Vert^2+\frac{\log(|\ccalA|)}{\eta_1 K}+\frac{G}{K}\sum_{k=1}^{K}\mathbf{E}\Vert(\omega^k-\omega_*^k)\Vert_2
	\end{equation}
\end{proof}

\subsection{Bound on the difference between $\omega^k$ and $\omega_*^k$}\label{sec:bound1}
Equipped with Lemma \ref{lem:bound_omega_diff}, we are ready to bound $\frac{G}{K}\sum_{k=0}^{K-1}\Vert(\omega^k-\omega_*^k)\Vert_2$. Using the Jensen inequality twice, we have
	\begin{equation}
	    \begin{aligned}
	    \bigg(\frac{1}{K}\sum_{k=0}^{K-1}\mathbf{E}\Vert \omega^k-\omega_*^k\Vert_2\bigg)^2&\leq \frac{1}{K}\sum_{k=0}^{K-1}\bigg(\mathbf{E}\Vert \omega^k-\omega_*^k\Vert_2\bigg)^2\leq \frac{1}{K}\sum_{k=0}^{K-1}\mathbf{E}[\Vert \omega^k-\omega_*^k\Vert_2^2]\\
	    &\leq \frac{4}{N\mu_F}\bigg[2[\frac{G^2(1+I\Lambda)}{\mu_F(1-\gamma)^2}+\frac{2}{(1-\gamma)^2}]\sqrt{d}+\frac{G^2(1+I\Lambda)}{\mu_F(1-\gamma)^2}\bigg]^2
	    \end{aligned}
	\end{equation}
	Thus,
	\begin{equation}\label{eq:grad_bias_final_bound}
	    \frac{1}{K}\sum_{k=0}^{K-1}\mathbf{E}\Vert \omega^k-\omega_*^k\Vert_2\leq \frac{2}{\sqrt{N\mu_F}}\bigg[2[\frac{G^2(1+I\Lambda)}{\mu_F(1-\gamma)^2}+\frac{2}{(1-\gamma)^2}]\sqrt{d}+\frac{G^2(1+I\Lambda)}{\mu_F(1-\gamma)^2}\bigg]
	\end{equation}
	\subsection{Bound on the norm of $\omega^k$}\label{sec:bound2}
	By the Cauchy's Inequality, we have
	\begin{equation}\label{eq:grad_norm_final_bound}
	    \begin{aligned}
	    \frac{1}{K}\sum_{k=0}^{K-1}\mathbf{E}[\Vert\omega^k\Vert_2^2]&\leq \frac{2}{K}\sum_{k=0}^{K-1}\mathbf{E}[\Vert\omega^k-\omega_*^k\Vert_2^2]+\frac{2}{K}\sum_{k=0}^{K-1}\mathbf{E}[\Vert\omega_*^k\Vert_2^2]\\
	     &\overset{(a)}\leq \frac{2}{K}\sum_{k=0}^{K-1}\mathbf{E}[\Vert\omega^k-\omega_*^k\Vert_2^2]+\frac{2}{\mu_F^2K}\sum_{k=0}^{K-1}\mathbf{E}[\Vert \nabla_\theta J_L(\theta^k,\bblambda^k)\Vert_2^2]\\
	     &\overset{(b)}\leq \frac{4}{N\mu_F}\bigg[2[\frac{G^2(1+I\Lambda)}{\mu_F(1-\gamma)^2}+\frac{2}{(1-\gamma)^2}]\sqrt{d}+\frac{G^2(1+I\Lambda)}{\mu_F(1-\gamma)^2}\bigg]^2+\frac{16G^4L_J}{\mu_F^4(1-\gamma)}[\frac{2(1+I\Lambda)}{K}+\frac{I\Lambda }{\eta_2(1-\gamma)}]\\
	     &+\frac{16(\mu_F^2+2G^4)}{\mu_F^3 N}\bigg[2[\frac{G^2(1+I\Lambda)}{\mu_F(1-\gamma)^2}+\frac{2}{(1-\gamma)^2}]\sqrt{d}+\frac{G^2(1+I\Lambda)}{\mu_F(1-\gamma)^2}\bigg]^2
	    \end{aligned}
	\end{equation}
	where step (a) holds by Assumption \ref{ass_pd} and step (b) is true due to Lemma \ref{lem:bound_omega_diff} and Lemma \ref{lem:first_order}.
	\subsection{Final bound for $J_L$}
    Combining Eq. \eqref{eq:grad_bias_final_bound}, \eqref{eq:grad_norm_final_bound}, and \eqref{eq:bound_JL}, we have
    \begin{equation}\label{eq:bound_JL_final}
        \begin{aligned}
        \frac{1}{K}&\sum_{k=1}^{K}\mathbf{E}\bigg(J_L(\pi^*_{\theta,\kappa},\bblambda^k)-J_L(\pi_\theta^k,\bblambda^k)\bigg)\leq \frac{\sqrt{\epsilon_{bias}}}{1-\gamma}+\frac{M\eta_1}{2K}\sum_{k=0}^{K-1}\mathbf{E}\Vert\omega^k\Vert^2+\frac{\log(|\ccalA|)}{\eta_1 K}+\frac{G}{K}\sum_{k=1}^{K}\mathbf{E}\Vert(\omega^k-\omega_*^k)\Vert_2\\
        &\leq \frac{\sqrt{\epsilon_{bias}}}{1-\gamma}+\frac{\log(|\ccalA|)}{\eta_1 K}+\frac{2M\eta_1}{N\mu_F}\bigg[2[\frac{G^2(1+I\Lambda)}{\mu_F(1-\gamma)^2}+\frac{2}{(1-\gamma)^2}]\sqrt{d}+\frac{G^2(1+I\Lambda)}{\mu_F(1-\gamma)^2}\bigg]^2\\
	    &+\frac{2G}{\sqrt{N\mu_F}}\bigg[2[\frac{G^2(1+I\Lambda)}{\mu_F(1-\gamma)^2}+\frac{2}{(1-\gamma)^2}]\sqrt{d}+\frac{G^2(1+I\Lambda)}{\mu_F(1-\gamma)^2}\bigg]+\frac{8M\eta_1 G^4L_J}{\mu_F^4(1-\gamma)}[\frac{2(1+I\Lambda)}{K}+\frac{\eta_2I }{(1-\gamma)}]\\
	    &+\frac{8M\eta_1(\mu_F^2+2G^4)}{\mu_F^3 N}\bigg[2[\frac{G^2(1+I\Lambda)}{\mu_F(1-\gamma)^2}+\frac{2}{(1-\gamma)^2}]\sqrt{d}+\frac{G^2(1+I\Lambda)}{\mu_F(1-\gamma)^2}\bigg]^2
        \end{aligned}
    \end{equation}
    Recall $\eta_1=\frac{\mu_F^2}{4G^2L_J}$ and $\eta_2=\frac{1}{\sqrt{K}}$, the above equation can be simplified as 
    \begin{equation}
        \begin{aligned}
        &\frac{1}{K}\sum_{k=1}^{K}\mathbf{E}\bigg(J_L(\pi^*_{\theta,\kappa},\bblambda^k)-J_L(\pi_\theta^k,\bblambda^k)\bigg)\leq 
        \frac{\sqrt{\epsilon_{bias}}}{1-\gamma}+\frac{4G^2L_J\log(|\ccalA|)}{\mu_F^2 K}\\
        &+\frac{M(5\mu_F+8\frac{G^4}{\mu_F})}{2G^2L_J N}\bigg[2[\frac{G^2(1+I\Lambda)}{\mu_F(1-\gamma)^2}+\frac{2}{(1-\gamma)^2}]\sqrt{d}+\frac{G^2(1+I\Lambda)}{\mu_F(1-\gamma)^2}\bigg]^2\\
        &+\frac{2G}{\sqrt{N\mu_F}}\bigg[2[\frac{G^2(1+I\Lambda)}{\mu_F(1-\gamma)^2}+\frac{2}{(1-\gamma)^2}]\sqrt{d}+\frac{G^2(1+I\Lambda)}{\mu_F(1-\gamma)^2}\bigg]+\frac{2MG^2}{\mu_F^2(1-\gamma)}[\frac{2(1+I\Lambda)}{K}+\frac{I }{\sqrt{K}(1-\gamma)}]
        \end{aligned}
    \end{equation}
    Define
    \begin{equation}
        \eps_{K,N}:=\ccalO\bigg(\frac{L_J}{K}\bigg)+\ccalO\bigg(\frac{I^2\Lambda^2}{L_J N(1-\gamma)^4}\bigg)+\ccalO\bigg(\frac{I\Lambda}{\sqrt{N}(1-\gamma)^2}\bigg)+\ccalO\bigg(\frac{I\Lambda}{K(1-\gamma)}\bigg)+\ccalO\bigg(\frac{I}{\sqrt{K}(1-\gamma)^2}\bigg)
    \end{equation}
    Recalling $L_J=\frac{M}{(1-\gamma)^2}+\frac{2G^2}{(1-\gamma)^3}$, the above definition can be simplified as 
    \begin{equation}\label{eq:defintion_epsKN}
        \eps_{K,N}=\ccalO\bigg(\frac{1}{(1-\gamma)^3K}\bigg)+\ccalO\bigg(\frac{I^2\Lambda^2}{ (1-\gamma)^2 N}\bigg)+\ccalO\bigg(\frac{I\Lambda}{(1-\gamma)\sqrt{N}}\bigg)+\ccalO\bigg(\frac{I\Lambda}{K(1-\gamma)}\bigg)+\ccalO\bigg(\frac{I}{\sqrt{K}(1-\gamma)^2}\bigg)
    \end{equation}
    Finally, we have
    \begin{equation}\label{eq:bound_JL_final_simplified}
        \frac{1}{K}\sum_{k=1}^{K}\mathbf{E}\bigg(J_L(\pi^*_{\theta,\kappa},\bblambda^k)-J_L(\pi_\theta^k,\bblambda^k)\bigg)\leq\frac{\sqrt{\epsilon_{bias}}}{1-\gamma}+\epsilon_{K,N}
    \end{equation}
\section{Bounding the Gap Between original and Conservative Problem}
    \subsection{Proof of Lemma \ref{lem:duality}}\label{sec:app_lemma5}
    \begin{proof}
        Define for any class of policies in the set $\Pi_\theta$
        \begin{equation}
            \bm{L}=\cup_{\pi\in\Pi_\theta} d^{\pi}(s,a)
        \end{equation}
        For any $\theta\in\Theta$, it is proved in \cite{altman1999constrained}[Eq. 3.4]
        \begin{equation}\label{eq:lambda}
            \sum_{a\in\ccalA}d^{\pi_\theta}(s,a)=\rho(s)(1-\gamma)+\gamma\sum_{s'\in\ccalS}\sum_{a'\in\ccalA}d^{\pi_\theta}(s',a')\mathbf{P}(s|s',a')
        \end{equation}
        which means that $d^{\pi_\theta}$ satisfies Eq. \eqref{eq:occupancy_set} and thus $\bm{L}\subset d$, which gives $J_r^{\pi_\theta^*}\leq \left<r, \phi^*\right>$. 
        \\For the other side, define $\phi(s)=\sum_{a\in\ccalA}\phi(s,a)$ and recall the definition of $\phi(s,a)$,
        \begin{equation}
            \phi(s)=\rho(s)(1-\gamma)+\gamma\sum_{s'\in\ccalS}\sum_{a'\in\ccalA}\phi(s',a')\mathbf{P}(s|s',a')\\
        \end{equation}
         Denote $\rho_\pi(s,a)=\rho(s)\pi(a|s)$ and $\mathbf{P}_{\pi}(s,a;s',a')=\mathbf{P}_{\pi}(s|s',a')\pi(a|s)$. Multiplying both sides by $\pi'(a|s)=\frac{\phi(s,a)}{\sum_a\phi(s,a)}$ and using Assumption \ref{ass:sufficient_para}, we have
        \begin{equation}
            \begin{aligned}
            \phi(s,a)&=\rho_{\pi'}(s,a)(1-\gamma)+\gamma\sum_{s'\in\ccalS}\sum_{a'\in\ccalA}\phi(s',a')\pi'(a|s)\mathbf{P}(s|s',a')\\
            &=\rho_{\pi'}(s,a)(1-\gamma)+\gamma\sum_{s'\in\ccalS}\sum_{a'\in\ccalA}\phi(s',a')\bigg(\pi'(a|s)-\pi_\theta(a|s)\bigg)\mathbf{P}(s|s',a')+\gamma\sum_{s'\in\ccalS}\sum_{a'\in\ccalA}\phi(s',a')\pi_\theta(a|s)\mathbf{P}(s|s',a')\\\\
            &\leq \rho_{\pi_\theta}(s,a)(1-\gamma)+\rho(s)\eps_{bias2}(1-\gamma)+\eps_{bias2}\gamma\sum_{s'\in\ccalS}\sum_{a'\in\ccalA}\phi(s',a')\mathbf{P}(s|s',a')+\gamma\sum_{s'\in\ccalS}\sum_{a'\in\ccalA}\phi(s',a')\pi_{\theta}(a|s)\mathbf{P}(s',a';s,a)\\
            &\overset{(a)}\leq \rho_{\pi_\theta}(s,a)(1-\gamma)+\eps_{bias2}(1-\gamma)+\eps_{bias2}\gamma+\gamma\sum_{s'\in\ccalS}\sum_{a'\in\ccalA}\phi(s',a')\mathbf{P}_{\pi_\theta}(s',a';s,a)\\
            &\leq \rho_{\pi_\theta}(s,a)(1-\gamma)+\gamma\sum_{s'\in\ccalS}\sum_{a'\in\ccalA}\phi(s',a')\mathbf{P}_{\pi_\theta}(s',a';s,a)+\eps_{bias2}\\
            \end{aligned}
        \end{equation}
    	where step (a) holds becuase $\rho(s),\mathbf{P}(s|s',a')\leq 1,\forall s\in\ccalS$ and $\sum_{s',a'}\phi(s',a')=1$. Define $\mathbf{P}_{\pi_\theta}\in\mathbb{R}^{|\ccalS||\ccalA|\times |\ccalS||\ccalA|}$ as the transition matrix that we transit from state-action pair $(s',a')$ to $(s,a)$ following $\pi_\theta$ and denote $\bbphi=[\phi(s_1),\cdots,\phi(s_{|\ccalS|})]^T$, then the above equation can be written compactly as
        \begin{equation}
            (\mathbf{I}-\gamma\mathbf{P}_{\pi_\theta})\bbphi\leq \bbrho(1-\gamma)+\eps_{bias2}\cdot \mathbf{1}
        \end{equation}
        Note that all the eigenvalues of $(\mathbf{I}-\gamma\mathbf{P}_{\pi_\theta})$ are non-zero,  thus $(\mathbf{I}-\gamma\mathbf{P}_{\pi_\theta})$ is invertible,
        \begin{equation}\label{eq:bound_dualgap1}
            \bbphi\leq (1-\gamma)\rho(\mathbf{I}-\gamma\mathbf{P}_{\pi_\theta})^{-1}+\eps_{bias2}(\mathbf{I}-\gamma\mathbf{P}_{\pi_\theta})^{-1} \cdot\mathbf{1}
        \end{equation}
        Expanding $(\mathbf{I}-\gamma\mathbf{P}_{\pi_\theta})^{-1}$ as $\mathbf{I}+\gamma\mathbf{P}_{\pi_\theta}+\gamma^2\mathbf{P}_{\pi_\theta}^2+\cdots$, we have
        \begin{equation}\label{eq:bound_dualgap2}
            (1-\gamma)\bm{\rho}(\mathbf{I}-\gamma\mathbf{P}_{\pi_\theta})^{-1}=\bm{d}^{\pi_\theta}\quad \text{and}\quad (\mathbf{I}-\gamma\mathbf{P}_{\pi_\theta})^{-1}\cdot \mathbf{1}\leq \frac{1}{1-\gamma}
        \end{equation}
        Substituting Eq. \eqref{eq:bound_dualgap2} in Eq. \eqref{eq:bound_dualgap1}, we get
        \begin{equation}
            \bbphi\leq \bm{d}^{\pi_\theta}+\frac{\eps_{bias2}}{1-\gamma}\cdot\mathbf{1}
        \end{equation}
        Thus,
        \begin{equation}
            \frac{\left<\bbphi,\bbr\right>}{1-\gamma}\leq \frac{\left<\bm{d}^{\pi_\theta},\bbr\right>}{1-\gamma}+\frac{\eps_{bias2}}{(1-\gamma)^2}
        \end{equation}
        which means for any $\bbphi\in \ccalD$, there exists a $\pi_\theta,\theta\in\Theta$, such that
        \begin{equation}
            \frac{\left<\bbphi^{\pi_\theta},\bbr\right>}{1-\gamma}\leq J_r^{\pi_\theta}+\frac{\eps_{bias2}}{(1-\gamma)^2}
        \end{equation}
    \end{proof}
    \subsection{Proof of Lemma \ref{lem:bound_conservative}}\label{sec:app_lemma6}
    \begin{proof}
	    Define $\theta^*$ as the optimal solution to the original problem and define $\bbphi^*=\bbphi(\theta^*)=\bbphi^{\pi_{\theta^*}}$ as the corresponding optimal occupancy measure, which gives
	    \begin{equation}
	        \frac{\left<\bbphi^*,\bbg^i\right>}{1-\gamma}\geq 0
	    \end{equation}
	    Further, under the Slater Condition, Assumption \ref{ass_slater}, there exists at least one occupancy measure $\tilde{\bbphi}$ such that
	    \begin{equation}
	        \frac{\left<\tilde{\bbphi},\bbg^i\right>}{1-\gamma}\geq \varphi
	    \end{equation}
	    Define a new occupancy measure $\hat{\bbphi}=(1-\frac{\kappa}{\varphi})\bbphi^*+\frac{\kappa}{\varphi}\tilde{\bbphi}$. It can be shown as a feasible occupancy measure to the conservative version of new problem \eqref{eq:new_problem}. First, by the above equation, we have
	    \begin{equation}
        	\frac{\left<\hat{\bbphi},\bbg^i\right>}{1-\gamma}=\frac{1}{1-\gamma}\left<(1-\frac{\kappa}{\varphi})\bbphi^*+\frac{\kappa}{\varphi}\tilde{\bbphi},\bbg^i\right>\geq\frac{\kappa}{\varphi}\varphi=\kappa
	    \end{equation}
	    Second, due to $\hat{\bbphi}$ being the linear combination of two feasible occupancy measure, we have
	    \begin{equation}
        	\sum_{s'\in\ccalS}\sum_{a\in\ccalA}\hat{\phi}(s',a)(\delta_s(s')-\gamma \mathbf{P}(s|s',a))=(1-\gamma)\rho(s)
    	\end{equation}
    	Similarly, define $\theta_\kappa^*$ as the optimal solution to the original conservative problem and define $\bbphi_\kappa^*$ as the corresponding optimal occupancy measure. By Lemma \ref{lem:duality}, we have $J_r^{\pi_{\theta^*}}\leq \frac{\left<\bbphi^*,\bbr\right>}{1-\gamma}$ and $J_r^{\pi_{\theta_\kappa^*}}\geq \frac{\left<\bbphi^*_{\bbkappa},\bbr\right>}{1-\gamma}-\frac{\eps_{bias2}}{(1-\gamma)^2}$. Then, we can bound the optimal objective between original problem and conservative problem
    	\begin{equation}\label{eq:conservative_diff}
	        \begin{aligned}
	        J_r^{\pi_{\theta^*}}-J_r^{\pi_{\theta_\kappa^*}}&\leq \frac{\eps_{bias2}}{(1-\gamma)^2}+ \frac{\left<\bbphi^*,\bbr\right>}{1-\gamma}-\frac{\left<\bbphi^*_{\bbkappa},\bbr\right>}{1-\gamma}\\
	        &\overset{(a)}\leq\frac{\eps_{bias2}}{(1-\gamma)^2}+\frac{\left<\bbphi^*,\bbr\right>-\left<\hat{\bbphi},\bbr\right>}{1-\gamma}\\ &=\frac{\eps_{bias2}}{(1-\gamma)^2}+\frac{1}{1-\gamma}\left<\bbphi^*-(1-\frac{\kappa}{\varphi})\bbphi^*-\frac{\kappa}{\varphi}\tilde{\bbphi},\bbr\right>\\
	        &=\frac{\eps_{bias2}}{(1-\gamma)^2}+\frac{1}{1-\gamma}\left<\frac{\kappa}{\varphi}\bbphi^*-\frac{\kappa}{\varphi}\tilde{\bbphi},\bbr\right>\\
	        &\overset{(b)}\leq\frac{\eps_{bias2}}{(1-\gamma)^2}+\frac{1}{1-\gamma} \left<\frac{\kappa}{\varphi}\bbphi^*,\bbr\right>\overset{(c)}\leq \frac{\eps_{bias2}}{(1-\gamma)^2}+\frac{\kappa}{(1-\gamma)\varphi}\\
	        \end{aligned}
	    \end{equation}
	
	{The first step (a) holds because $\bbphi^*_{\bbkappa}$ is the optimal solution of the conservative problem, which gives larger value function than any other feasible occupancy measure. We drop the negative term in the step (b) and the last step (c) is true because $\left<\bbphi^*,\bbr\right>\leq 1$ by the definition of reward.}
\end{proof}

\section{Proof of Theorem \ref{main_theorem}}

\subsection{Analysis of Objective}
    Recall the definition of $J_L(\pi_\theta,\bblambda)=J_r(\pi_\theta)+\eta\sum_{i\in[I]}(J_{g^i}(\pi_\theta)-\kappa)$, we have
	\begin{equation}
        \label{eq:JL_converge}
		\frac{1}{K}\sum_{k=0}^{K-1}\mathbf{E}\bigg(J_r(\pi^*_{\theta,\kappa})-J_r(\pi_\theta^k)\bigg)\leq \frac{\sqrt{\epsilon_{bias}}}{1-\gamma}+\epsilon_{K,N}-	\frac{1}{K}\sum_{k=0}^{K-1}\sum_{i\in[I]}\mathbf{E}\bigg[\lambda_i^k\bigg(J_{g^i}(\pi^*_{\theta,\kappa})-J_{g^i}(\pi_\theta^k)\bigg)\bigg]
	\end{equation}
	Thus, we  need to find a bound the last term in above equation.
	\begin{equation*}\label{eq:bound_lambdak}
		\begin{aligned}
		0\leq (\lambda_i^K)^2&=\sum_{k=0}^{K-1}\bigg((\lambda^{k+1}_i)^2-(\lambda_i^k)^2\bigg)\\
		&=\sum_{k=0}^{K-1}\bigg(\max\big(0,\lambda_i^k-\eta_2(\hat{J}_{g^i}(\pi_\theta^k)-\kappa)\big)^2-(\lambda_i^k)^2\bigg)\\
		&\leq\sum_{k=0}^{K-1}\bigg(\big(\lambda_i^k-\eta_2(\hat{J}_{g^i}(\pi_\theta^k)-\kappa)\big)^2-(\lambda_i^k)^2\bigg)\\
		&=2\eta_2\sum_{k=0}^{K-1}\lambda_i^k(\kappa- \hat{J}_{g^i}(\pi_\theta^k))+\eta_2^2\sum_{k=0}^{K-1}(\hat{J}_{g^i}(\pi_\theta^k)-\kappa)^2\\
		&\overset{(a)}\leq 2\eta_2\sum_{k=0}^{K-1}\lambda_i^k(J_{g^i}(\pi^*_{\theta,\kappa})- \hat{J}_{g^i}(\pi_\theta^k))+\eta_2^2\sum_{k=0}^{K-1}(\hat{J}_{g^i}(\pi_\theta^k)-\kappa)^2\\
		&\leq 2\eta_2\sum_{k=0}^{K-1}\lambda_i^k(J_{g^i}(\pi^*_{\theta,\kappa})- \hat{J}_{g^i}(\pi_\theta^k))+2\eta_2^2\sum_{k=0}^{K-1}(\hat{J}_{g^i}(\pi_\theta^k))^2+2\eta_2^2\kappa^2K\\
		&= 2\eta_2\sum_{k=0}^{K-1}\lambda_i^k(J_{g^i}(\pi^*_{\theta,\kappa})- J_{g^i}(\pi_\theta^k))+2\eta_2\sum_{k=0}^{K-1}\lambda_i^k(J_{g^i}(\pi^k_{\theta})- \hat{J}_{g^i}(\pi_\theta^k))+2\eta_2^2\sum_{k=0}^{K-1}(\hat{J}_{g^i}(\pi_\theta^k))^2+2\eta_2^2\kappa^2K
		\end{aligned}
	\end{equation*}
	Step (a) holds because $\pi^*_{\theta,\kappa}$ is a feasible policy for conservative problem.
	Rearranging items and taking expectation, we have
	\begin{equation}
	    -\frac{1}{K}\sum_{k=0}^{K-1}\mathbf{E}\bigg[\lambda_i^k(J_{g^i}(\pi^*_{\theta,\kappa})- J_{g^i}(\pi_\theta^k))\bigg]\leq \frac{1}{K}\sum_{k=0}^{K-1}\mathbf{E}\bigg[\lambda_i^k(J_{g^i}(\pi^k_{\theta})- \hat{J}_{g^i}(\pi_\theta^k))\bigg]+\frac{\eta_2}{K}\sum_{k=0}^{K-1}\mathbf{E}[\hat{J}_{g^i}(\pi_\theta^k)]^2+\eta_2\kappa^2
	\end{equation}
	Notice that $\lambda_i^k$ and $\hat{J}_{g^i}(\pi_\theta^k)$ are independent at time $k$ and thus $\hat{J}_{g^i}$ is the unbiased estimator for $J_{g^i}$. Thus,
	\begin{equation}
	    \begin{aligned}
	    -\frac{1}{K}\sum_{k=0}^{K-1}\mathbf{E}\bigg[\lambda_i^k(J_{g^i}(\pi^*_{\theta,\kappa})- J_{g^i}(\pi_\theta^k))\bigg]&\leq \frac{\eta_2}{K}\sum_{k=0}^{K-1}\mathbf{E}[\hat{J}_{g^i}(\pi_\theta^k)]^2+\eta_2\kappa^2\\
	    &\overset{(a)}\leq \frac{\eta_2}{2(1-\gamma)^2}\frac{K+1}{K}+\eta_2\kappa^2\\
	    &\overset{(b)}\leq \frac{\eta_2}{(1-\gamma)^2}\bigg(1+\frac{K+1}{2K}\bigg)\\
	    &\leq \frac{2\eta_2}{(1-\gamma)^2}
	    \end{aligned}
	\end{equation}
	where step (a) holds by \cite{Ding2020}[Appendix G] and step (b) is true because $\kappa\leq \frac{1}{1-\gamma}$. Combining with Eq. \eqref{eq:bound_JL_final_simplified} and recall $\eta_2=\frac{1}{\sqrt{K}}$, we have
	\begin{equation}
		\frac{1}{K}\sum_{k=0}^{K-1}\bigg(J_r(\pi^*_{\theta,\kappa})-J_r(\pi_\theta^k)\bigg)\leq \frac{\sqrt{\epsilon_{bias}}}{1-\gamma}+\epsilon_{K,N} + \frac{2}{\sqrt{K}(1-\gamma)^2}
	\end{equation}
    Combining the above equation with Eq. \eqref{eq:conservative_diff} in lemma \ref{lem:duality}, we have the bound for objective
    \begin{equation}\label{eq:bound_objective}
        \begin{aligned}
        \frac{1}{K}\sum_{k=0}^{K-1}\bigg(J_r(\pi^*_{\theta})-J_r(\pi_\theta^k)\bigg)&=\frac{1}{K}\sum_{k=0}^{K-1}\bigg(J_r(\pi^*_{\theta})-J_r(\pi^*_{\theta,\kappa})\bigg)+\frac{1}{K}\sum_{k=0}^{K-1}\bigg(J_r(\pi^*_{\theta,\kappa})-J_r(\pi_\theta^k)\bigg)\\
        &\leq \frac{\sqrt{\epsilon_{bias}}}{1-\gamma}+\frac{\eps_{bias2}}{(1-\gamma)^2}+\frac{\kappa}{(1-\gamma)\varphi}+\epsilon_{K,N} + \frac{2}{\sqrt{K}(1-\gamma)^2}
        \end{aligned}
    \end{equation}
    To get the final bound, we need the definition of $\kappa$, which is defined in the following section.
    
    \subsection{Analysis of Constraint}
    For any $\lambda\in [0,
    \Lambda]$, given the dual update in algorithm \ref{alg:spdgd}, we have
\begin{equation}
    \begin{aligned}
	    \vert\lambda_i^{k+1} - \lambda\vert^2 
		&\overset{(a)}{\leq} \bigg|\lambda_i^{k} - \eta_2(\hat{J}_{g^i}(\pi_\theta^k)-\kappa)   -\lambda\bigg|^2\\
		&=\big|\lambda_i^{k} -\lambda\big|^2 -2\eta_2(\hat{J}_{g^i}(\pi_\theta^k)-\kappa)\big(\lambda_i^{k}  -\lambda\big) +\eta_2^2(\hat{J}_{g^i}(\pi_\theta^k)-\kappa)^2
		\\
		&\leq\big|\lambda_{k} -\lambda\big|^2 -2\eta_2(\hat{J}_{g^i}(\pi_\theta^k)-\kappa)\big(\lambda_i^{k}  -\lambda\big)  + 2\eta_2^2(\hat{J}_{g^i}(\pi_\theta^k)^2+\kappa^2)
	\end{aligned}
\end{equation}
where $(a)$ is because of the non-expansiveness of projection $\mathcal{P}_\Lambda$. Averaging the above inequality over $k=1,\ldots,K$ yields

\begin{equation}
    \begin{aligned}
        0\leq\frac{1}{K}\vert\lambda_i^{K+1} - \lambda\vert^2 
        \leq\frac{1}{K}\vert{{\lambda_i^1  -\lambda}}\vert^2 -\frac{2\eta_2}{K}\sum_{k=1}^{K} (\hat{J}_{g^i}(\pi_\theta^k)-\kappa)\big(\lambda_i^{k}  -\lambda\big) +\frac{2\eta_2^2}{K}\sum_{k=1}^{K}(\hat{J}_{g^i}(\pi_\theta^k)^2+\kappa^2)
    \end{aligned}
\end{equation}
Taking expectations at both sides, notice that $E[\hat{J}_g^i(\pi_\theta^k)]=J_g^i(\pi_\theta^k)$ and $\lambda_i^k$ is independent of $\hat{J}_g^i(\pi_\theta^k)]$
\begin{equation}
    \label{eq.removelamda}
    \mathbf{E}\bigg[\frac{1}{K}\sum_{k=1}^{K} (J_{g^i}(\pi_\theta^k)-\kappa)\big(\lambda_i^{k}  -\lambda\big)\bigg] \leq \frac{1}{2\eta_2 K}\big\vert{{\lambda_i^1  -\lambda}}\big\vert^2 + \frac{2\eta_2}{(1-\gamma)^2}
\end{equation}	
Notice that $\lambda_i^k (J_{g^i}(\pi^*_{\theta,\kappa})-\kappa)\geq 0$, $\forall k$. Adding the above inequality to \eqref{eq:JL_converge} at both sides, recall $\eta_2=\frac{1}{\sqrt{K}}$ we have,
\begin{equation}
    \label{eq:bound_Jc1}
    \begin{aligned}
        \frac{1}{K}\sum_{k=0}^{K-1}\mathbf{E}\bigg(J_r(\pi^*_{\theta,\kappa})-J_r(\pi_\theta^k)\bigg)+\lambda\mathbf{E}\bigg[\frac{1}{K}\sum_{i\in I}\sum_{k=1}^{K}\bigg(\kappa-J_{g^i}(\pi_\theta^k)\bigg)\bigg]\leq \frac{\sqrt{\epsilon_{bias}}}{1-\gamma}+\epsilon_{K,N} + \frac{1}{2\sqrt{K}}\sum_{i\in I}\big\vert{{\lambda_i^1  -\lambda}}\big\vert^2 + \frac{2I}{\sqrt{K}(1-\gamma)^2}
    \end{aligned}
\end{equation}
Setting $\lambda=\Lambda=\frac{2}{(1-\gamma)\varphi}$ if $\mathbf{E}\bigg[\frac{1}{K}\sum_{i\in I}\sum_{k=1}^{K}\bigg(\kappa-J_{g^i}(\pi_\theta^k)\bigg)\bigg]\geq 0$ and $\lambda=0$ otherwise, we have,
\begin{equation}
    \label{eq:bound_Jc2}
        \frac{1}{K}\sum_{k=0}^{K-1}\mathbf{E}\bigg(J_r(\pi^*_{\theta,\kappa})-J_r(\pi_\theta^k)\bigg)+\frac{2}{\varphi}\mathbf{E}\bigg[\frac{1}{K}\sum_{i\in I}\sum_{k=1}^{K}\bigg(\kappa-J_{g^i}(\pi_\theta^k)\bigg)\bigg]_+\leq \frac{\sqrt{\epsilon_{bias}}}{1-\gamma}+\epsilon_{K,N} + \frac{2I}{(1-\gamma)^2\varphi^2\sqrt{K}} + \frac{2I}{\sqrt{K}(1-\gamma)^2}
\end{equation}
We define a new policy $\bar{\pi}$ which uniformly chooses the policy $\pi_{\theta_k}$ for $k\in[K]$. By the occupancy measure method, $J_g(\theta_k)$ is linear in terms of an occupancy measure induced by policy $\pi_{\theta_k}$. Therefore,
\begin{equation}\label{eq_avg_value}
    \frac{1}{K}\sum_{k=1}^{K}J_r(\pi_\theta^k)=J_r^{\bar{\pi}}\quad \frac{1}{K}\sum_{k=1}^{K}J_{g^i}(\pi_\theta^k)=J_{g^i}^{\bar{\pi}}
\end{equation}
Injecting the above relation to \eqref{eq:bound_Jc2}, we have
\begin{equation}
    \mathbf{E}\bigg[J_r(\pi^*_{\theta,\kappa})-J_r^{\bar\pi}\bigg]+\frac{2}{(1-\gamma)\varphi}\sum_{i\in I}\mathbf{E}\bigg[\kappa-J_{g^i}^{\bar\pi}\bigg]_+\leq \frac{\sqrt{\epsilon_{bias}}}{1-\gamma}+\epsilon_{K,N}  + \frac{2I}{(1-\gamma)^2\varphi^2\sqrt{K}} + \frac{2I}{\sqrt{K}(1-\gamma)^2}
\end{equation}
By Lemma \ref{lem.constraint}, we arrive at,
\begin{equation}
     \sum_{i\in I}\mathbf{E}\bigg[\kappa-J_{g^i}^{\bar\pi}\bigg]_+\leq \varphi\sqrt{\epsilon_{bias}}+ \varphi(1-\gamma)\epsilon_{K,N}  + \frac{2I}{(1-\gamma)\varphi\sqrt{K}} + \frac{2\varphi I}{\sqrt{K}(1-\gamma)}
\end{equation}
By Jensen's inequality, we have
\begin{equation}
    \bigg[\kappa-\mathbf{E}[J_{g^i}^{\bar\pi}]\bigg]_+\leq \mathbf{E}\bigg[\kappa-J_{g^i}^{\bar\pi}\bigg]_+\leq \sum_{i\in I}\mathbf{E}\bigg[\kappa-J_{g^i}^{\bar\pi}\bigg]_+
\end{equation}
If $\mathbf{E}[J_{g^i}^{\bar\pi}]<0$, then, we have
\begin{equation}
    \mathbf{E}[J_{g^i}^{\bar\pi}]\geq \kappa - \bigg[\varphi\sqrt{\epsilon_{bias}}+ \varphi(1-\gamma)\epsilon_{K,N}  + \frac{2I}{(1-\gamma)\varphi\sqrt{K}} + \frac{2\varphi I}{\sqrt{K}(1-\gamma)}\bigg]
\end{equation}
To ensure $\mathbf{E}[J_{g^i}^{\bar\pi}]\geq 0$, we just need to set
\begin{equation}
    \label{eq:kappa_def}
    \kappa=\varphi\sqrt{\epsilon_{bias}}+ \varphi(1-\gamma)\epsilon_{K,N}  + \frac{2I}{(1-\gamma)\varphi\sqrt{K}} + \frac{2\varphi I}{\sqrt{K}(1-\gamma)}
\end{equation}

\if 0
    The last inequality presents only the dominant terms of $\beta$ and $T$.

	To bound the constraint violation, we notice by the dual update
	\begin{equation}
		\lambda_i^{k+1}-\lambda_i^k\geq -\eta_2(\hat{J}_{g^i}(\pi_\theta^k)-\kappa)
	\end{equation}
	Summing $k$ form $0$ to $K-1$ and dividing by $K$
	\begin{equation}
		\frac{1}{K}\sum_{k=0}^{K-1}\hat{J}_{g^i}(\pi_\theta^k)\geq \kappa-\frac{1}{\eta_2K}\lambda_i^{K}
	\end{equation}
	Adding $\frac{1}{K}\sum_{k=0}^{K-1}J_{g^i}(\pi_\theta^k)$ on both sides, we have
	\begin{equation}
	    \frac{1}{K}\sum_{k=0}^{K-1}J_{g^i}(\pi_\theta^k)\geq \kappa-\frac{1}{\eta_2K}\lambda_i^{K}+\frac{1}{K}\sum_{k=0}^{K-1}\bigg(J_{g^i}(\pi_\theta^k)-\hat{J}_{g^i}(\pi_\theta^k)\bigg)
	\end{equation}
	Taking expectation and recalling $\hat{J}_{g^i}$ is an unbiased estimator of $J_{g^i}$, we have
	\begin{equation}
	    \mathbf{E}\bigg[\frac{1}{K}\sum_{k=0}^{K-1}J_{g^i}(\pi_\theta^k)\bigg]\geq \kappa-\frac{1}{\eta_2K}\mathbf{E}[\lambda_i^{K}]
	\end{equation}
	Combining with Eq. \eqref{eq:bound_lambdak}, we have
	\begin{equation}\label{eq:bound_Jg1}
	    \begin{aligned}
	    	\frac{1}{K}\sum_{k=0}^{K-1}J_{g^i}(\pi_\theta^k)&\geq \kappa- \frac{1}{\eta_2K}\sqrt{2\eta_2\sum_{k=0}^{K-1}\lambda_i^k(J_{g^i}(\pi^*_{\theta,\kappa})- J_{g^i}(\pi_\theta^k))+\frac{2\eta_2^2K}{(1-\gamma)^2}+2\eta_2^2\kappa^2K}\\
	    	&\geq \kappa- \frac{1}{\eta_2K}\sqrt{2\eta_2\sum_{k=0}^{K-1}\lambda_i^k(J_{g^i}(\pi^*_{\theta,\kappa})- J_{g^i}(\pi_\theta^k))+\frac{4\eta_2^2K}{(1-\gamma)^2}}
	    \end{aligned}
	\end{equation}
	From Eq. \eqref{eq:bound_JL}, we have
	\begin{equation}\label{eq:bound_Jg2}
		\frac{1}{K}\sum_{k=0}^{K-1}\lambda_i^k\bigg(J_{g^i}(\pi^*_{\theta,\kappa})-J_{g^i}(\pi_\theta^k)\bigg)\leq \frac{\sqrt{\epsilon_{bias}}}{1-\gamma}+\epsilon_{K,N} - 	\frac{1}{K}\sum_{k=0}^{K-1}\bigg(J_r(\pi^*_{\theta,\kappa})-J_r(\pi_\theta^k)\bigg)
	\end{equation}
	Notice that
	\begin{equation}\label{eq:bound_Jr}
		\begin{aligned}
		J_r(\pi^*_{\theta,\kappa})&=J_r(\pi^*)+(J_r(\pi^*_{\theta,\kappa})-J_r(\pi^*))\\
		&\geq J_r(\pi^*)-\frac{1}{1-\gamma}\\
		&=\max_\pi J_r(\pi)+\sum_{i\in[I]}\lambda_i^*(J_{g^i}(\pi)-\kappa)-\frac{1}{1-\gamma}\\
		&\geq  J_r(\pi_\theta^k)+\sum_{i\in[I]}\lambda_i^*(J_{g^i}(\pi_\theta^k)-\kappa)-\frac{1}{1-\gamma}\\
		&\geq  J_r(\pi_\theta^k)-\frac{2\sum_{i\in[I]}\lambda_i^*+1}{1-\gamma}
		\end{aligned}
	\end{equation}
	Combining Eq. \eqref{eq:bound_Jr} and \eqref{eq:bound_Jg2}, and by Assumption \ref{ass_bound_lambda}, we have
	\begin{equation}\label{eq:bound_Jg3}
		\frac{1}{K}\sum_{k=0}^{K-1}\lambda_i^k\bigg(J_{g^i}(\pi^*_{\theta,\kappa})-J_{g^i}(\pi_\theta^k)\bigg)\leq \frac{\sqrt{\epsilon_{bias}}}{1-\gamma}+\epsilon_{K,N}+\frac{2I\Lambda+1}{1-\gamma}
	\end{equation}
	Substituting Eq. \eqref{eq:bound_Jg3} back into Eq. \eqref{eq:bound_Jg1}, we have
	\begin{equation}
	    \begin{aligned}
	    	\frac{1}{K}\sum_{k=0}^{K-1}J_{g^i}(\pi_\theta^k)&\geq \kappa- \frac{1}{\eta_2K}\sqrt{2\eta_2K\bigg(\frac{\sqrt{\epsilon_{bias}}}{1-\gamma}+\epsilon_{K,N}+\frac{2I\Lambda+1}{1-\gamma}\bigg)+\frac{4\eta_2^2K}{(1-\gamma)^2}}
	    \end{aligned}
	\end{equation}
	To make constraint violation as zero, we set
	\begin{equation}
	    \begin{aligned}
	    \kappa &= \frac{1}{\eta_2K}\sqrt{2\eta_2K\bigg(\frac{\sqrt{\epsilon_{bias}}}{1-\gamma}+\epsilon_{K,N}+\frac{2I\Lambda+1}{1-\gamma}\bigg)+\frac{4\eta_2^2K}{(1-\gamma)^2}}\\
	    &=\sqrt{\frac{2}{\eta_2 K}\bigg(\frac{\sqrt{\epsilon_{bias}}}{1-\gamma}+\epsilon_{K,N}+\frac{2I\Lambda+1}{1-\gamma}\bigg)+\frac{4}{K(1-\gamma)^2}}
	    \end{aligned}
	\end{equation}
	
    take $\eta_2=\frac{\log(K)}{K}$, we have
    \begin{equation}
        \begin{aligned}
        \kappa &= \sqrt{\frac{2}{\log(K)}\bigg(\frac{\sqrt{\epsilon_{bias}}}{1-\gamma}+\epsilon_{K,N}+\frac{2I\Lambda+1}{1-\gamma}\bigg)+\frac{4}{K(1-\gamma)^2}}\\
        &\leq \sqrt{\frac{2}{\log(K)}\bigg(\frac{\sqrt{\epsilon_{bias}}}{1-\gamma}+\epsilon_{K,N}+\frac{2I\Lambda+1}{1-\gamma}\bigg)}+\frac{2}{\sqrt{K}(1-\gamma)}
        \end{aligned}
    \end{equation}

    Recalling $\eta_2=\frac{1}{\sqrt{K}}$(In fact, there is a mistake. We can't improve the final result to $\eps^4$ but we can improve to $\eps^5$ by selecting $\eta_2=\frac{1}{K^3}$ and $K=\eps^{-3}$, $N=\eps^{-2}$), we have
    \begin{equation}
        \begin{aligned}
        \kappa &= \sqrt{\frac{2}{\sqrt{K}}\bigg(\frac{\sqrt{\epsilon_{bias}}}{1-\gamma}+\epsilon_{K,N}+\frac{2I\Lambda+1}{1-\gamma}\bigg)+\frac{4}{K(1-\gamma)^2}}\\
        &\leq \sqrt{\frac{2}{\sqrt{K}}\bigg(\frac{\sqrt{\epsilon_{bias}}}{1-\gamma}+\epsilon_{K,N}+\frac{2I\Lambda+1}{1-\gamma}\bigg)}+\frac{2}{\sqrt{K}(1-\gamma)}
        \end{aligned}
    \end{equation}
    Further, recall the definition of $\epsilon_{K,N}$ in Eq. \eqref{eq:defintion_epsKN} 
	\begin{equation}
        \eps_{K,N}=\ccalO\bigg(\frac{1}{(1-\gamma)^3K}\bigg)+\ccalO\bigg(\frac{I^2\Lambda^2}{ (1-\gamma)^2 N}\bigg)+\ccalO\bigg(\frac{I\Lambda}{(1-\gamma)\sqrt{N}}\bigg)+\ccalO\bigg(\frac{I\Lambda}{K(1-\gamma)}\bigg)
    \end{equation}
    Finally, $\kappa$ can be simplified as
    \begin{equation}\label{eq:kappa_final}
        \begin{aligned}
        \kappa &\leq \ccalO\bigg(\frac{1}{K^{3/4}(1-\gamma)^{3/2}}\bigg) + 
        \ccalO\bigg(\frac{I\Lambda}{(1-\gamma)K^{1/4}\sqrt{ N}}\bigg) + 
        \ccalO\bigg(\frac{\sqrt{I\Lambda}}{(KN)^{1/4}\sqrt{1-\gamma}}\bigg) + 
        \ccalO\bigg(\frac{\sqrt{I\Lambda}}{K^{3/4}\sqrt{1-\gamma}}\bigg)\\
        &+\ccalO\bigg(\frac{I\Lambda}{\sqrt{K}(1-\gamma)}\bigg)+\ccalO\bigg(\frac{\eps_{bias}^{1/4}}{K^{1/4}\sqrt{1-\gamma}}\bigg)
        \end{aligned}
    \end{equation}
\fi 
    
    \subsection{Final Bound on Objective}
    Here we restate Eq. \eqref{eq:bound_objective} as
    \begin{equation}
        \frac{1}{K}\sum_{k=0}^{K-1}\bigg(J_r(\pi^*_{\theta})-J_r(\pi_\theta^k)\bigg)
        \leq \frac{\sqrt{\epsilon_{bias}}}{1-\gamma}+\frac{\eps_{bias2}}{(1-\gamma)^2}+\frac{\kappa}{(1-\gamma)\varphi}+\epsilon_{K,N} + \frac{2}{\sqrt{K}(1-\gamma)^2}
    \end{equation}
    Substituting Eq. \eqref{eq:kappa_def} into the above equation 
    \begin{equation}
        \frac{1}{K}\sum_{k=0}^{K-1}\bigg(J_r(\pi^*_{\theta})-J_r(\pi_\theta^k)\bigg)
        \leq \frac{2\sqrt{\epsilon_{bias}}}{1-\gamma}+\frac{\eps_{bias2}}{(1-\gamma)^2}+ \frac{2I}{(1-\gamma)^2\varphi^2\sqrt{K}}+2\epsilon_{K,N} + \frac{2+2I}{\sqrt{K}(1-\gamma)^2}
    \end{equation}
    Recall that
	\begin{equation}
         \eps_{K,N}=\ccalO\bigg(\frac{1}{(1-\gamma)^3K}\bigg)+\ccalO\bigg(\frac{I^2\Lambda^2}{ (1-\gamma)^2 N}\bigg)+\ccalO\bigg(\frac{I\Lambda}{(1-\gamma)\sqrt{N}}\bigg)+\ccalO\bigg(\frac{I\Lambda}{K(1-\gamma)}\bigg)+\ccalO\bigg(\frac{I}{\sqrt{K}(1-\gamma)^2}\bigg)
    \end{equation}
    Recall the definition $\Lambda=\frac{2}{(1-\gamma)\varphi}, $we have the bound for objective as
    \begin{equation}
        \begin{aligned}
        \frac{1}{K}\sum_{k=0}^{K-1}\bigg(J_r(\pi^*_{\theta})-J_r(\pi_\theta^k)\bigg)
        &\leq \ccalO\bigg(\frac{\sqrt{\epsilon_{bias}}}{1-\gamma}\bigg)+\ccalO\bigg(\frac{\eps_{bias2}}{(1-\gamma)^2}\bigg)+\ccalO\bigg(\frac{1}{(1-\gamma)^3K}\bigg)+\ccalO\bigg(\frac{I^2\Lambda^2}{ (1-\gamma)^2 N}\bigg)\\
        &+\ccalO\bigg(\frac{I\Lambda}{(1-\gamma)\sqrt{N}}\bigg)+\ccalO\bigg(\frac{I}{\varphi^2\sqrt{K}(1-\gamma)^2}\bigg)
        \end{aligned}
    \end{equation}
    Taking $K=\ccalO\bigg(\frac{I^2}{\varphi^2(1-\gamma)^4\eps^2}\bigg)$ and $N=\ccalO\bigg(\frac{I^2\Lambda^2}{(1-\gamma)^2\eps^2}\bigg)$, we have
    \begin{equation}
        \frac{1}{K}\sum_{k=0}^{K-1}\bigg(J_r(\pi^*_{\theta})-J_r(\pi_\theta^k)\bigg)\leq \ccalO\bigg(\frac{\sqrt{\epsilon_{bias}}}{1-\gamma}\bigg)+\ccalO\bigg(\frac{\eps_{bias2}}{(1-\gamma)^2}\bigg)+\ccalO\bigg(\epsilon\bigg)
    \end{equation}

\section{Evaluation Details}\label{app:sim}
For the evaluations, we consider a random CMDP, where the number of elements of state space and action space are $|\ccalS|=10$ and $|\ccalA|=5$, respectively. Each entry of the transition matrix $P(s'|s,a)$ is generated uniformly at random in $[0,1]$, followed by normalization. The reward function is generated by uniform distribution $r(s,a)\sim U(0,1)$. Only 1 constraint function is considered and generated by $g(s,a)\sim U(-0.71, 0.29)$. The initial state distribution is set to uniform and the discount factor is $\gamma=0.8$. For the general parameterization, we use a feature map $\Phi^{|\ccalS||\ccalA|\times d}$ with dimension $d=35$ along with the softmax parametrization. Thus, the policy $\pi(a|s)$ can be expressed as $\pi(a|s)=\frac{\exp(\left<\Phi_{sa},\theta\right>)}{\sum_{a'\in\ccalA}\exp(\left<\Phi_{sa'},\theta\right>)}$, where $\Phi_{sa}$ is the corresponding row in matrix $\Phi$ to state and action pair $(s,a)$. For each SGD procedure, we use $N=100$ number of samples. The learning rate for $\theta$ and $\lambda$ are both set to 0.1\footnote{The simulation code is extended from the code in \cite{Ding2020}.}. To evaluate the average performance of the algorithm, we run 40 cases and for each case, we run the algorithm for $K=7000$ iterations. We evaluate and compare the average performance and standard deviation between the proposed algorithm with $\kappa=1$ and the NPG-PD algorithm \cite{Ding2020}, which doesn't consider the zero constraint violation case (equivalently $\kappa=0$) in  Figure \ref{fig:compare2}.

\begin{figure}[htbp]
    \centering
    \subfigure[Comparison for the convergence of objective]{
        \includegraphics[width=2.5in]{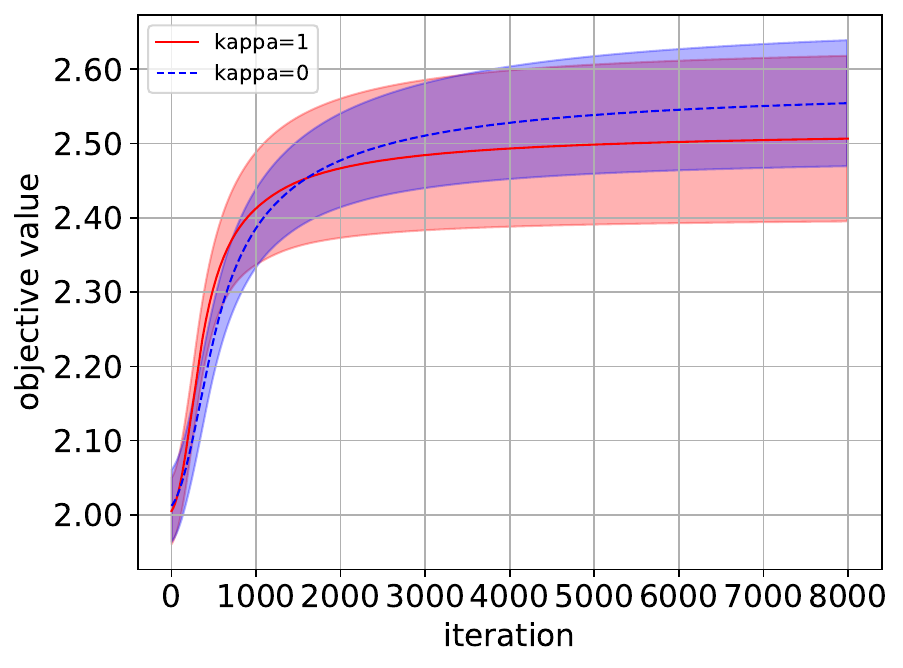}
    }
    \subfigure[Comparison for the constriant violation]{
	\includegraphics[width=2.5in]{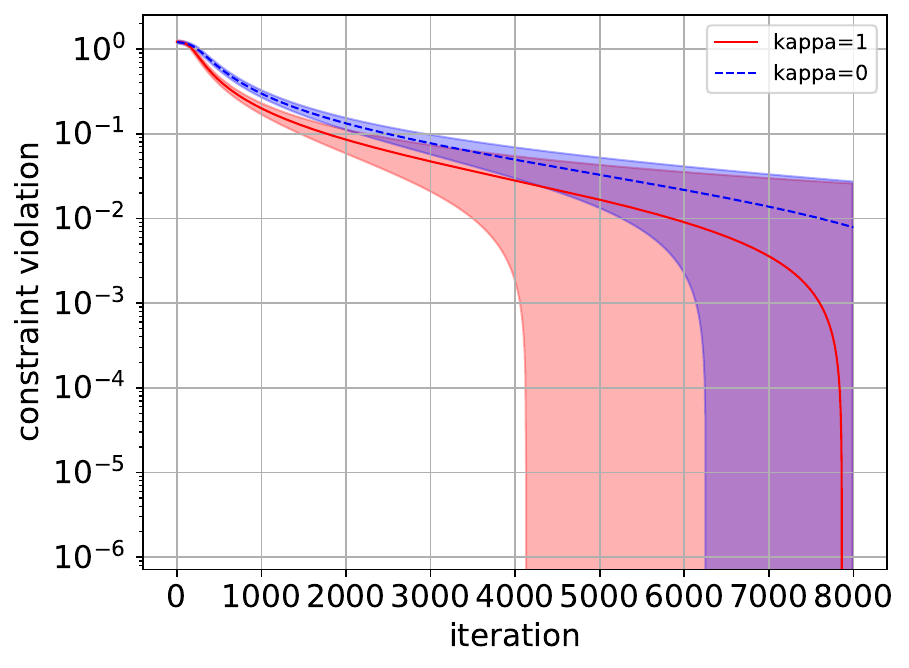}
    }
    \caption{Comparison of objective and constraint violation between the proposed algorithm $\kappa=1$ and NPD-PD \cite{Ding2020} $\kappa=0$. For the constraint violation figure, we use the log axis to make zero constraint violation more obvious.}
    \label{fig:compare2}
\end{figure}

In Figure \ref{fig:compare2}, we see the average performance as the line  and the shaded region is the standard deviation. We find that the proposed algorithm has the objective value that is close to the NPD-PD algorithm, which matches the result in Theorem \ref{main_theorem}. For the constraint violation, the proposed algorithm achieves zero constraint violation much faster than the NPG-PD algorithm, which again validates Theorem \ref{main_theorem}.

\end{document}